\pdfoutput=1
\documentclass{article}

\usepackage{microtype}
\usepackage{graphicx}
\usepackage{subcaption}
\usepackage{booktabs} 

\usepackage{hyperref}

\usepackage[preprint]{icml2026}

\usepackage{amsmath}
\usepackage{amssymb}
\usepackage{mathtools}
\usepackage{amsthm}

\usepackage[capitalize,noabbrev]{cleveref}

\usepackage{hyperref}
\usepackage{url}

\usepackage{tikz}
\usepackage{amsmath}
\usepackage{algorithm}
\usepackage{algorithmic}
\usepackage{microtype}
\usepackage{wrapfig}
\usepackage{booktabs} 

\usepackage{minitoc}
\usepackage{times}
\usepackage{soul}
\usepackage{url}
\usepackage{booktabs}
\usepackage{algorithm}
\usepackage{algorithmic}
\usepackage{multirow}
\usepackage{url}
\usepackage{amsthm}
\usepackage{mathdots}
\usepackage{amssymb}
\usepackage{amsmath}
\usepackage{amsfonts}
\usepackage{color}
\usepackage{hyperref}

\newcommand{\qiu}{\color{red}}

\newcommand{\DEL}[1]{\iffalse #1 \fi}
\newcommand{\rd}{\color{BrickRed}}

\newcommand{\bl}{\color{RoyalBlue}}

\definecolor{BrickRed}{rgb}{0.8, 0.25, 0.33}
\definecolor{RoyalBlue}{rgb}{0.25, 0.41, 0.88}

\usepackage{filecontents}

\theoremstyle{plain}
\newtheorem{theorem}{Theorem}[section]
\newtheorem{proposition}[theorem]{Proposition}
\newtheorem{lemma}[theorem]{Lemma}

\theoremstyle{definition}
\newtheorem{definition}[theorem]{Definition}

\theoremstyle{remark}

\newtheorem{reproposition}{\bf{Proposition}}

\usepackage[textsize=tiny]{todonotes}

\icmltitlerunning{Metric-Normalized Posterior Leakage (mPL): Attacker-Aligned Privacy for Joint Consumption}

\begin{document}

\twocolumn[
  \icmltitle{Metric-Normalized Posterior Leakage (mPL): Attacker-Aligned Privacy for Joint Consumption}



  \icmlsetsymbol{equal}{*}

\begin{icmlauthorlist}
  \icmlauthor{Gaoyi Chen}{cse}
  \icmlauthor{Minghao Li}{cse}
  \icmlauthor{Weishi Shi}{cse}
  \icmlauthor{Yan Huang}{cse}
  \icmlauthor{Yusheng Wei}{ee}
  \icmlauthor{Sourabh Yadav}{cse}
  \icmlauthor{Chenxi Qiu}{cse}
\end{icmlauthorlist}

\icmlaffiliation{cse}{Department of Computer Science and Engineering, University of North Texas, Denton, TX, USA}
\icmlaffiliation{ee}{Department of Electrical Engineering, University of North Texas, Denton, TX, USA}

\icmlcorrespondingauthor{Chenxi Qiu}{chenxi.qiu@unt.edu}

  \icmlkeywords{Machine Learning, ICML}

  \vskip 0.3in
]



\printAffiliationsAndNotice{}  

\begin{abstract}
\emph{Metric differential privacy (mDP)} strengthens \emph{local differential privacy (LDP)} by scaling noise to semantic distance, but many machine learning (ML) systems are consumed under joint observation, where model-agnostic, per-record guarantees can miss leakage from evidence aggregation. We introduce \emph{metric-normalized posterior leakage (mPL)}, an attacker-aligned, distance-calibrated measure of posterior-odds shift induced by releases, and show that for single or independent releases, uniformly bounding mPL is equivalent to mDP. Under joint observation, however, satisfying mDP may still leave mPL high because learned aggregators compound evidence across correlated items. To make control practical, we formalize \emph{probabilistically bounded mPL (PBmPL)}, which limits how often mPL may exceed a target budget, and we operationalize it via \emph{Adaptive mPL (AmPL)}, a trust-and-verify framework that perturbs, audits with a learned attacker, and adapts parameters (with optional Bayesian remapping) to balance privacy and utility. In a word-embedding case study, neural adversaries violate mPL under joint consumption despite per-record mDP perturbations, whereas AmPL substantially lowers the frequency of such violations with low utility loss, indicating PBmPL as a practical, certifiable protection for joint-consumption settings.
\end{abstract}

\vspace{-0.25in}
\section{Introduction}
\vspace{-0.00in}
\label{sec:intro}
\emph{Local differential privacy (LDP)}~\cite{DuchiJW2013FOCS} enforces a uniform indistinguishability requirement between any two inputs, regardless of how similar those inputs are. This metric-agnostic stance is often a poor fit for modern continuous domains and embedding spaces~\cite{ImolaUAI2022}, where distances encode semantics: nearby points are naturally similar, while far-apart points may correspond to qualitatively different meanings. As a result, uniform protection can lead to an unfavorable utility--privacy trade-off, either injecting excessive noise or failing to adequately protect fine-grained neighborhoods. 

\emph{Metric-aware} privacy notions address this mismatch by incorporating geometry into the guarantee. In particular, \emph{metric differential privacy (mDP)}~\cite{Chatzikokolakis-PETS2013} (also called \emph{Lipschitz privacy}~\cite{koufogiannis2015}) requires that the indistinguishability between secrets degrades smoothly with their metric distance, a formulation that is especially natural for locations and embeddings. In the location-privacy literature, \emph{geo-indistinguishability}~\cite{Andres-CCS2013} popularized practical mechanisms (e.g., planar Laplace noise) and motivated optimization-based obfuscation tailored to road networks, points of interest, and mobility priors. Subsequent work~\cite{Bordenabe-CCS2014,Liu-CCS2025} studied optimal mechanism design under metric constraints, related utility to transport/Wasserstein-style costs, and developed composition and group-privacy analyses in metric spaces. Complementing these defenses, recent studies have analyzed inference risks under correlated releases and proposed context-aware perturbation methods for metric-aware guarantees~\cite{Qiu-SIGSPATIAL2022,Yadav-SIGSPATIAL2024}.

Although mDP incorporates geometry, it is typically formulated as a bound on output-distribution ratios between pairs of inputs and is often analyzed on a per-release basis. In contrast, practical adversaries reason about \emph{posterior beliefs} and routinely aggregate \emph{correlated} observations, for example, multiple perturbed locations along a trajectory~\cite{Yadav-SIGSPATIAL2024} or multiple perturbed tokens within a sentence~\cite{Staab-ICLR2024}. These settings suggest that per-release guarantees alone may not faithfully reflect inferential risk under joint consumption. Accordingly, we shift the evaluation target from isolated output ratios to the extent to which an attacker’s beliefs \emph{sharpen} after jointly observing multiple, potentially dependent releases, particularly in the presence of learned inference models that can effectively combine such evidence.  


\vspace{-0.00in}
\subsection*{Our Contributions}
\vspace{-0.00in}
(\textbf{1}) \textbf{From per-release ratio bounds to metric-normalized inferential leakage.}
We rethink mDP for \emph{joint-consumption} settings by introducing \emph{metric-normalized posterior leakage (mPL)}, a geometry-aware, attacker-aligned criterion that quantifies the metric-calibrated shift in posterior odds between candidate secrets after observing releases (\emph{Definition~\ref{def:mPL}}).
We define \emph{bounded mPL} by requiring mPL to lie within a budget $\epsilon$ (\emph{Definition~\ref{def:BmPL}}) and establish \emph{post-processing invariance} (\emph{Proposition~\ref{prop:post-processing}}).
Importantly, for a single release and independent compositions, we prove that uniformly bounding mPL is equivalent to $\epsilon$-mDP (\emph{Propositions~\ref{prop:single}--\ref{prop:independence}}), showing that mPL \emph{recovers} mDP in the regime where per-release analysis applies.

(\textbf{2}) \textbf{Joint-consumption leakage under dependence with learned evidence aggregation.}
\emph{However, the connection between mDP and mPL breaks under dependent joint consumption}, i.e., even when a mechanism satisfies \emph{per-record} mDP, an attacker can still achieve substantial belief sharpening (large mPL) by aggregating correlated releases (sets/sequences).
We demonstrate this gap both with an explicit joint secret model and with model-free learned aggregators, instantiating the attacker with an RNN, an LSTM, and a Transformer~\cite{Vaswani_NIPS2017}, which reveal non-trivial mPL violations for standard mDP mechanisms such as the exponential mechanism~\cite{chatzikokolakis2015constructing}.
This is related in spirit to prior \emph{posterior-based} privacy frameworks (e.g., \emph{Pufferfish}~\cite{Kifer-PODS2012}), but these works are typically studied in the \emph{central} setting and rely on an \emph{explicit} class of data-generating distributions, whereas mPL is \emph{local}, \emph{metric-normalized}, and designed to be \emph{audited} under implicit dependencies.

(\textbf{3}) \textbf{Auditable tail-risk control and attacker-in-the-loop calibration.}
To move beyond worst-case control under dependence, we introduce \emph{probabilistically bounded mPL (PBmPL)} (\emph{Definition~\ref{def:PBmPL}}), which bounds the \emph{probability} that mPL exceeds a prescribed privacy budget, together with a sampling-based auditing procedure with confidence guarantees.
Building on this audit, we develop \emph{Adaptive mPL (AmPL)}, a \emph{trust-and-verify} framework that (i) applies level-wise perturbation, (ii) audits mPL/PBmPL using learned posterior estimators trained on joint observations, and (iii) adaptively updates perturbation strengths using attacker feedback to balance privacy and utility. Optionally, AmPL applies \emph{Bayesian remapping}~\cite{Chatzikokolakis-PETS2017} as post-processing to recover utility without weakening privacy, by post-processing invariance (\emph{Proposition~\ref{prop:post-processing}}).




(\textbf{4}) \textbf{Case study: joint-consumption privacy for text embeddings.}
We instantiate our framework on text embeddings, perturbing \emph{personally identifiable information (PII)} and \emph{potentially identifying information (PoII)} under joint consumption.
Empirically, we find that standard per-record mDP mechanisms (e.g., the exponential mechanism) can still incur substantial mPL violations under neural aggregation. For example, a Transformer-based inference attacker yields mPL $\approx 0.33$ despite the mechanism satisfying mDP.
In contrast, AmPL reduces leakage to $\approx 0.12$ while maintaining comparable utility, demonstrating that attacker-in-the-loop calibration can effectively control posterior leakage and revealing the privacy--utility trade-off in a practical embedding setting.


\DEL{
\begin{itemize}
  \item \textbf{Contribution 1: Adversarial models under joint observation.}
  We instantiate realistic \emph{learned} adversaries that operate on \emph{joint} collections of perturbed records, capturing dependencies that per-record analyses miss. In the text setting, we employ deep neural networks (RNNs, LSTMs, Transformers) that approximate posterior information directly from data by exploiting semantic and syntactic patterns across tokens. We show that even when individual perturbed items satisfy traditional mDP-style guarantees, joint inference by such models can lead to significant leakage. The threat model naturally extends to other modalities with temporal, spatial, or relational dependencies (e.g., trajectories, graphs, embeddings).

  \item \textbf{Contribution 2: Adversarial feedback-driven data perturbation.}
  We move beyond static mechanisms and develop an attacker-in-the-loop framework that actively counters DNN-based inference. A learned adversary is integrated into the perturbation system to simulate realistic attacks during training; the system continuously monitors the adversary’s success at reconstructing sensitive information and adaptively adjusts perturbation parameters to reduce inferred posterior leakage. To handle distribution shift in practice (e.g., vocabulary drift or changing usage), we introduce a sliding-window protocol that partitions incoming data into overlapping training/validation/testing slices, enabling continual retraining of the adversary and timely updates to the perturbation strategy.

  \item \textbf{Posterior-leakage formulation.}
  We center \emph{posterior leakage (mPL)} as the privacy measure, show its equivalence to classical mDP for individual observations, extend it to joint observations, and establish post-processing invariance; we clarify when individual guarantees lift to joint ones (e.g., under independence) and when dependencies create gaps.

  \item \textbf{Probabilistically bounded posterior leakage (PBmPL).}
  We formalize a high-probability privacy notion that bounds how often mPL may exceed a target budget under joint observations, subsuming strict guarantees as a special case and exposing a tunable violation rate aligned with practical deployments and utility constraints.

  \item \textbf{Sampling-based certification.}
  We present a practical procedure that estimates mPL violation rates from finite samples and converts them into high-confidence, distribution-free certificates using standard concentration arguments, avoiding exhaustive enumeration of secrets and outputs.

  \item \textbf{Mechanism-agnostic enforcement.}
  We outline a general enforcement strategy that applies across metrics and common randomized mechanisms (e.g., exponential-mechanism variants, geo-obfuscation, embedding remapping). It can be combined with the adversarial feedback loop to meet target privacy profiles while preserving task utility.
\end{itemize}}



\vspace{-0.10in}
\section{Metric-Normalized Posterior Leakage}
\vspace{-0.05in}

We first introduce the setting and mDP (\S\ref{subsec:preliminary}) and define mPL and characterize its properties, including equivalence to mDP for single/independent releases (\S\ref{subsec:mPL}, \S\ref{subsec:threat:single}). We then discuss joint consumption with correlated secrets, where per-record mDP may fail to control mPL (\S\ref{subsec:jointOb}).

\vspace{-0.07in}
\subsection{Preliminaries}
\label{subsec:preliminary}
\vspace{-0.07in}
We study \emph{local perturbation} mechanisms that randomize each record before release. 
Formally, a mechanism $\mathcal{M}$ is a randomized mapping $\mathcal{M}:\mathcal{X}\rightarrow\mathcal{Y}$, where $\mathcal{X}$ is the domain of secret records and $\mathcal{Y}$ is the domain of released (perturbed) records. 
The semantic similarity between secrets is captured by a distance function $d:\mathcal{X}\times\mathcal{X}\rightarrow\mathbb{R}_{\ge 0}$; we write $d_{x_i,x_j}$ for the distance between any $x_i,x_j\in\mathcal{X}$. 
We consider \emph{joint consumption} of $L$ secret records, represented as $\mathbf{x}=(x^{1},\ldots,x^{L})\in\mathcal{X}^L$. 
For each $\ell\in[L]$, let $X^{\ell}$ and $Y^{\ell}$ denote the random variables corresponding to the $\ell$-th secret record and its released output, respectively. 

\begin{definition}[mDP]
Let $(\mathcal{X},d)$ be a metric secret space and let $\mathcal{M}$ be a perturbation mechanism with input space $\mathcal{X}$ and output space $\mathcal{Y}$. 
We say that $\mathcal{M}$ satisfies $(\epsilon,d)$-\emph{mDP} if, 
\vspace{-0.05in}
\begin{equation}
\label{eq:mDP}
\small \displaystyle \sup_{x_i\ne x_j}\sup_{y\in\mathcal{Y}} \ln \frac{\Pr[\mathcal{M}(x_i) = y]}{\Pr[\mathcal{M}(x_j) = y]}\leq \epsilon d_{x_i,x_j}.
\vspace{-0.05in}
\end{equation}
where $\epsilon$ denotes the \emph{privacy budget}.
\end{definition}
\vspace{-0.1in}
Intuitively, mDP requires that small changes in input $x$ induce only bounded changes in the law of $\mathcal{M}(x)$, yielding privacy calibrated to the metric $d$. A smaller $\epsilon$ implies a tighter bound, and hence stronger privacy, so that less can be inferred about $x$ from observing $\mathcal{M}(x)$.

Following \cite{Wang-WWW2017, Liu-CCS2025, ImolaUAI2022}, we consider a discrete perturbation space $\mathcal{Y} = \{y_1, \dots, y_K\}$. To facilitate analysis, we represent $\mathcal{M}$ as a deterministic function $\tilde{\mathcal{M}}$ defined by $
\mathcal{M}(x) \equiv \tilde{\mathcal{M}}(x, Z)$, where $Z \sim \mathrm{Uniform}(0,1)$ is an auxiliary random variable that captures the randomness of $\mathcal{M}$. Specifically, we define cumulative sums 
\vspace{-0.1in}
\normalsize
\begin{equation}
\small F_0(x) = 0,~F_k(x) = \sum_{v=1}^k \Pr[\mathcal{M}(x) = y_v], ~k = 1, \dots, K. 
\end{equation}
\normalsize
Then $\tilde{\mathcal{M}}$ is given by 
\vspace{-0.05in}
\begin{equation}
\small 
\textstyle \tilde{\mathcal{M}}(x, Z) = \sum_{k=1}^{K} y_k \, \mathbf{1}_{[F_{k-1}(x), \, F_k(x))}(Z), 
\vspace{-0.05in}
\end{equation} 
where $\mathbf{1}_{[a,b)}(Z)$ is the indicator function, equal to $1$ if $Z \in [a,b)$ and $0$ otherwise.  

In the following, we use both notations $\mathcal{M}$ and $\tilde{\mathcal{M}}$: $\mathcal{M}$ for simplicity of exposition, and $\tilde{\mathcal{M}}$ in formal arguments (e.g., the proof of Proposition~\ref{prop:independence}).

\noindent \textbf{Threat model.} We adopt standard assumptions~\cite{Liu-CCS2025}: the server is \emph{honest-but-curious}, follows the protocol, and attempts inference. The attacker is prior-informed, knows the mechanism $\mathcal{M}$ (and its parameters), has auxiliary data from the same population to estimate priors/train a posterior estimator, and can passively aggregate one or more correlated noisy releases to infer $x_\ell$ from a candidate set $\mathcal{X}_\ell$. Given at least a single perturbed record $y$ and the mechanism $\mathcal{M}$, the server can infer the posterior distribution of $X$ using Bayes' rule \cite{Yu-NDSS2017}, 
\normalsize

\vspace{-0.25in}
\small
\begin{eqnarray}
\label{eq:posterior}
\small && \Pr(X = x \mid \mathcal{M}(X) = y) \\ 
&=& \frac{\Pr(\mathcal{M}(X) = y \mid X = x) \Pr(X = x)}{\sum_{x' \in \mathcal{X}} \Pr(\mathcal{M}(X) = y \mid X = x') \Pr(X = x')}.
\end{eqnarray}
\normalsize
This adversarial model reflects realistic deployments in which the server is run by an organization with incentives to collect and analyze user data and thus cannot be fully trusted from a privacy perspective. In contrast, users are assumed to be honest and to faithfully apply the prescribed perturbation protocol before transmitting their data. We do not consider collusion among users, or between users and the server, since such settings are typically outside the standard scope of LDP/mDP-style guarantees and require different threat models and defenses~\cite{To-TMC2017}.


\textbf{Joint observation.} In this paper, we consider attackers who observe \emph{multiple} perturbed releases. Given a user's secret sequence $\mathbf{x} = (x^{(1)},\ldots,x^{(L)})$, a randomized mechanism $\mathcal{M}$ is applied \emph{independently} to each component, producing 
$\mathbf{y} = (y^{(1)},\ldots,y^{(L)})$ with $y^{(\ell)} = \mathcal{M}\left(x^{(\ell)}\right)$ for $\ell \in [L]$.
The adversary observes the joint output $\mathbf{y}$ and aims to infer the original secrets $\mathbf{x}$. 
Importantly, while the perturbations are applied independently across $\ell$, the secrets $\{x^{(\ell)}\}_{\ell=1}^L$ may be statistically dependent.
Then, the adversary estimates the posterior distribution of each secret record $X_{\ell}$ conditioned on the full perturbed sequence $\mathbf{y}$ as:
\vspace{-0.00in}
\begin{equation}
\label{eq:posterior_new}
\small
\Pr\left[X_{\ell} = x \mid \left\{\mathcal{M}(X_1), \ldots, \mathcal{M}(X_L)\right\} = \mathbf{y}\right], \quad x \in \mathcal{X}.
\vspace{-0.05in}
\end{equation}

\vspace{-0.10in}
\subsection{mPL and Its Post-Processing Property}
\label{subsec:mPL}
\vspace{-0.05in}
To quantify how much the joint observation $\mathbf{y}$ reveals about the original input $\mathbf{x}$, we define \emph{mPL} as the change in relative likelihood (posterior odds) between two candidate records $x_i$ and $x_j$ from prior to posterior after observing $\mathbf{y}$.
\begin{definition}
[mPL] 
\label{def:mPL}
\emph{mPL} between a pair of records $x_i, x_j \in \mathcal{X}$ given the joint observation $\mathbf{y} = (y^1, \ldots, y^L)$ is defined as
\normalsize

\vspace{-0.25in}
\small
\begin{eqnarray}
\label{eq:mPL_new}
\small && \mathrm{mPL}_{\mathcal{M}}(x_i, x_j, \mathbf{y}) \\ \nonumber
&=& \frac{1}{d_{x_i, x_j}}\left|\small \ln{\rd\frac{\Pr(X_{\ell} = x_i|\left\{\mathcal{M}(X_1), \ldots, \mathcal{M}(X_L)\right\}= \mathbf{y})}{\Pr(X_{\ell} = x_j|\left\{\mathcal{M}(X_1), \ldots, \mathcal{M}(X_L)\right\}= \mathbf{y})}} \right. \\
&&\left.- \ln \bl{\frac{\Pr(X_{\ell} = x_i)}{\Pr(X_{\ell} = x_j)}}\right|.
\end{eqnarray}
\normalsize
\vspace{-0.15in}
\end{definition}
Here, the prior ratio $\bl \frac{\Pr(X_\ell = x_i)}{\Pr(X_\ell = x_j)}$ reflects the likelihood of $X_\ell$ being $x_i$ versus $x_j$ before any observation, while the posterior ratio $\rd \frac{\Pr(X_{\ell} = x_i\mid\left\{\mathcal{M}(X_1), \ldots, \mathcal{M}(X_L)\right\}= \mathbf{y})}{\Pr(X_{\ell} = x_j\mid\left\{\mathcal{M}(X_1), \ldots, \mathcal{M}(X_L)\right\}= \mathbf{y})}$ captures the updated belief after observing $\mathbf{y}$. 
Fig.~\ref{fig:mPL_def} visualizes this belief update: the attacker starts with a prior distribution $\Pr(X_\ell=x_i)$ over candidate secrets, and after observing the released outputs $\mathbf{y}=\{\mathcal{M}(X_1),\ldots,\mathcal{M}(X_L)\}$, updates to a posterior distribution $\Pr(X_\ell=x_i\mid \mathbf{y})$. The shift from the prior curve to the posterior curve (shaded region) illustrates how the observation concentrates probability mass on some candidates and reduces it on others.
\emph{The posterior leakage $\mathrm{mPL}_{\mathcal{M}}(x_i, x_j, \mathbf{y})$ thus measures the change in these relative beliefs, normalized by the record distance $d_{x_i, x_j}$.} 
A smaller leakage value indicates that the perturbation mechanism $\mathcal{M}$ reveals less information, thereby offering stronger privacy protection.

\begin{figure}[t]
\centering
\begin{minipage}{0.47\textwidth}
\vspace{0.10in}
\includegraphics[width=0.96\textwidth]{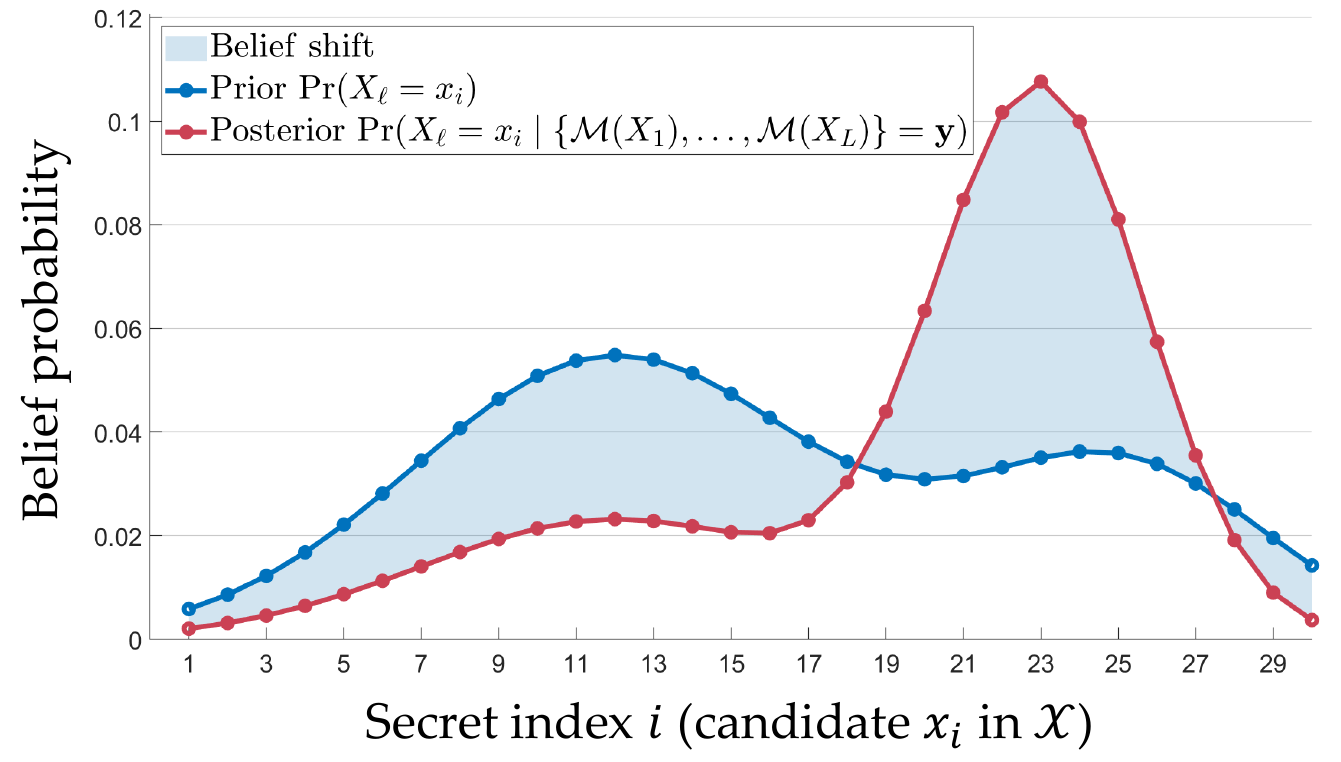}
\vspace{-0.05in}
\end{minipage}
\caption{Attacker belief update: prior vs. posterior distribution.}
\label{fig:mPL_def}
\vspace{-0.35in}
\end{figure}

Notably, the key distinction between our posterior inference model (Eq.~\eqref{eq:posterior_new}) and the posterior-based formulation in~\cite{Kifer-PODS2012} (Eq.~\eqref{eq:posterior}) is the threat model and how dependencies are handled: our attacker performs \emph{local}, \emph{metric-normalized} inference for each $X_{\ell}$ under \emph{implicit} correlations learned from data, whereas~\cite{Kifer-PODS2012} is typically framed in the \emph{central} setting and specifies privacy with respect to an \emph{explicit} class of data-generating distributions.


\begin{definition}[$\epsilon$-Bounded mPL]
\label{def:BmPL}
A randomized perturbation mechanism $\mathcal{M}$ is said to satisfy
$\epsilon$-bounded mPL if, for every pair of distinct secrets $x_i \neq x_j$
and every joint observation $\mathbf{y}\in\mathcal{Y}^L$,
\vspace{-0.05in}
\begin{equation}
\label{eq:mPL_bound_multi}
\small
\textstyle
\sup_{x_i \neq x_j} \sup_{\mathbf{y}\in \mathcal{Y}^L}
\mathrm{mPL}_{\mathcal{M}}(x_i, x_j, \mathbf{y}) \leq \epsilon.
\vspace{-0.02in}
\end{equation}
\end{definition}
\begin{proposition}
[Post-processing for bounded mPL]  
\label{prop:post-processing}
Let $\mathcal{M}:\mathcal{X}\to \mathcal{Y}$ be a randomized mechanism that satisfies the $\epsilon$-bounded mPL constraint. 
For any (possibly randomized) function $f:\mathcal{Y}\to \mathcal{Z}$, define the post-processed mechanism $
(f \circ \mathcal{M})(x) \triangleq f(\mathcal{M}(x)), ~\forall x\in\mathcal{X}$, where $\mathcal{Z}=\mathrm{Range}(f\circ \mathcal{M})$. 
Then $f \circ \mathcal{M}$ also satisfies the bounded joint mPL constraint:
\vspace{-0.03in}
\begin{equation}
\small
\textstyle
\sup_{x_i\ne x_j}\ \sup_{\mathbf{z}\in \mathcal{Z}^L}\ \mathrm{mPL}_{f \circ \mathcal{M}}(x_i, x_j, \mathbf{z}) \leq \epsilon.
\vspace{-0.05in}
\end{equation}
Detailed proof can be found in Appendix~\ref{subsec:proof:post-processing}.
\end{proposition}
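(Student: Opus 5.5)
The plan is to exploit the Markov structure $\mathbf{X}\to\mathbf{Y}\to\mathbf{Z}$ created by post-processing. We then show that the posterior odds given $\mathbf{z}$ are a weighted average (in the mediant sense) of posterior odds given $\mathbf{y}$, each of which is already controlled by the $\epsilon$-bounded mPL hypothesis on $\mathcal{M}$. Concretely, we write $\mathbf{Z}=(f(Y^{1}),\ldots,f(Y^{L}))$, where the internal randomness of each application of $f$ is drawn independently of $\mathbf{X}$ (and of the mechanism's auxiliary variables $Z$ in the representation $\tilde{\mathcal{M}}$). This gives $\Pr(\mathbf{Z}=\mathbf{z}\mid \mathbf{X}=\mathbf{x},\mathbf{Y}=\mathbf{y})=\Pr(\mathbf{Z}=\mathbf{z}\mid\mathbf{Y}=\mathbf{y})$. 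Consequently, for every $\ell$ and every $\mathbf{z}$ with positive probability,
\[
\Pr(X_\ell=x\mid \mathbf{Z}=\mathbf{z})=\sum_{\mathbf{y}\in\mathcal{Y}^L}\Pr(X_\ell=x\mid \mathbf{Y}=\mathbf{y})\,\Pr(\mathbf{Y}=\mathbf{y}\mid \mathbf{Z}=\mathbf{z}).
\]
The weights $w_{\mathbf{y}}:=\Pr(\mathbf{Y}=\mathbf{y}\mid\mathbf{Z}=\mathbf{z})$ are nonnegative, do not depend on $x$, and sum to one.

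Next, we fix $x_i\ne x_j$ and abbreviate $\pi_i=\Pr(X_\ell=x_i)$, $\pi_j=\Pr(X_\ell=x_j)$ and $D=d_{x_i,x_j}$. Unwinding Definition~\ref{def:mPL}, the hypothesis that $\mathcal{M}$ satisfies $\epsilon$-bounded mPL says that for every $\mathbf{y}$ with positive probability,
\[
e^{-\epsilon D}\frac{\pi_i}{\pi_j}\,\Pr(X_\ell=x_j\mid\mathbf{y})\le \Pr(X_\ell=x_i\mid\mathbf{y})\le e^{\epsilon D}\frac{\pi_i}{\pi_j}\,\Pr(X_\ell=x_j\mid\mathbf{y}).
\]
We multiply each inequality by $w_{\mathbf{y}}$ and sum over $\mathbf{y}$. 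By the mixture identity, this yields the same two-sided bound with $\mathbf{y}$ replaced by $\mathbf{z}$. Dividing by $\Pr(X_\ell=x_j\mid\mathbf{z})$ and by the prior odds, then taking logarithms, gives $\bigl|\ln\frac{\Pr(X_\ell=x_i\mid\mathbf{z})}{\Pr(X_\ell=x_j\mid\mathbf{z})}-\ln\frac{\pi_i}{\pi_j}\bigr|\le\epsilon D$, that is, $\mathrm{mPL}_{f\circ\mathcal{M}}(x_i,x_j,\mathbf{z})\le\epsilon$. Since the priors are unchanged by post-processing, taking suprema over $x_i\ne x_j$ and $\mathbf{z}\in\mathcal{Z}^L$ completes the argument. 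This is the posterior-side analogue of the usual mediant (quasi-convexity) argument for DP post-processing.

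The step I expect to need the most care is the conditional-independence/mixture identity, together with degenerate cases. First, we must make explicit that $f$'s randomness is independent of the secrets and is applied componentwise. Otherwise $\mathbf{Z}$ could carry extra information about $\mathbf{X}$, and the claim would fail. Second, we must restrict attention to $\mathbf{y}$ and $\mathbf{z}$ of positive probability, and to $x_i,x_j$ with positive prior. The hypothesis only constrains posteriors at realizable observations, and those with $w_{\mathbf{y}}=0$ drop out of the sum. The bounded-mPL hypothesis also forces $\Pr(X_\ell=x_i\mid\mathbf{y})$ and $\Pr(X_\ell=x_j\mid\mathbf{y})$ to vanish together, so the ratio given $\mathbf{z}$ is well defined. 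Once these conventions are fixed, the remaining steps are the routine summation described above.
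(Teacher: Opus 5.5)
Your argument is correct and is essentially the paper's own: the appendix proof rewrites the posterior odds given $\mathbf{z}$ as a ratio of joint probabilities summed over the preimage $f^{-1}(\mathbf{z})$ and bounds it using the per-$\mathbf{y}$ guarantee. That is exactly your mixture identity specialized to deterministic $f$, with weights $\Pr(\mathbf{Y}=\mathbf{y}\mid\mathbf{Z}=\mathbf{z})$ supported on the preimage. Your version is more complete than the paper's sketch. It makes explicit the mediant/summation step the paper only asserts, covers both sides of the absolute value in the mPL definition, and handles the randomized $f$ that the statement allows via the conditional-independence (Markov-kernel) formulation.
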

\vspace{-0.05in}

Intuitively, Proposition~\ref{prop:post-processing} implies that, if for every possible perturbed records $\mathbf{y}$, their joint mPL is within the privacy budget $\epsilon$, then post-processing the output using $f(\mathbf{y})$ cannot amplify this ratio, since post-processing coarsens the output space, mixing outcomes, which cannot increase the distinction between $X_{\ell} = x_i$ and $X_{\ell} = x_j$. 

\vspace{-0.10in}

\subsection{Properties Based on Individual or Independent Observations} 
\label{subsec:threat:single}
\vspace{-0.00in}
\begin{proposition}[Single-observation equivalence of mPL and mDP]\label{prop:single}
Define the single-observation mPL for a pair $(x_i,x_j)$ and observation $y$ by
\normalsize

\vspace{-0.2in}
\small
\begin{eqnarray}
\label{eq:mPL_single}
&& \mathrm{mPL}_{\mathcal{M}}((x_i,x_j),y)
\\ \nonumber 
&=& 
\frac{1}{d_{x_i,x_j}}
\left|
\ln\frac{\Pr(X=x_i|\mathcal{M}(X)=y)}{\Pr(X=x_j|\mathcal{M}(X)=y)}
-
\ln\frac{\Pr(X=x_i)}{\Pr(X=x_j)}
\right|.
\end{eqnarray}
\normalsize
For any $\epsilon\ge 0$, $\mathcal{M}$ satisfies $(\epsilon,d)$-mDP \emph{if and only if} the single-observation mPL bound holds, i.e., 
\vspace{-0.05in}
\begin{equation}
\small 
\textstyle 
\sup_{x_i\ne x_j} \sup_{y\in \mathcal{Y}} \mathrm{mPL}_{\mathcal{M}}((x_i,x_j),y)\leq\epsilon. 
\vspace{-0.05in}
\end{equation}
A detailed proof appears in Appendix~\ref{subsec:prop:single:proof}.
\vspace{-0.05in}
\end{proposition}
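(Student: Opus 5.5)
The plan is to use Bayes' rule (Eq.~\eqref{eq:posterior}) to turn the posterior-odds shift into a likelihood ratio. Fix $x_i\neq x_j$ and $y\in\mathcal{Y}$, and assume the prior has full support, so $\Pr(X=x_i),\Pr(X=x_j)>0$. Assume also that the relevant probabilities are positive; how to handle zeros is discussed at the end. In the ratio of the two posteriors, the normalizing denominator $\sum_{x'}\Pr(\mathcal{M}(X)=y\mid X=x')\Pr(X=x')$ is the same for $x_i$ and $x_j$ and cancels. This gives
\[
\ln\frac{\Pr(X=x_i\mid\mathcal{M}(X)=y)}{\Pr(X=x_j\mid\mathcal{M}(X)=y)}=\ln\frac{\Pr[\mathcal{M}(x_i)=y]}{\Pr[\mathcal{M}(x_j)=y]}+\ln\frac{\Pr(X=x_i)}{\Pr(X=x_j)} .
\]
The prior log-odds therefore cancel inside the absolute value in Eq.~\eqref{eq:mPL_single}, leaving the identity
\[
\mathrm{mPL}_{\mathcal{M}}((x_i,x_j),y)=\frac{1}{d_{x_i,x_j}}\left|\ln\frac{\Pr[\mathcal{M}(x_i)=y]}{\Pr[\mathcal{M}(x_j)=y]}\right| .
\]
This identity holds for every prior and is the core of the argument.

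For the ``only if'' direction, assume $(\epsilon,d)$-mDP. Apply Eq.~\eqref{eq:mDP} to the ordered pair $(x_i,x_j)$ and also to the swapped pair $(x_j,x_i)$, which is allowed because the supremum ranges over all ordered distinct pairs and $d$ is symmetric. Together these give $|\ln(\Pr[\mathcal{M}(x_i)=y]/\Pr[\mathcal{M}(x_j)=y])|\le \epsilon d_{x_i,x_j}$. Dividing by $d_{x_i,x_j}>0$ (a metric on distinct points) gives $\mathrm{mPL}\le\epsilon$, and taking suprema gives the bound.

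For the ``if'' direction, assume the uniform mPL bound. The identity gives $|\ln(\cdot)|\le\epsilon d_{x_i,x_j}$, so in particular the signed log-ratio is at most $\epsilon d_{x_i,x_j}$. Taking the supremum over $y$ and over pairs recovers Eq.~\eqref{eq:mDP}.

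The main obstacle is the boundary cases, not the algebra. The first is zero probabilities. If $\Pr[\mathcal{M}(x_j)=y]=0<\Pr[\mathcal{M}(x_i)=y]$, mDP fails, since the ratio is $+\infty$. The posterior of $x_j$ is also zero, so mPL is $+\infty$ as well. I would use the conventions $\ln(a/0)=+\infty$ and $0/0$ ignored (both probabilities zero), and check that the two sides are infinite or vacuous under exactly the same conditions. This requires $y$ to be in the support of the marginal of $\mathcal{M}(X)$ so that the posterior is defined. If some output $y$ has zero marginal probability, the mPL supremum should be read over outputs with $\Pr(\mathcal{M}(X)=y)>0$. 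Under full-support priors this occurs only when $\Pr[\mathcal{M}(x)=y]=0$ for all $x$, a case in which mDP is also vacuous. The second boundary case is the prior itself: I would state explicitly that the equivalence is for any full-support prior, since the identity above does not depend on which prior is chosen.
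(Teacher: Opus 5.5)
Your proposal is correct and takes the same route as the paper's proof: Bayes' rule cancels the normalizer, so the posterior log-odds shift equals the likelihood log-ratio, and a bound on one is a bound on the other. Your explicit handling of the absolute value via the swapped pair is cleaner than the appendix, which asserts a pointwise equivalence between the two-sided mPL bound and the one-sided mDP bound that only holds after taking the supremum over ordered pairs. The same goes for your treatment of zero probabilities and the full-support prior, which the appendix glosses over.
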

While real-world record often contains dependencies (e.g., between a person’s name and organization), analyzing posterior leakage under the simplifying assumption that protected records $X_1, \ldots, X_L$ are \emph{independently distributed} offers useful theoretical insights.
Under this assumption, we establish a connection between individual and joint posterior leakage in \emph{Proposition \ref{prop:independence}}:
\begin{proposition}[Independent-observation equivalence of mPL and mDP]
\label{prop:independence}
If the $L$ secret words $X_1, \ldots, X_L$ are independently distributed, then ensuring 
\vspace{-0.05in}
\begin{equation}
\small 
\textstyle 
\sup_{x_i\ne x_j} \sup_{y^{\ell}\in \mathcal{Y}} \mathrm{mPL}_{\mathcal{M}}((x_i,x_j),y^{\ell})\leq\epsilon
\vspace{-0.05in}
\end{equation}
for each $y^{\ell}$ ($\ell = 1,..., L$) is sufficient to guarantee 
\vspace{-0.05in}
\begin{equation}
\small 
\textstyle 
\sup_{x_i\ne x_j} \sup_{\mathbf{y}\in \mathcal{Y}^L } \mathrm{mPL}_{\mathcal{M}}(x_i, x_j, \mathbf{y}) \leq \epsilon.
\vspace{-0.05in}
\end{equation}
A detailed proof appears in Appendix \ref{subsec:prop:independence:proof}. 
\vspace{-0.08in}
\end{proposition}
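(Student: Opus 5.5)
The plan is to reduce the joint posterior odds for $X_\ell$ to the single-observation posterior odds based on $y^\ell$ alone. Then the joint mPL coincides with the single-observation mPL of Proposition~\ref{prop:single}, and the hypothesis bounds it directly.

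First I would write the joint law. The mechanism is applied independently to each component, which in the representation $\mathcal{M}(x)=\tilde{\mathcal{M}}(x,Z)$ means independent auxiliary variables $Z_1,\ldots,Z_L$, also independent of the secrets. Combined with independence of $X_1,\ldots,X_L$, the pairs $(X_k,Y^k)$ with $Y^k=\tilde{\mathcal{M}}(X_k,Z_k)$ are mutually independent across $k$. This is the step where I would invoke $\tilde{\mathcal{M}}$ formally: $(X_k,Z_k)$ are independent tuples, and measurable functions of independent tuples are independent. Hence
\begin{equation*}
\Pr(X_\ell=x,\mathbf{Y}=\mathbf{y})=\Pr(X_\ell=x,Y^\ell=y^\ell)\prod_{k\ne\ell}\Pr(Y^k=y^k).
\end{equation*}

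Second, I would divide this identity for $x=x_i$ by the same identity for $x=x_j$. The factors $\prod_{k\ne\ell}\Pr(Y^k=y^k)$ and the normalizer $\Pr(\mathbf{Y}=\mathbf{y})$ cancel. This gives
\begin{equation*}
\frac{\Pr(X_\ell=x_i\mid \mathbf{Y}=\mathbf{y})}{\Pr(X_\ell=x_j\mid \mathbf{Y}=\mathbf{y})}=\frac{\Pr(X_\ell=x_i\mid Y^\ell=y^\ell)}{\Pr(X_\ell=x_j\mid Y^\ell=y^\ell)}=\frac{\Pr(\mathcal{M}(x_i)=y^\ell)\Pr(X_\ell=x_i)}{\Pr(\mathcal{M}(x_j)=y^\ell)\Pr(X_\ell=x_j)}.
\end{equation*}
Subtracting the log prior ratio, the joint mPL in Definition~\ref{def:mPL} equals $\mathrm{mPL}_{\mathcal{M}}((x_i,x_j),y^\ell)$, computed with prior $\Pr(X_\ell=\cdot)$. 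That quantity is at most $\epsilon$ by hypothesis. Equivalently, by Proposition~\ref{prop:single} it equals $\frac{1}{d_{x_i,x_j}}\bigl|\ln\frac{\Pr(\mathcal{M}(x_i)=y^\ell)}{\Pr(\mathcal{M}(x_j)=y^\ell)}\bigr|\le\epsilon$. Taking suprema over $x_i\ne x_j$ and $\mathbf{y}\in\mathcal{Y}^L$ finishes the argument.

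The main obstacle is a technicality rather than the algebra. Posterior odds are only defined when $\Pr(\mathbf{Y}=\mathbf{y})>0$ and the priors of $x_i,x_j$ are positive. I would restrict to the support, adopting the paper's implicit convention that the ratios are well defined, and note that mDP with finite $\epsilon d$ forces $\Pr(\mathcal{M}(x)=y)$ to be positive either for all $x$ or for none, so no zero-over-positive cases arise. I also need to be careful that the single-observation hypothesis is applied with the marginal prior of $X_\ell$. This is harmless, because the reduced expression is prior-free.
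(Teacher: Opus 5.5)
Your proposal is correct and follows essentially the same route as the paper's proof: use the representation $\mathcal{M}(x)=\tilde{\mathcal{M}}(x,Z)$ to get independence across components, factor the joint probability so that $\prod_{k\ne\ell}\Pr(Y^k=y^k)$ cancels, and conclude that the joint posterior for $X_\ell$ coincides with the single-observation posterior given $y^\ell$, so the joint mPL equals the single-observation mPL. Your argument is slightly cleaner than the paper's in two respects: you use independent auxiliary variables $Z_1,\ldots,Z_L$, whereas the paper's display writes one shared $Z$, which would break the independence it relies on; and you handle the support and marginal-prior conventions explicitly.
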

The proposition shows that without inter-token dependencies, individual-level mDP bounds suffice to ensure privacy under joint observation. However, this assumption rarely holds in practice.


\vspace{-0.07in}
\subsection{Threat Models based on Joint and Correlated Observations} 
\label{subsec:jointOb}
\vspace{-0.08in}
In this part, we relax the independence assumption and introduce more realistic threat models where the records $X_1, \ldots, X_L$ are dependent. 

\textbf{(1) Explicit joint-probability attacker (a toy example).}
We consider an attacker that models the \emph{joint} distribution of two secrets and performs \emph{Bayesian inference} over two perturbed outputs. Let $X_1,X_2\in\mathcal{X}=\{x_1,x_2\}$ with a correlated prior:
\normalsize

\vspace{-0.25in}
\small
\begin{eqnarray}    
\nonumber &\Pr(X_1=x_1,X_2=x_1)=\Pr(X_1=x_2,X_2=x_2)=0.01 \\
\nonumber & \Pr(X_1=x_1,X_2=x_2)=\Pr(X_1=x_2,X_2=x_1)=0.49,
\end{eqnarray}
\normalsize
and set $\epsilon=1.0$. For an \emph{exponential mechanism (EM)} perturbation $\mathcal{M}_{\mathrm{EM}}$ with two outputs $\{y_1,y_2\}$, suppose
$\Pr(\mathcal{M}_{\mathrm{EM}}(X_i)=y_1\mid X_i=x_1)=0.72$, $\Pr(\mathcal{M}_{\mathrm{EM}}(X_i)=y_2\mid X_i=x_1)=0.28$, $\Pr(\mathcal{M}_{\mathrm{EM}}(X_i)=y_1\mid X_i=x_2)=0.28$, $\Pr(\mathcal{M}_{\mathrm{EM}}(X_i)=y_2\mid X_i=x_2)=0.72$. A direct calculation shows that for each $y_k\in\{y_1,y_2\}$,
\vspace{-0.05in}
\begin{equation}
\small \mathrm{mPL}_{\mathcal{M}_{\mathrm{EM}}}(x_1,x_2,y_k)=0.944<\epsilon,
\vspace{-0.05in}
\end{equation}
so observing each perturbed record \emph{individually} does not violate the mPL bound (therefore also achieving mDP according to Proposition \ref{prop:sampling}). 

In contrast, when the two outputs are consumed \emph{jointly}, we obtain
\normalsize

\vspace{-0.35in}
\small
\begin{eqnarray}
&& \mathrm{mPL}_{\mathcal{M}_{\mathrm{EM}}}\bigl(x_1,x_2,\{\mathcal{M}_{\mathrm{EM}}(x_1),\mathcal{M}_{\mathrm{EM}}(x_2)\}
\\ 
&=& \{y_1,y_2\}\bigr)=1.846>\epsilon,
\end{eqnarray}
\normalsize
demonstrating that joint consumption under a correlated prior can trigger posterior-leakage violations even when all single-observation checks pass. Full details appear in Appendix~\ref{subsec:exam:1}.


\textbf{(2) Inference models based on implicit joint probability.} Explicit Bayesian joint inference can, in principle, reveal joint leakage, but it is often impractical: computing the exact posterior $p(\mathbf{x}\mid \mathbf{y})$ requires summing (or integrating) over all $\mathbf{x}\in\mathcal{X}^{L}$, yielding a normalizing constant of size $|\mathcal{X}|^{L}$ in the discrete case (i.e., exponential in sequence length). Moreover, the joint prior $p(\mathbf{x})$ is typically unknown and any assumed probabilistic model may be misspecified \cite{BernardoSmith1994}.

\begin{figure}[t]
  \centering
  \includegraphics[width=0.40\textwidth]{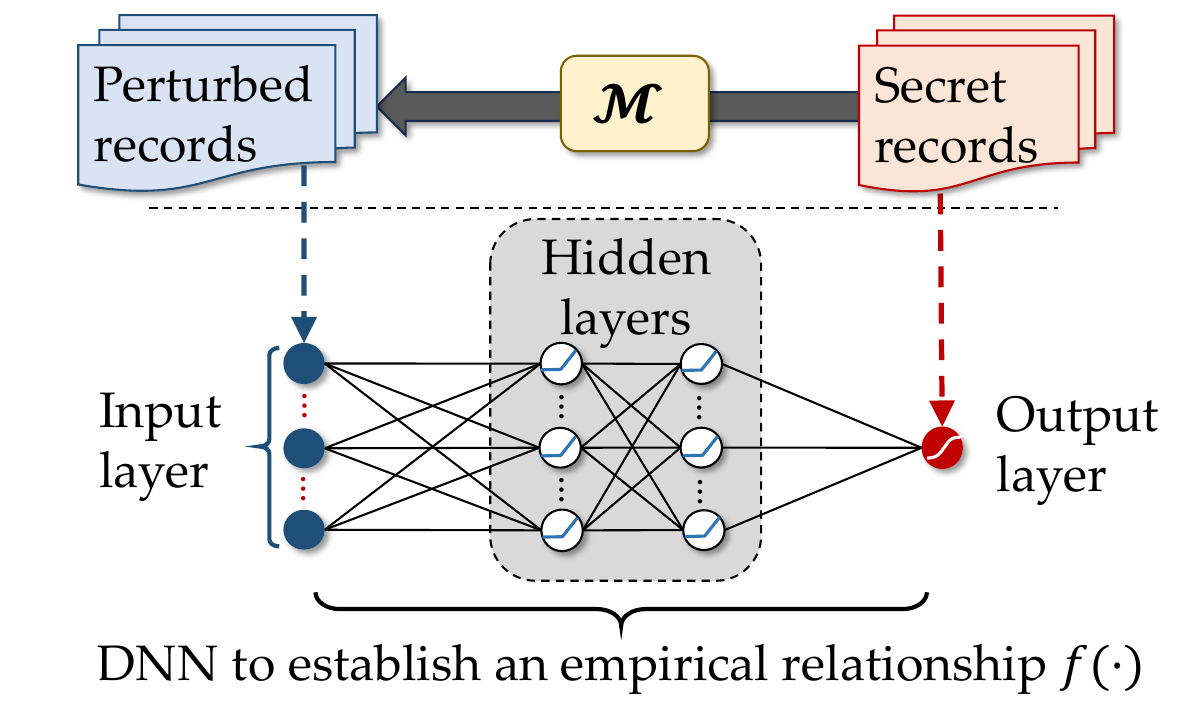}
  \vspace{-0.05in}
  \caption{Threat model.}
  \label{fig:DNN}
  \vspace{-0.15in}
\end{figure}

\DEL{
\begin{wrapfigure}{r}{0.30\textwidth}
\vspace{-0.25in}
\begin{minipage}{0.30\textwidth}
\centering
\subfigure{
\includegraphics[width=1.00\textwidth]{./fig/threatmodel}}
\vspace{-0.25in}
\caption{DNN-based Threat Model.
}
\label{fig:DNN}
\end{minipage}
\vspace{-0.20in}
\end{wrapfigure}}
In these models, we instantiate the attacker as a high-capacity neural posterior estimator that directly approximates $\Pr(X_{\ell} = x_i|\left\{\mathcal{M}(X_1), \ldots, \mathcal{M}(X_L)\right\}= \mathbf{y})$. 
Specifically, we employ three DNN architectures, \emph{RNN}, \emph{LSTM}, and \emph{Transformer}, to reconstruct secret records from their perturbed counterparts. As illustrated in Fig. \ref{fig:DNN}, we apply the perturbation mechanism $\mathcal{M}$ to all the secret records and generate corresponding perturbed records. Then we randomly select 80\% of secret–perturbed pairs, 60\% serving as samples for training and 20\% for validation, with each model minimizing the \emph{mean squared error (MSE)} between the predicted and true records. We use the Adam optimizer \cite{kingma2017} with an initial learning rate of 0.001, reducing it when validation performance plateaus.

\emph{Posterior and prior approximation.}
Given an adversary reconstruction $\hat{x}\in\mathcal{X}$, we compute squared Euclidean distances $d_{\hat{x},x_i}^{2}=\|\hat{x}-x_i\|_2^2$ to each candidate $x_i\in\mathcal{X}$ and map them to a temperature-scaled Gaussian softmax~\cite{guo2017calibration}:
\vspace{-0.12in}
\begin{equation}
\small 
\Pr(X=x_i\mid Y=y)=
\frac{\exp\big(-d_{\hat{x},x_i}^{2}/(\tau_{\text{base}}\tau)\big)}
{\sum_{x_j\in\mathcal{X}}\exp\big(-d_{\hat{x},x_j}^{2}/(\tau_{\text{base}}\tau)\big)},
\vspace{-0.02in}
\end{equation}
where $\tau_{\text{base}}$ and $\tau>0$ control the sharpness. We treat this distribution as a numerically stable approximation of the attacker’s posterior over $X$ given $y$. 
If sensitive tokens are deterministically replaced by a fixed placeholder $y_{\mathrm{mask}}$ (e.g., \texttt{``xxxx''}) rather than perturbed by $\mathcal{M}$, then $y$ carries no information about the secret; the posterior therefore equals the prior.

\emph{Initial results.} According to the case study in Section \ref{sec:case} (PII Protection), learned adversaries reveal non-trivial mPL violations under per-record mDP perturbation mechanism (example distributions in Fig.~\ref{fig:mPL:agnews_transformer_example}; full results in Table~\ref{Tb:exp:mPL}).


\vspace{-0.0in}
\textbf{Discussion: Per-user accounting.} 
Per-user budgeting is effective when reliable user identifiers enable per-user accounting; we instead target settings without such identifiers (e.g., many text/embedding datasets) and adopt a user-agnostic formulation that controls joint leakage over arbitrarily correlated secrets (detailed discusion can be found in Appendix~\ref{subsec:per-user-accounting}).


\vspace{-0.11in}
\section{Data Perturbation Framework}
\vspace{-0.05in}

As discussed in Section~\ref{subsec:jointOb}, exact closed-form calibration of mechanism parameters for mPL is generally intractable, since evaluating mPL requires posteriors induced by high-dimensional, correlation-aware likelihoods. To operationalize mPL, we adopt a \emph{trust--and--verify} framework with an attacker in the loop, called \emph{Adaptive mPL (AmPL)}. AmPL starts from a principled per-record perturbation (motivated by the independent case), trains a high-capacity adversary to estimate posteriors from the resulting releases, and then \emph{verifies} (and updates) the mechanism by auditing the resulting mPL estimates. Iterating this loop resolves the chicken-and-egg dependency: mechanism parameters require attacker feedback, while the attacker requires mechanism-perturbed data for training. Finally, AmPL supports \emph{level-wise} protection by stratifying secrets into multiple sensitivity tiers.

\DEL{
Fig.~\ref{fig:framework} illustrates the AmPL framework using an example of protecting PII and PoII word embeddings. In each round, AmPL executes the following steps:

\textcircled{1} \textbf{Level-wise data perturbation:} Let $\mathbf{x}$ denote the original data representation (e.g., PII and PoII word embeddings). We partition the whole secret record set $\mathcal{X}$ into $N$ tiers and apply perturbation mechanisms with $N$ levels, respectively. The perturbed representation is $\mathbf{y}$.

\textcircled{2} \textbf{Adversarial DNN inference:} We train an adversarial model (e.g., RNN/LSTM/Transformer) to reconstruct $\mathbf{x}$ from $\mathbf{y}$ or to infer protected attributes, yielding $\hat{\mathbf{x}}$ (or predicted records). Using $\hat{\mathbf{x}}$, we evaluate the mPL violation ratio to quantify privacy risk under the adversary’s inference.

\textcircled{3} \textbf{Feedback-driven adjustment:} The mPL violation ratio evaluation provides feedback to adapt the perturbation strength. We iteratively adjust the perturbation mechanisms to balance privacy protection and utility, converging toward an operating point that satisfies target leakage thresholds.

\textcircled{4} \textbf{Bayesian remapping:} To mitigate utility loss, we apply a Bayesian remapping function $f(\cdot)$ to $\mathbf{y}$ to obtain task-aligned representations for downstream use. Because $f$ is pure post-processing, it preserves the original privacy guarantees while improving task accuracy (according to \emph{Proposition \ref{prop:post-processing}}).}

\begin{figure*}[t]
\centering
\hspace{0.00in}
\begin{minipage}{0.88\textwidth}
\includegraphics[width=\textwidth]{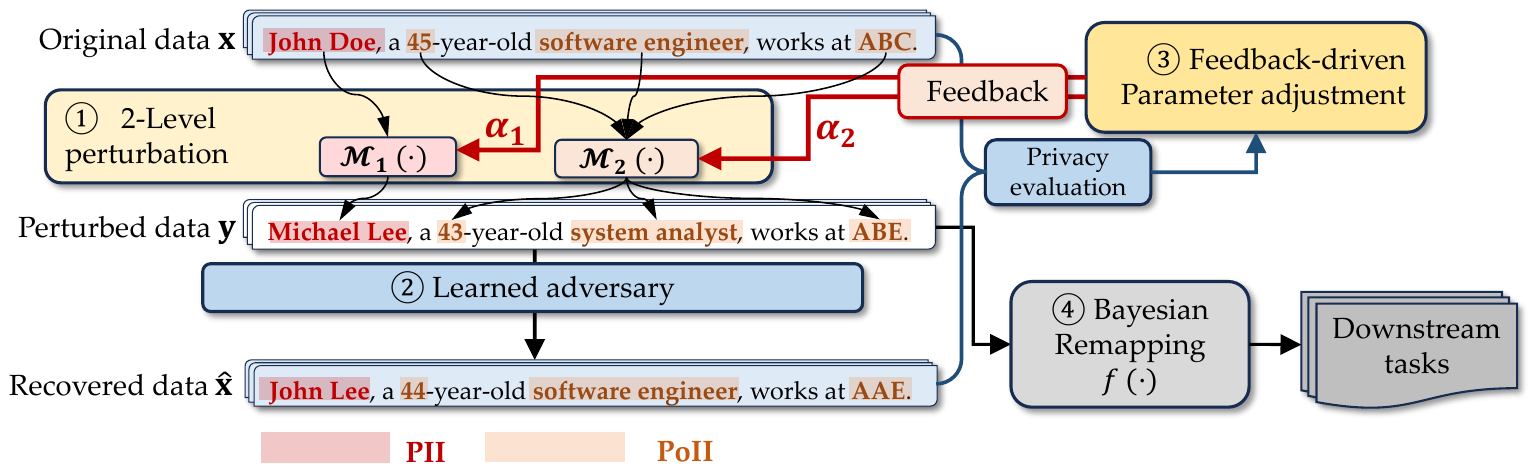}
\vspace{-0.15in}
\end{minipage}
\caption{Illustration of the AmPL Framework (example: protecting PII and PoII word embeddings).}
\label{fig:framework}
\vspace{-0.15in}
\end{figure*}
\vspace{-0.00in}
Figure~\ref{fig:framework} illustrates the AmPL framework via a text-embedding example with two sensitivity tiers: \emph{personally identifiable information (PII)} and \emph{potentially identifying information (PoII)}. Here, PII includes attributes that directly identify or authenticate an individual (e.g., full name, email address, phone number, or precise home address). PoII refers to attributes that may not uniquely identify a person in isolation but can materially reduce the anonymity set or reveal sensitive traits when combined with other data (e.g., employer, city of residence, demographic descriptors, or fine-grained preferences). This tiered model enables stricter privacy parameters for PII while still monitoring joint-consumption leakage for PoII.

As the figure shows, in each round, AmPL \textcircled{1} performs \textbf{level-wise data perturbation} by partitioning the secret record space $\mathcal{X}$ into $N$ tiers and applying $N$ corresponding perturbation levels to the original representation $\mathbf{x}$, producing a perturbed output $\mathbf{y}$; \textcircled{2} trains an \textbf{adversarial DNN} (e.g., RNN/LSTM/Transformer) to reconstruct $\mathbf{x}$ from $\mathbf{y}$ or infer protected attributes, yielding $\hat{\mathbf{x}}$, and uses this inference to evaluate the \textbf{mPL violation ratio} as the privacy risk under joint consumption; \textcircled{3} uses this risk estimate for \textbf{feedback-driven adjustment}, iteratively updating perturbation strengths to balance privacy and utility and move toward a target leakage threshold; and \textcircled{4}  applies \textbf{Bayesian remapping} $f(\cdot)$ to $\mathbf{y}$ to improve downstream utility, which preserves the privacy guarantee as pure post-processing (Proposition~\ref{prop:post-processing}).


Next, we introduce the details of Steps \textcircled{1}--\textcircled{4}.

\vspace{-0.05in}
\noindent \textbf{Step \textcircled{1}: Level-wise data perturbation.}
Let $N\in\mathbb{N}$ denote the number of sensitivity tiers.
We first partition $\mathcal{X}$ into disjoint subsets $\{\mathcal{X}^{(1)},\ldots,\mathcal{X}^{(N)}\}$ and define a level-assignment function
$g:\mathcal{X}\to\{1,\ldots,N\}$ that maps each secret $x\in\mathcal{X}$ to its sensitivity level.
For each level $\ell\in\{1,\ldots,N\}$, specify a mechanism $\mathcal{M}_{\ell}(\cdot;\alpha_{\ell})$ with privacy/perturbation parameter $\alpha_{\ell}$.
Given $x$, the released output is
$y \sim \mathcal{M}_{g(x)}\bigl(x;\alpha_{g(x)}\bigr)$, 
so that protection strength matches the sensitivity of $x$.
The collection $\{\alpha_{\ell}\}_{\ell=1}^{L}$ can be tuned (i.e., via feedback control in Step \textcircled{3}) to meet target leakage--utility trade-offs. \looseness = -1

\emph{Two-level example for word-embedding privacy (PII vs.\ PoII):}
In this case, we let $N=2$ with $\mathcal{X}^{(1)}$ denoting direct identifiers (PII) and $\mathcal{X}^{(2)}$ denoting quasi-identifiers (PoII). We assign a stronger perturbation to PII and a milder one to PoII (e.g., mechanisms $\mathcal{M}_1,\mathcal{M}_2$ with $\epsilon_1<\epsilon_2$), reflecting their different disclosure risks. A detailed design and evaluation of this two-level word-embedding perturbation appear in Section~\ref{sec:case} (Case Study) and Appendix~\ref{app:case}.

\textbf{Step \textcircled{2}: Learned Adversary}. 
To evaluate and mitigate posterior leakage under realistic adversarial settings, we adopt a learned adversary approach based on DNNs, as introduced in Section~\ref{subsec:jointOb}. Specifically, models such as RNNs, LSTMs, and Transformers are trained to reconstruct the original records from their perturbed versions, effectively simulating strong inference attacks that exploit semantic dependencies across tokens. We then use the outputs of these adversarial models, i.e., the approximated posterior distributions over sensitive tokens, to assess whether the posterior leakage bounds are satisfied.

Notably, enforcing the posterior leakage constraint for \emph{all} secret record pairs and perturbed records can lead to an overly conservative privacy budget. To address this, we adopt a probabilistic relaxation that requires the constraint to hold with high probability rather than deterministically.

\begin{definition}[Probabilistic bounded mPL]
\label{def:PBmPL}
Given a perturbation mechanism $\mathcal{M}_{\ell}$ ($\ell = 1,...,L$), we define the violation probability $p_{\mathcal{X}_{\ell}^2}$ as the probability that the posterior leakage exceeds the privacy budget $\epsilon$ for a randomly sampled pair of records and output:
\begin{equation}
\small p_{\mathcal{X}_{\ell}^2} = \Pr\left[\mathrm{mPL}_{\mathcal{M}_{\ell}}(X_i, X_j, Y) > \epsilon\right],
\end{equation}
where $X_i$ and $X_j$ are drawn from the secret record domain $\mathcal{X}_{\ell}$. We say $\mathcal{M}_{\ell}$ can achieve $(\delta, \epsilon_\ell)$-PBmPL if $p_{\mathcal{X}_{\ell}^2} \leq \delta$. 
\end{definition}
\vspace{-0.05in}
Directly computing $p_{\mathcal{X}_{\ell}^2}$ is computationally prohibitive, as it requires evaluating all $|\mathcal{X}_{\ell}|^2$ record pairs. In practice, where $|\mathcal{X}_{\ell}|$ may range from tens of thousands to over one hundred thousand records (e.g., 5,448 PIIs and 5,492 PoIIs in AG-News dataset~\cite{zhang2015character}), exhaustive evaluation is intractable. To address this, we estimate $p_{\mathcal{X}_{\ell}^2}$ via \textit{random sampling}.  
Specifically, we uniformly sample a subset $\mathcal{S}_{\ell} \subseteq \mathcal{X}_{\ell}^2 \times \mathcal{Y}$ consisting of $S_\ell$ triplets $(x_i, x_j, y)$, and define the empirical estimate:
\begin{equation}
\textstyle 
\small \hat{p}_{\mathcal{S}_{\ell}} = \frac{1}{S_\ell} \sum_{(x_i, x_j, y) \in \mathcal{S}_{\ell}} \mathbf{1}\left(\mathrm{mPL}_{\mathcal{M}_{\ell}}(x_i, x_j, y) > \epsilon\right),
\end{equation}
where $\mathbf{1}(\cdot)$ denotes the indicator function.  
Because $\mathcal{S}_{\ell}$ is sampled uniformly, $\hat{p}_{\mathcal{S}_{\ell}}$ is an \textit{unbiased estimator} of $p_{\mathcal{X}_{\ell}^2}$, i.e., $\mathbb{E}[\hat{p}_{\mathcal{S}_{\ell}}] = p_{\mathcal{X}_{\ell}^2}$. We further establish the following concentration guarantee:

\begin{proposition}[Concentration Guarantee for Probabilistic mDP Sampling] 
\label{prop:sampling}
If the empirical violation rate satisfies $\hat{p}_{\mathcal{S}_{\ell}} = \xi \delta$ for some constant $\xi < 1$, then $\Pr\left[p_{\mathcal{X}_{\ell}^2} \leq \delta\right] \geq 1 - 2\exp(-2S_\ell(1-\xi)^2\delta^2)$. The detailed proof can be found in  \textbf{Appendix~\ref{subsec:proof:sampling}}.
\end{proposition}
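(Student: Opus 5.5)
The plan is to treat $\hat{p}_{\mathcal{S}_{\ell}}$ as the mean of $S_\ell$ i.i.d.\ Bernoulli variables and apply Hoeffding's inequality. Write $B_s=\mathbf{1}(\mathrm{mPL}_{\mathcal{M}_{\ell}}(x_i^{(s)},x_j^{(s)},y^{(s)})>\epsilon)$ for the $s$-th uniformly drawn triplet in $\mathcal{S}_{\ell}$. Because each triplet is drawn independently and uniformly from $\mathcal{X}_{\ell}^2\times\mathcal{Y}$, we have $B_s\in\{0,1\}$, the $B_s$ are i.i.d., and $\mathbb{E}[B_s]=p_{\mathcal{X}_{\ell}^2}$ (the unbiasedness remark preceding the proposition). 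I will assume sampling is with replacement; for sampling without replacement, Hoeffding's bound still holds by his reduction argument, so nothing changes.

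First I apply the two-sided Hoeffding inequality with deviation $t=(1-\xi)\delta>0$, which is positive since $\xi<1$ and $\delta>0$:
\[
\Pr\bigl[|\hat{p}_{\mathcal{S}_{\ell}}-p_{\mathcal{X}_{\ell}^2}|\ge (1-\xi)\delta\bigr]\le 2\exp\bigl(-2S_\ell(1-\xi)^2\delta^2\bigr).
\]
Next I show that the complementary event forces the conclusion. If $|\hat{p}_{\mathcal{S}_{\ell}}-p_{\mathcal{X}_{\ell}^2}|<(1-\xi)\delta$ and $\hat{p}_{\mathcal{S}_{\ell}}=\xi\delta$, then $p_{\mathcal{X}_{\ell}^2}<\xi\delta+(1-\xi)\delta=\delta$. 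Hence $\Pr[p_{\mathcal{X}_{\ell}^2}\le\delta]\ge 1-2\exp(-2S_\ell(1-\xi)^2\delta^2)$, which is the statement. The one-sided version alone would give the sharper constant $1$ in place of $2$; keeping the two-sided form matches the stated bound and is still valid.

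The main obstacle is interpretive rather than technical. As written, $p_{\mathcal{X}_{\ell}^2}$ is a fixed quantity, and we condition on the random event $\hat{p}_{\mathcal{S}_{\ell}}=\xi\delta$. I would therefore read the statement in the standard frequentist or certificate sense: the event ``the observed $\hat{p}_{\mathcal{S}_{\ell}}\le\xi\delta$ and yet $p_{\mathcal{X}_{\ell}^2}>\delta$'' has probability at most $2\exp(-2S_\ell(1-\xi)^2\delta^2)$ over the draw of $\mathcal{S}_{\ell}$. Equivalently, the confidence interval $[\hat{p}_{\mathcal{S}_{\ell}}-t,\hat{p}_{\mathcal{S}_{\ell}}+t]$ covers $p_{\mathcal{X}_{\ell}^2}$ with at least that probability. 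I would state this reading explicitly before running the two-step argument above.
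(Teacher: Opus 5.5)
Your proposal is correct and follows the paper's proof essentially verbatim: two-sided Hoeffding with $t=(1-\xi)\delta$, then on the complementary event $p_{\mathcal{X}_{\ell}^2}\le \hat{p}_{\mathcal{S}_{\ell}}+(1-\xi)\delta=\delta$. Your certificate reading is a genuine improvement over the paper's proof, which silently conditions the deterministic quantity $p_{\mathcal{X}_{\ell}^2}$ on the random event $\hat{p}_{\mathcal{S}_{\ell}}=\xi\delta$; you instead fix $\xi$ and $\delta$ before sampling and bound $\Pr[\hat{p}_{\mathcal{S}_{\ell}}\le\xi\delta,\ p_{\mathcal{X}_{\ell}^2}>\delta]$.
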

\vspace{-0.05in}
\vspace{-0.0in}
Proposition~\ref{prop:sampling} shows that as the sample size $S_\ell$ increases, the bound $2\exp(-2S_\ell(1-\xi)^2\delta^2)$ rapidly approaches zero, ensuring that $\Pr[p_{\mathcal{X}_{\ell}^2} \leq \delta]$ approaches one.

\begin{proposition}[Asymptotic faithfulness of the mPL audit]
\label{prop:mpl-faithful}
Let $p(x \mid y)$ denote the true posterior and $q_{\theta}(x \mid y)$ the adversary
trained on $n$ i.i.d.\ pairs $(x_i, y_i)$ by minimizing empirical conditional
cross-entropy over a fixed neural-network class.
Assume that (A1) $\mathcal{X}$ is finite, and there exists $\gamma > 0$ such that $p(x \mid y) \geq \gamma$ and $q_{\theta}(x \mid y) \geq \gamma$ for all $x \in \mathcal{X}$ and all $y$ (implemented in practice by softmax clipping) and (A2) the metric satisfies $d(x_i, x_j) \ge d_{\min} > 0$ whenever $x_i \neq x_j$. Then for any fixed pair of candidates $x_i, x_j$ there exists a constant
$C > 0$ (depending only on $\gamma$, $d_{\min}$, and the candidate set)
such that
\vspace{-0.05in}
\begin{equation}
\label{eq:mpl-error-rate}
\small \mathbb{E}_Y\Bigl[
  \bigl| \mathrm{mPL}(x_i, x_j; Y) - \widetilde{\mathrm{mPL}}(x_i, x_j; Y) \bigr|
\Bigr]
\leq C\, n^{-\alpha/2}
\vspace{-0.05in}
\end{equation}
for all sufficiently large $n$, where $\mathrm{mPL}$ and $\widetilde{\mathrm{mPL}}$ denote the mPL computed using $p$ and $q_\theta$, respectively. 
The detailed proof can be found in \textbf{Appendix \ref{sec:posterior-error-bound}}. \looseness = -1
\vspace{-0.05in}
\end{proposition}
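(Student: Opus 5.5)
Since $\mathcal{X}$ is finite, the plan is to reduce the mPL discrepancy pointwise in $y$ to log-posterior errors, then to a KL/cross-entropy excess risk. The exponent $\alpha$ is not defined in the statement, so I read it as an assumed statistical rate: the excess conditional cross-entropy risk of the empirical minimizer satisfies $\mathbb{E}_Y[\mathrm{KL}(p(\cdot\mid Y)\,\|\,q_\theta(\cdot\mid Y))]\le C_0 n^{-\alpha}$ for large $n$. This contains both an estimation-error term and an approximation term for the fixed network class; I take the approximation term to be absent (a well-specified class) or absorbed into the rate. I will state this assumption explicitly.

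\textbf{Step 1 (prior cancels).} In Definition~\ref{def:mPL}, the prior log-ratio $\ln\frac{\Pr(X=x_i)}{\Pr(X=x_j)}$ is the same in $\mathrm{mPL}$ and $\widetilde{\mathrm{mPL}}$, since both use the same prior and only the posterior is replaced by $q_\theta$. By the reverse triangle inequality $\bigl||a|-|b|\bigr|\le|a-b|$ and assumption (A2),
\[
\bigl|\mathrm{mPL}(x_i,x_j;y)-\widetilde{\mathrm{mPL}}(x_i,x_j;y)\bigr|\le \frac{1}{d_{\min}}\Bigl(\bigl|\ln p(x_i\mid y)-\ln q_\theta(x_i\mid y)\bigr|+\bigl|\ln p(x_j\mid y)-\ln q_\theta(x_j\mid y)\bigr|\Bigr).
\]

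\textbf{Step 2 (log to total variation).} By (A1), both probabilities lie in $[\gamma,1]$. On that interval $\ln$ is $1/\gamma$-Lipschitz, so each term is at most $\frac{1}{\gamma}|p(x\mid y)-q_\theta(x\mid y)|\le \frac{1}{\gamma}\|p(\cdot\mid y)-q_\theta(\cdot\mid y)\|_1$. Hence the pointwise error is at most $\frac{2}{\gamma d_{\min}}\|p(\cdot\mid y)-q_\theta(\cdot\mid y)\|_1$.

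\textbf{Step 3 (Pinsker and Jensen).} Pinsker gives $\|p-q_\theta\|_1\le\sqrt{2\,\mathrm{KL}(p\|q_\theta)}$. Taking $\mathbb{E}_Y$ and applying Jensen to the concave square root,
\[
\mathbb{E}_Y\bigl[|\mathrm{mPL}-\widetilde{\mathrm{mPL}}|\bigr]\le \frac{2\sqrt2}{\gamma d_{\min}}\sqrt{\mathbb{E}_Y\,\mathrm{KL}}.
\]
The expected KL equals the excess population cross-entropy of $q_\theta$ over the Bayes-optimal $p$.

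\textbf{Step 4 (rate).} Invoke the assumed excess-risk rate $C_0 n^{-\alpha}$. The lower bound $\gamma$ makes the log-loss bounded by $\ln(1/\gamma)$, so standard uniform-convergence or Rademacher arguments apply to the clipped class. This gives the bound with $C=2\sqrt{2C_0}/(\gamma d_{\min})$.

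\textbf{Main obstacle.} Step 4 is the hard part: justifying the $n^{-\alpha}$ excess-risk rate for a fixed neural class. That needs either a realizability assumption or absorbing the approximation error into the constant. A generic complexity bound gives $\alpha=1/2$ under boundedness, so I would present the rate as a hypothesis and note that $\gamma$-clipping is exactly what makes the loss bounded. An alternative route uses the reverse-Pinsker-type bound $\mathrm{KL}\le\chi^2\le\|p-q\|_1^2/\gamma$ to relate the two directions if the rate is phrased in $L^1$.
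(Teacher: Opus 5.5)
Your proposal is correct and follows essentially the same route as the paper's proof. Both arguments use the same pointwise bound $\frac{2}{\gamma d_{\min}}\|p(\cdot\mid y)-q_\theta(\cdot\mid y)\|_1$ (the paper gets it from a mean-value argument on $g(a,b)=\log(a/b)$), then Pinsker, Jensen, and a $C_0 n^{-\alpha}$ rate on $\mathbb{E}_Y[\mathrm{KL}]$, arriving at the identical constant $C = 2\sqrt{2C_0}/(\gamma d_{\min})$.

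Your explicit treatment of that rate as a hypothesis, with realizability or absorbed approximation error, is more careful than the paper. The paper cites best-in-class excess-risk bounds and turns them into a KL bound against the true posterior via the cross-entropy/KL decomposition, which implicitly assumes zero approximation error.
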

\noindent
Proposition~\ref{prop:mpl-faithful} shows that, as the adversary is trained on more data, its learned posterior $q_\theta(x\mid y)$ converges to the true posterior $p(x\mid y)$ in expected KL, and the mPL computed from $q_\theta$ converges to the true mPL at a polynomial rate. Thus our empirical mPL audit is asymptotically faithful to the true posterior-leakage risk.

\emph{Scope of the audit.}
Notably, AmPL’s audit is adversary-dependent: for expressive DNN attackers, the exact posterior (and thus exact mPL) is intractable. Instead of pursuing a universal worst-case bound, we adopt a modular framework that supports different threat models by swapping adversary classes. mPL/PBmPL are defined for arbitrary adversaries, and for any fixed class, our sampling audit provides a standard concentration guarantee, i.e., the empirical violation rate converges to the true PBmPL violation probability as the number of samples grows. In practice, we use high-capacity neural posterior estimators as strong (but approximate) attackers; alternative or stronger attackers can be plugged in and may reveal additional violations. Finally, we focus on a single joint release to a fixed attacker and do not provide a general composition theorem for repeated releases; we position AmPL as an adaptive auditing tool for empirically controlling joint leakage when classical per-user composition is not directly applicable.

\textbf{Step \textcircled{3}: Feedback-Driven Perturbation Adjustment.} 
To balance privacy protection and utility preservation, we adopt an adaptive optimization strategy that iteratively updates the scaling factors $\alpha_1$ and $\alpha_2$ based on adversarial feedback. This adaptation is guided by minimizing a composite loss function $\mathcal{L}(\boldsymbol\alpha)$, which jointly captures privacy leakage and utility degradation:
\vspace{-0.02in}
\begin{equation}
\label{eq:objective}
\small 
\mathcal{L}(\boldsymbol\alpha) = \lambda_1 \cdot \mathcal{L}_{\mathrm{privacy}}(\boldsymbol\alpha) + \lambda_2 \cdot \mathcal{L}_{\mathrm{utility}}(\boldsymbol\alpha),
\vspace{-0.05in}
\end{equation}
where $\lambda_1, \lambda_2 > 0$ are trade-off coefficients that balance the two objectives. 

The privacy loss term $\mathcal{L}_{\mathrm{privacy}}(\boldsymbol\alpha)$ is given by the empirical violation rate $\hat{p}_{\mathcal{S}}$, which estimates the probability that posterior leakage exceeds the privacy budget $\epsilon$ over a sampled set of input-output pairs. The utility loss term $\mathcal{L}_{\mathrm{utility}}(\boldsymbol\alpha)$ represents the expected semantic distortion caused by the perturbation:
\vspace{-0.15in}
\begin{equation}
\nonumber
\small
\mathcal{L}_{\mathrm{utility}}(\boldsymbol\alpha) = \sum_{\ell=1}^L\sum_{x \in \mathcal{X}_1} \sum_{y \in \mathcal{Y}} \pi_x \, c_{x,y} \, \Pr\left(\mathcal{M}_{\ell}(x;\alpha_{\ell}) = y\right),
\vspace{-0.05in}
\end{equation}
where $\pi_x$ denotes the prior probability of $x$, and $c_{x,y}$ quantifies the utility loss incurred by reporting $y$ when the true input is $x$.

\DEL{
\begin{proposition}[Convergence Guarantee]
\label{prop:convergence}
For theoretical interests, given the following two conditions are satisfied:
\begin{itemize}
    \item The loss function $\mathcal{L}(\boldsymbol\alpha)$ is convex and $G$-Lipschitz continuous.
    \item The step size schedule satisfies $\sum_{t=0}^\infty \eta^{(t)} = \infty$ and $\sum_{t=0}^\infty (\eta^{(t)})^2 < \infty$.
\end{itemize}
{\rd Double check the references} the convergence of the parameter sequence follows directly from classical results in stochastic approximation theory~\cite{robbins1951stochastic} and online convex optimization~\cite{hazan2016introduction,bubeck2015convex}.
Suppose the above conditions hold. Then the sequence of parameters $\{(\alpha_1^{(t)}, \alpha_2^{(t)})\}$ generated by Algorithm~\ref{alg:adaptive_perturbation} converges almost surely to a stationary point of $\mathcal{L}$.
\end{proposition}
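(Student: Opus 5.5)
The plan is to treat the update in Algorithm~\ref{alg:adaptive_perturbation} as projected stochastic subgradient descent on $\mathcal{L}$ and run the classical Robbins--Siegmund supermartingale argument. This requires writing the iteration as
\[
\boldsymbol\alpha^{(t+1)} = \Pi_{\mathcal{A}}\bigl(\boldsymbol\alpha^{(t)} - \eta^{(t)} g^{(t)}\bigr),
\]
where $\mathcal{A}$ is the closed convex set of admissible scaling factors and $g^{(t)}$ is the feedback-driven gradient estimate. I would add two assumptions that the statement leaves implicit, and that I expect any proof to need:
\begin{itemize}
\item $g^{(t)}$ is conditionally unbiased for a subgradient, i.e.\ $\mathbb{E}[g^{(t)}\mid\mathcal{F}_t]\in\partial\mathcal{L}(\boldsymbol\alpha^{(t)})$, where $\mathcal{F}_t$ is the history up to round $t$;
\item $g^{(t)}$ has bounded second moment, $\mathbb{E}[\|g^{(t)}\|^2\mid\mathcal{F}_t]\le G^2+\sigma^2$.
\end{itemize}
The $G$-Lipschitz hypothesis bounds the true subgradients by $G$. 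The variance term $\sigma^2$ comes from the sampled estimate $\hat{p}_{\mathcal{S}}$: by Proposition~\ref{prop:sampling} this estimate is unbiased and has bounded variance, since it is an average of indicators.

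\textbf{Step 1 (one-step inequality).} Fix any minimizer $\boldsymbol\alpha^\star$ of $\mathcal{L}$ over $\mathcal{A}$ and set $V_t=\|\boldsymbol\alpha^{(t)}-\boldsymbol\alpha^\star\|^2$. Projection onto $\mathcal{A}$ is nonexpansive, and convexity gives $\langle \mathbb{E}[g^{(t)}\mid\mathcal{F}_t],\boldsymbol\alpha^{(t)}-\boldsymbol\alpha^\star\rangle\ge \mathcal{L}(\boldsymbol\alpha^{(t)})-\mathcal{L}^\star$. Together these yield
\[
\mathbb{E}[V_{t+1}\mid\mathcal{F}_t]\le V_t-2\eta^{(t)}\bigl(\mathcal{L}(\boldsymbol\alpha^{(t)})-\mathcal{L}^\star\bigr)+(\eta^{(t)})^2(G^2+\sigma^2).
\]

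\textbf{Step 2 (Robbins--Siegmund).} Because $\sum_t(\eta^{(t)})^2<\infty$, the Robbins--Siegmund lemma applies to the inequality of Step~1. It gives, almost surely, that $V_t$ converges and that $\sum_t\eta^{(t)}\bigl(\mathcal{L}(\boldsymbol\alpha^{(t)})-\mathcal{L}^\star\bigr)<\infty$. Combined with $\sum_t\eta^{(t)}=\infty$, the second fact forces $\liminf_t \mathcal{L}(\boldsymbol\alpha^{(t)})=\mathcal{L}^\star$.

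\textbf{Step 3 (convergence of the whole sequence).} Since $V_t$ converges, the iterates are bounded, so a subsequence converges to some $\bar{\boldsymbol\alpha}$. Continuity of $\mathcal{L}$, which follows from the Lipschitz assumption, then makes $\bar{\boldsymbol\alpha}$ a minimizer. Step~2 holds simultaneously, almost surely, for every $\boldsymbol\alpha^\star$ in a countable dense subset of the minimizer set. A standard Opial-type argument then shows $\|\boldsymbol\alpha^{(t)}-\bar{\boldsymbol\alpha}\|$ converges, and its limit is $0$ along the subsequence. Hence the full sequence converges to $\bar{\boldsymbol\alpha}$. For convex $\mathcal{L}$, a minimizer over $\mathcal{A}$ is exactly a stationary point, meaning $0\in\partial\mathcal{L}(\bar{\boldsymbol\alpha})+N_{\mathcal{A}}(\bar{\boldsymbol\alpha})$ with $N_{\mathcal{A}}$ the normal cone of $\mathcal{A}$. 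This gives the claim.

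\textbf{Main obstacle.} I expect the hardest part to be justifying the unbiased, bounded-variance subgradient oracle. The privacy term $\hat{p}_{\mathcal{S}}$ is piecewise constant in $\boldsymbol\alpha$, and the adversary is retrained each round, so the estimate is not literally a stochastic subgradient of a fixed convex function. My first attempt would be to define $\mathcal{L}_{\mathrm{privacy}}$ as a smoothed violation rate, for example replacing the indicator by a sigmoid surrogate. I would then argue that the finite-difference or score-function estimator used in the algorithm has bias vanishing fast enough, say $\sum_t\eta^{(t)}\|\text{bias}_t\|<\infty$. That summable bias can be absorbed into the Robbins--Siegmund recursion as an additional summable perturbation term.
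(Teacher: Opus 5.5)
Your proposal is correct and follows essentially the paper's route. The paper gives no argument beyond deferring to classical stochastic approximation (Robbins--Monro) and online convex optimization, and your Robbins--Siegmund analysis of projected stochastic subgradient descent is that classical result written out, with the hypotheses it silently needs made explicit: a conditionally unbiased, bounded-second-moment subgradient oracle, and a closed convex feasible set containing a minimizer.

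One small repair is needed in Step~3. First take the subsequence along which $\mathcal{L}(\boldsymbol\alpha^{(t)})\to\mathcal{L}^\star$, which the $\liminf$ in Step~2 provides, and only then extract a further convergent subsequence. An arbitrary subsequential limit is not yet known to be a minimizer.
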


\paragraph{Practical Step Size Schedule.}
In our experiments, we adopt a polynomially decaying step size schedule:
\begin{equation}
\eta^{(t)} = \frac{\eta_0}{(t+1)^\gamma},
\end{equation}
where $\eta_0 > 0$ is the initial learning rate and $\gamma \in (0.5,1]$ controls the decay rate.  
This choice satisfies the stochastic approximation conditions for convergence while providing flexibility to balance adaptation speed and stability.

\paragraph{Step Size Schedule.}
In practice, we employ a polynomially decaying step size of the form:
\begin{equation}
\eta^{(t)} = \frac{\eta_0}{(t+1)^\gamma},
\end{equation}
where $\eta_0 > 0$ is the initial learning rate and $\gamma \in (0.5,1]$ controls the decay rate.  
This schedule satisfies the convergence conditions and provides a good balance between adaptation and stability.
}

\textbf{Step \textcircled{4}: Bayesian Remapping}. 
While perturbation mechanisms protect privacy by injecting noise into sensitive data, the resulting outputs may not be optimal for downstream tasks due to semantic distortion. To mitigate this utility loss, we employ \emph{Bayesian remapping}, a post-processing step that refines perturbed outputs based on their inferred posterior distributions. Given a perturbed record $y$, Bayesian remapping selects an alternative output that minimizes the expected utility loss under the posterior, formally defined as:
\vspace{-0.15in}
\begin{equation}
\small
f(y) = \arg\min_{y' \in \mathcal{Y}} \sum_{\ell=1}^L \sum_{x \in \mathcal{X}_{\ell}} \underbrace{\Pr\left[X = x \,\middle|\, \mathcal{M}_{\ell}(x;\alpha_{\ell}) = y\right]}_{\text{posterior of } x \text{ given perturbed record } y} c_{x, y'}.
\end{equation}
Notably, \cite{Chatzikokolakis-PETS2017} has proved that this transformation preserves the original mDP guarantees under individual perturbed observations. We extend this result by formally proving in Proposition~\ref{prop:post-processing} that the joint mPL constraint is also preserved under post-processing.

\DEL{
\vspace{-0.10in}
\begin{table*}[t]
\caption{Posterior leakage violation ratio of different DNN-based threat models (\%).}
\vspace{-0.00in}
\label{Tb:exp:mPL}
\centering
\scriptsize 
\begin{tabular}
{p{1.00cm} |  p{1.00cm} p{1.00cm} p{1.00cm} || p{1.00cm} p{1.00cm} p{1.00cm} || p{1.00cm} p{1.00cm} p{1.00cm}}
\toprule
\multicolumn{1}{ c|  }{}& \multicolumn{3}{ c|| }{AG News Dataset} & \multicolumn{3}{ c|| }{IMDB Review Dataset} & \multicolumn{3}{ c }{Amazon Review Dataset} \\
\hline
\multicolumn{1}{ c|  }{$\epsilon$}  & $0.3$& $0.4$& $0.5$& $0.3$& $0.4$& $0.5$& $0.3$& $0.4$& $0.5$\\ 
\hline
\hline
\multicolumn{1}{ c|  }{RNN} & 5.95±1.83 & 1.98±0.96 & 0.68±0.37 & 3.94±1.256 & 1.15±0.65 & 0.31±0.18 & 2.44±0.97 & 0.58±0.25 & 0.13±0.07 \\
\multicolumn{1}{ c|  }{LSTM} & 7.23±1.54 & 2.91±1.77 & 1.27±0.51 & 3.99±1.94 & 1.07±0.36 & 0.34±0.08 & 4.54±2.84 & 2.31±1.27 & 0.90±0.49 \\
\multicolumn{1}{ c|  }{Transformer} & 6.64±1.97 & 2.51±1.22 & 0.89±0.43 & 11.24±3.85 & 5.14±2.16 & 2.46±0.91 & 3.57±1.72 & 1.20±0.44 & 0.43±0.20 \\
\bottomrule
\end{tabular}
\vspace{-0.10in}
\end{table*}}

\vspace{-0.10in}
\section{Case Study: PII/PoII Embedding Protection}
\vspace{-0.05in}
\label{sec:case}
We evaluate AmPL on text embeddings because embeddings are widely used in deployed systems and text exhibits layered sensitivities (PII vs.\ PoII) that naturally align with level-wise perturbation; it also provides standard threat models (reconstruction/attribute inference) and utility benchmarks (classification/retrieval).\footnote{An anonymized artifact is included in the supplementary material.}
Although our experiments perturb in embedding space, mPL/AmPL are modality- and representation-agnostic, requiring only a metric over secret records, a utility loss, and a learned attacker (see Appendix~\ref{subsec:exp:beyondtext} and Appendix~\ref{subsec:discuss:text}); additional case-study details are deferred to Appendix~\ref{app:case}.

We use the pre-trained GloVe embeddings~\cite{pennington2014glove} and evaluate on three benchmark text datasets: (1) \emph{AG-News}~\cite{zhang2015character}, a four-class news classification corpus with 120{,}000 training and 7{,}600 test samples (World, Sports, Business, and Sci/Tech); (2) \emph{IMDB}~\cite{maas-EtAl:2011:ACL-HLT2011}, a binary sentiment dataset of 50{,}000 reviews evenly split between positive and negative labels; and (3) the \emph{Amazon Reviews} corpus~\cite{zhang2015character}, a large-scale collection spanning multiple domains, containing 34{,}686{,}770 reviews from 6{,}643{,}669 users over 2{,}441{,}053 products.

\vspace{-0.00in}
\begin{table*}[h]
\caption{Posterior leakage violation ratio (\%).}
\vspace{-0.05in}
\label{Tb:exp:mPL}
\centering
\scriptsize 
\begin{tabular}
{p{1.00cm} |  p{1.00cm} p{1.00cm} p{1.00cm} || p{1.00cm} p{1.00cm} p{1.00cm} || p{1.00cm} p{1.00cm} p{1.00cm}}
\toprule
\multicolumn{1}{ c|  }{}& \multicolumn{3}{ c|| }{RNN} & \multicolumn{3}{ c|| }{LSTM} & \multicolumn{3}{ c }{Transformer} \\
\hline
\multicolumn{1}{ c|  }{$\epsilon$}  & $2.40$& $2.50$& $2.60$& $2.40$& $2.50$& $2.60$& $2.40$& $2.50$& $2.60$\\ 
\toprule
\multicolumn{10}{ c }{AG News Dataset} \\
\hline
\multicolumn{1}{ c|  }{EM (mDP)} & 13.56±0.82 & 14.74±1.30 & 17.79±1.55 & 15.29±2.44 & 16.84±0.72 & 19.47±2.95 & 19.29±1.66 & 20.50±1.38 & 21.75±1.26 \\
\hline
\multicolumn{1}{ c|  }{AmPL} & 8.80±1.30 & 8.34±2.02 & 7.64±0.53 & 10.70±2.66 & 9.83±2.29 & 9.19±0.95 & 15.07±1.10 & 14.91±2.62 & 13.61±2.33 \\
\multicolumn{1}{ c|  }{AmPL-U} & 10.57±0.86 & 9.81±0.96 & 9.48±1.36 & 11.61±2.15 & 10.51±2.03 & 9.94±1.76 & 16.66±1.63 & 15.68±1.39 & 14.58±1.38 \\
\multicolumn{1}{ c|  }{AmPL-P} & 8.80±1.32 & 7.94±1.64 & 7.43±0.66 & 10.68±2.61 & 9.71±2.01 & 8.93±1.62 & 15.18±1.63 & 14.91±2.63 & 13.35±2.58 \\
\multicolumn{1}{ c|  }{AmPL-1} & 12.46±1.43 & 11.49±1.93 & 9.94±2.18 & 10.79±1.17 & 9.81±2.95 & 9.03±2.65 & 22.10±2.29 & 20.40±2.21 & 18.45±2.79 \\
\toprule
\multicolumn{10}{ c }{IMDB Review Dataset} \\
\hline
\multicolumn{1}{ c|  }{EM (mDP)} & 11.73±1.01 & 12.42±1.13 & 13.11±2.26 & 8.65±2.06 & 9.14±2.47 & 9.33±2.05 & 12.09±1.36 & 10.87±1.18 & 10.01±0.81 \\
\hline
\multicolumn{1}{ c|  }{AmPL} & 8.71±2.90 & 8.18±1.46 & 7.08±1.14 & 6.90±3.01 & 6.18±3.47 & 7.10±2.89 & 11.42±0.98 & 9.80±0.91 & 9.57±0.36 \\
\multicolumn{1}{ c|  }{AmPL-U} & 9.92±1.11 & 9.22±1.12 & 8.18±0.84 & 8.14±2.33 & 7.67±2.69 & 6.98±2.24 & 12.07±1.39 & 10.97±1.59 & 9.80±0.81 \\
\multicolumn{1}{ c|  }{AmPL-P} & 8.67±3.32 & 7.98±0.88 & 6.60±1.13 & 5.83±5.92 & 5.82±4.20 & 6.82±2.16 & 11.65±1.17 & 9.81±0.66 & 9.65±0.68 \\
\multicolumn{1}{ c|  }{AmPL-1} & 7.72±0.80 & 7.41±0.45 & 5.75±1.69 & 7.01±0.89 & 6.42±0.83 & 5.79±0.67 & 2.25±1.45 & 1.73±1.29 & 1.06±1.16 \\
\toprule
\multicolumn{10}{ c }{Amazon Review Dataset} \\
\hline
\multicolumn{1}{ c|  }{EM (mDP)} & 9.30±1.64 & 10.55±0.65 & 12.07±1.75 & 8.52±2.20 & 9.56±0.83 & 10.57±1.62 & 10.78±2.06 & 10.80±2.36 & 12.91±2.18 \\
\hline
\multicolumn{1}{ c|  }{AmPL} & 7.11±2.21 & 6.56±1.13 & 6.28±2.75 & 7.48±2.85 & 6.35±1.77 & 6.22±2.07 & 9.58±4.62 & 8.52±2.90 & 9.09±2.19 \\
\multicolumn{1}{ c|  }{AmPL-U} & 7.91±0.90 & 7.63±0.95 & 7.34±1.33 & 7.90±0.95 & 7.04±1.16 & 6.60±1.16 & 10.37±1.63 & 9.61±0.87 & 9.72±2.02 \\
\multicolumn{1}{ c|  }{AmPL-P} & 7.00±1.55 & 6.42±0.94 & 6.50±1.73 & 7.14±2.27 & 6.26±1.32 & 6.15±1.85 & 8.98±3.16 & 8.08±2.21 & 8.68±1.96 \\
\multicolumn{1}{ c|  }{AmPL-1} & 7.87±3.82 & 6.86±2.94 & 6.02±2.76 & 7.05±2.87 & 5.57±2.18 & 4.76±1.95 & 8.13±2.53 & 6.69±3.04 & 5.93±3.00 \\
\bottomrule
\end{tabular}
\vspace{-0.10in}
\end{table*}




We use the Exponential Mechanism (EM) for perturbation, 
which samples directly from a finite candidate set with a distance-aligned utility score, typically yielding high utility for discrete outputs~\cite{Feyisetan-ICWSD2020}.
In particular, we apply two \emph{adjusted EM} perturbation mechanisms, 
$\mathcal{M}_{\mathrm{EM}}(\cdot; \alpha_1 \epsilon)$ and $\mathcal{M}_{\mathrm{EM}}(\cdot; \alpha_2\epsilon)$, to protect PII and PoII, controlled by scaling factors $\alpha_1$ and $\alpha_2$ (with $\alpha_1, \alpha_2\in [0, 1]$ and $\alpha_1 < \alpha_2$). 
Formally, for $\ell \in \{1,2\}$,
\vspace{-0.05in}
\begin{equation}
\label{eq:aEM}
\small \Pr\left[\mathcal{M}_{\mathrm{EM}}(x; \alpha_\ell \epsilon) = y\right] 
= \frac{\exp\left(-\tfrac{1}{2}\alpha_\ell \epsilon \cdot d_{x,y}\right)}
{\sum_{y' \in \mathcal{Y}} \exp\left(-\tfrac{1}{2}\alpha_\ell \epsilon \cdot d_{x,y'}\right)}, 
\vspace{-0.05in}
\end{equation}
where $\mathcal{M}_{1}(\cdot)$ (with smaller $\alpha_1$) introduces stronger noise for PII, 
and $\mathcal{M}_{2}(\cdot)$ (with larger $\alpha_2$) applies milder noise for PoII.

\DEL{
\begin{definition}[Exponential Mechanism]
Given a real embedding $x$, the Exponential Mechanism (EM), denoted by $\mathcal{M}_{\mathrm{EM}}$, selects a perturbed embedding $y \in \mathcal{Y}$ according to the following probability distribution:
\begin{equation}
\label{eq:exp}
\small
\Pr\left[\mathcal{M}_{\mathrm{EM}}(x) = y\right] = \frac{\exp(-\frac{1}{2}\epsilon d_{x,y})}{\sum_{y' \in \mathcal{Y}} \exp(-\frac{1}{2}\epsilon d_{x,y'})}
\end{equation}
\end{definition}}

\textbf{Compared methods.} 
As a baseline, we use the standard EM in Eq.~(\ref{eq:aEM}) with no level differentiation, i.e., $\alpha_{1}=\alpha_{2}=1$.
For ablations, we compare our full method (AmPL) against three variants:
(i) \emph{AmPL-U} (utility-preserving), which removes identity-salience weighting by setting $\lambda_{1}=0$ in the objective of Eq.~(\ref{eq:objective}), thereby optimizing only utility loss; and (ii) \emph{AmPL-P} (privacy-preserving), which sets $\lambda_{2}=0$ in the objective of Eq.~(\ref{eq:objective}), thereby optimizing only privacy loss; and
(iii) \emph{AmPL-1}, which applies identity-salience weighting to PII only (no PoII weighting).

\DEL{
\vspace{-0.10in}
\begin{table*}[t]
\caption{Expected utility loss.}
\vspace{-0.00in}
\label{Tb:exp:UL}
\centering
\scriptsize 
\begin{tabular}
{p{0.85cm} |  p{1.00cm} p{1.00cm} p{1.06cm} || p{1.00cm} p{1.00cm} p{1.06cm} || p{1.00cm} p{1.00cm} p{1.06cm}}
\toprule
\multicolumn{1}{ c|  }{}& \multicolumn{3}{ c|| }{RNN} & \multicolumn{3}{ c|| }{LSTM} & \multicolumn{3}{ c }{Transformer} \\
\hline
\multicolumn{1}{ c|  }{$\epsilon$}  & $0.30$& $0.40$& $0.50$& $0.30$& $0.40$& $0.50$& $0.30$& $0.40$& $0.50$\\ 
\toprule
\multicolumn{10}{ c }{AG News Dataset} \\
\hline
\multicolumn{1}{ c|  }{EM (mDP)} & 0.181±0.000 & 0.181±0.000 & 0.181±0.000 & 0.181±0.000 & 0.181±0.000 & 0.181±0.000 & 0.181±0.000 & 0.181±0.000 & 0.181±0.000 \\
\multicolumn{1}{ c|  }{AmPL-U} & 0.181±0.000 & 0.181±0.000 & 0.181±0.000 & 0.181±0.000 & 0.181±0.000 & 0.181±0.000 & 0.181±0.000 & 0.181±0.000 & 0.181±0.000 \\
\multicolumn{1}{ c|  }{AmPL-P} & 0.181±0.000 & 0.181±0.000 & 0.181±0.000 & 0.181±0.000 & 0.181±0.000 & 0.181±0.000 & 0.181±0.000 & 0.181±0.000 & 0.181±0.000 \\
\multicolumn{1}{ c|  }{AmPL-1} & 0.299±0.002 & 0.30±0.000 & 0.30±0.000 & 0.299±0.002 & 0.30±0.001 & 0.299±0.000 & 0.30±0.000 & 0.30±0.001 & 0.299±0.001 \\
\multicolumn{1}{ c|  }{AmPL} & 0.181±0.000 & 0.181±0.000 & 0.181±0.000 & 0.181±0.000 & 0.181±0.000 & 0.181±0.000 & 0.181±0.000 & 0.181±0.000 & 0.181±0.000 \\
\toprule
\multicolumn{10}{ c }{IMDB Review Dataset} \\
\hline
\multicolumn{1}{ c|  }{EM (mDP)} & 0.151±0.000 & 0.151±0.000 & 0.151±0.000 & 0.151±0.000 & 0.151±0.000 & 0.151±0.000 & 0.151±0.000 & 0.151±0.000 & 0.151±0.000 \\
\multicolumn{1}{ c|  }{AmPL-U} & 0.151±0.000 & 0.151±0.000 & 0.151±0.000 & 0.151±0.000 & 0.151±0.000 & 0.151±0.000 & 0.151±0.000 & 0.151±0.000 & 0.151±0.000 \\
\multicolumn{1}{ c|  }{AmPL-P} & 0.151±0.000 & 0.151±0.000 & 0.151±0.000 & 0.151±0.000 & 0.151±0.000 & 0.151±0.000 & 0.151±0.000 & 0.151±0.000 & 0.151±0.000 \\
\multicolumn{1}{ c|  }{AmPL-1} & 0.340±0.000 & 0.340±0.000 & 0.342±0.011 & 0.340±0.000 & 0.340±0.000 & 0.343±0.008 & 0.340±0.000 & 0.340±0.000 & 0.342±0.006 \\
\multicolumn{1}{ c|  }{AmPL} & 0.151±0.000 & 0.151±0.000 & 0.151±0.000 & 0.151±0.000 & 0.151±0.000 & 0.151±0.000 & 0.151±0.000 & 0.151±0.000 & 0.151±0.000 \\
\toprule
\multicolumn{10}{ c }{Amazon Review Dataset} \\
\hline
\multicolumn{1}{ c|  }{EM (mDP)} & 0.134±0.000 & 0.134±0.001 & 0.134±0.001 & 0.134±0.000 & 0.134±0.001 & 0.134±0.001 & 0.134±0.000 & 0.134±0.001 & 0.134±0.001 \\
\multicolumn{1}{ c|  }{AmPL-U} & 0.134±0.000 & 0.134±0.001 & 0.134±0.001 & 0.134±0.000 & 0.134±0.001 & 0.134±0.001 & 0.134±0.000 & 0.134±0.001 & 0.134±0.001 \\
\multicolumn{1}{ c|  }{AmPL-P} & 0.134±0.000 & 0.134±0.000 & 0.134±0.001 & 0.134±0.000 & 0.134±0.000 & 0.134±0.001 & 0.134±0.000 & 0.134±0.000 & 0.134±0.001 \\
\multicolumn{1}{ c|  }{AmPL-1} & 0.299±0.000 & 0.299±0.000 & 0.299±0.000 & 0.299±0.000 & 0.299±0.000 & 0.299±0.000 & 0.299±0.000 & 0.299±0.000 & 0.299±0.000 \\
\multicolumn{1}{ c|  }{AmPL} & 0.134±0.000 & 0.134±0.001 & 0.134±0.001 & 0.134±0.000 & 0.134±0.001 & 0.134±0.001 & 0.134±0.000 & 0.134±0.000 & 0.134±0.001 \\
\bottomrule
\end{tabular}
\vspace{-0.10in}
\end{table*}}


\begin{figure}[t]
\centering
\begin{minipage}{0.200\textwidth}
\vspace{0.10in}
\includegraphics[width=1.00\textwidth, height=0.11\textheight]{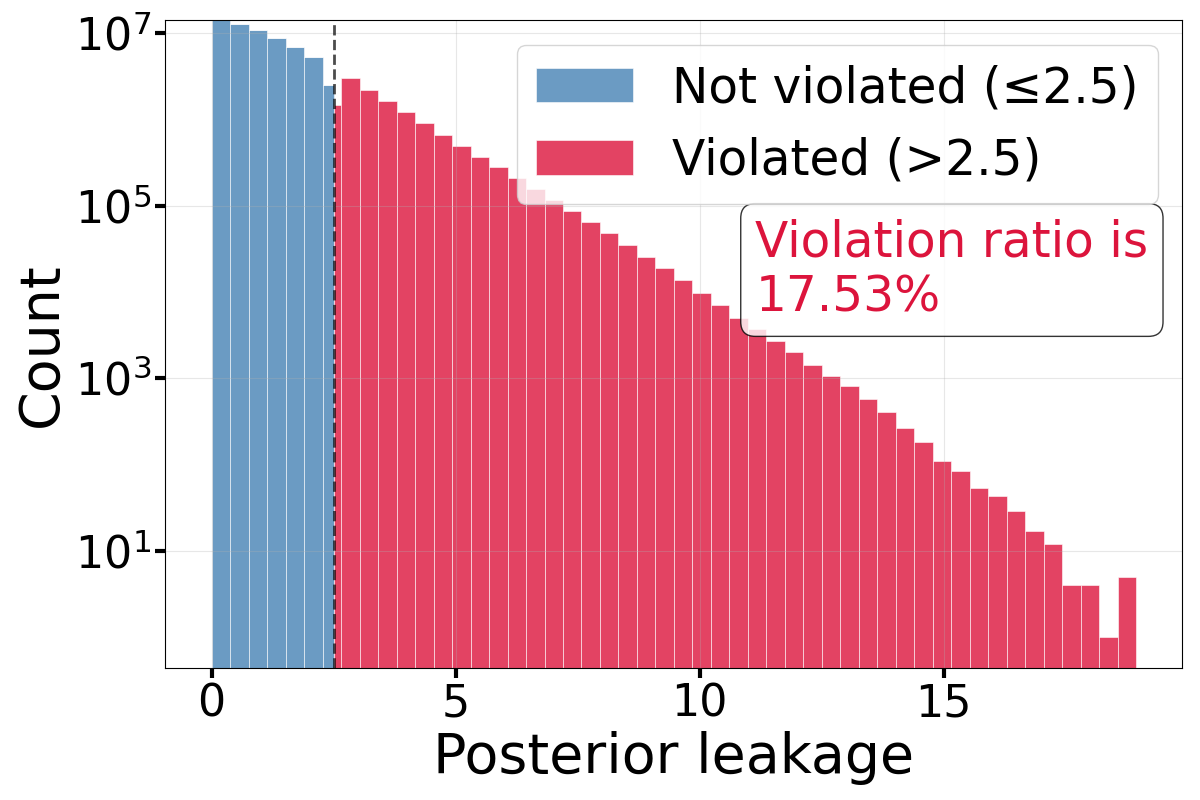}
\vspace{-0.12in}
\caption{Example of mPL distributions derived by a DNN-based inference model (Transformer).}
\label{fig:mPL:agnews_transformer_example}
\end{minipage}
\hspace{0.05in}
\begin{minipage}{0.260\textwidth}
\centering
\includegraphics[width=1.00\textwidth, height = 0.15\textheight]{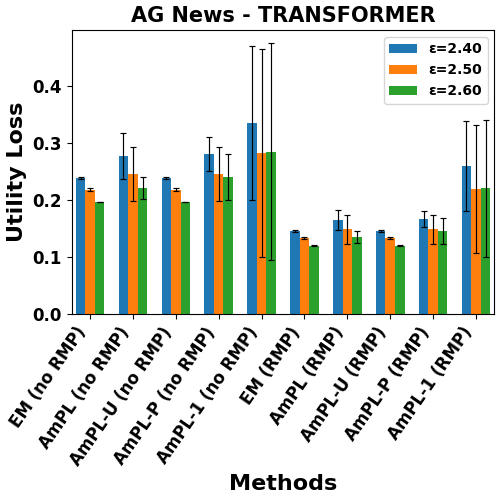}
\caption{Utility loss  (using Transformer as inference model).}
\label{fig:exp:UL}
\end{minipage}
\vspace{-0.22in}
\end{figure}

\DEL{
\begin{figure*}[t]{0.70\textwidth}
\vspace{-0.35in}
\begin{minipage}{0.68\textwidth}
\centering
\includegraphics[width=0.31\textwidth, height = 0.11\textheight]{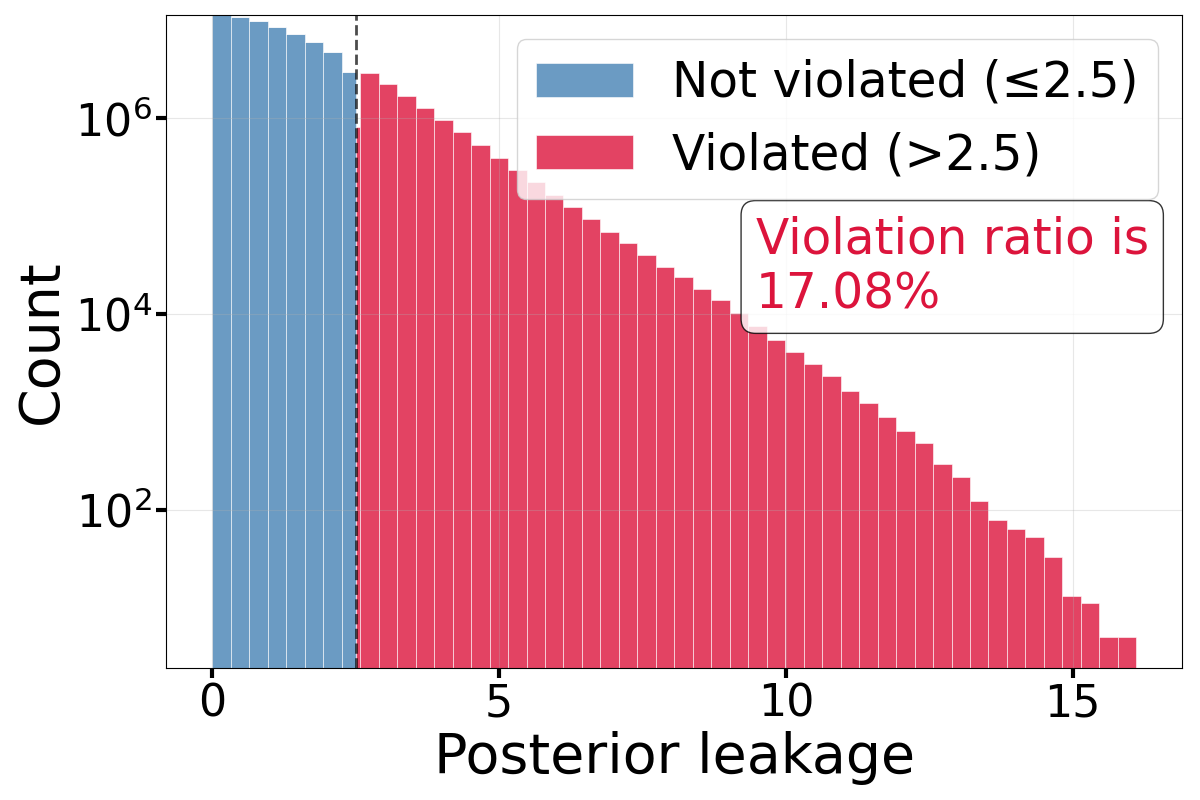}
\includegraphics[width=0.31\textwidth, height = 0.11\textheight]{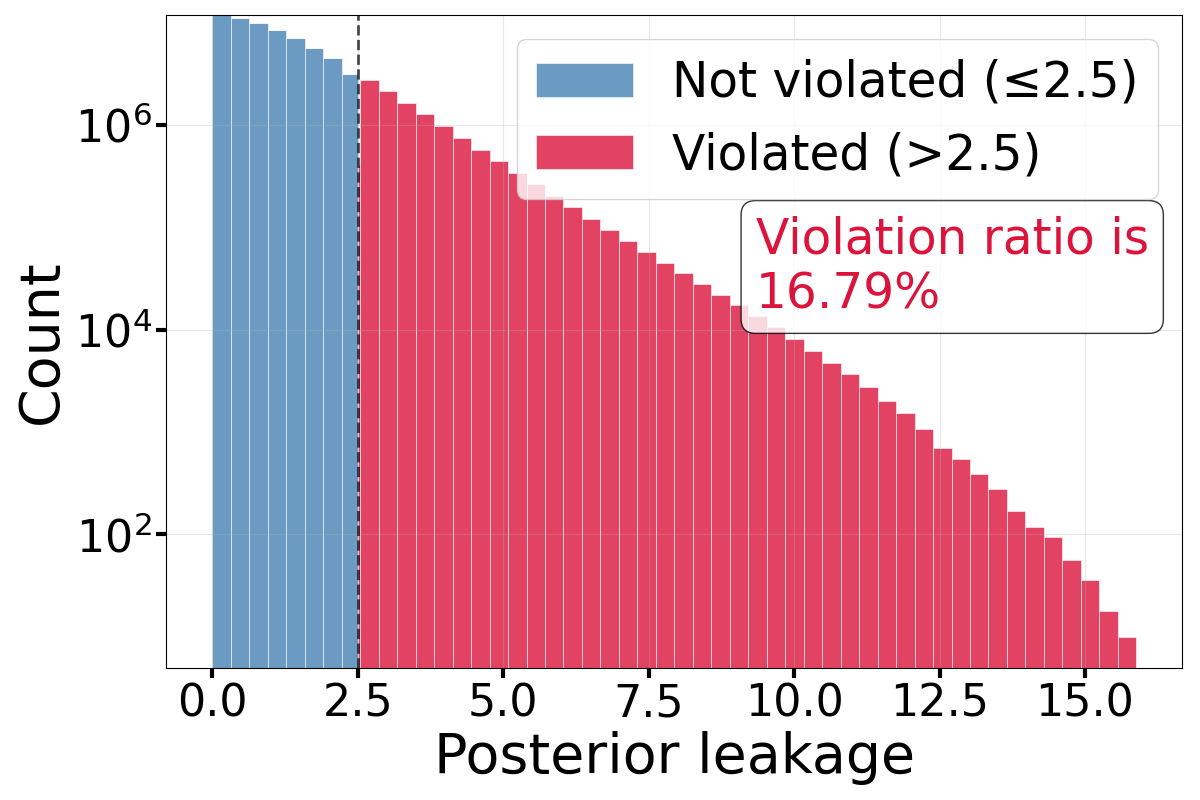}
\includegraphics[width=0.31\textwidth, height = 0.11\textheight]{./fig/exp/transformer}
\vspace{-0.15in}
\caption{Examples: mPL distribution derived by different DNN-based inference models.}
\label{fig:mPL:examples}
\end{minipage}
\vspace{-0.10in}
\end{figure*}}

\textbf{Main Results.} Table \ref{Tb:exp:mPL} compares the mPL violation ratio of different perturbation approaches across the three datasets (AG News, IMDB, Amazon) and inference models (RNN, LSTM, Transformer). From the table, we can observe that even with per-record mDP noise (EM), mPL violations remain nontrivial under joint, learned attackers. For example, at $\epsilon=2.40$ the Transformer attacker flags a sizeable fraction of violations (e.g., $\approx12.1\%$ on \textsc{IMDB}), and classical RNNs still expose leakage on \textsc{AG News} (e.g., $\approx13.6\%$). Figure~\ref{fig:mPL:agnews_transformer_example} illustrates the distribution of mPL in the AG News dataset when $\epsilon=2.50$ and the attacker model is Transformer (more comprehensive results are reported in \textbf{Appendix \ref{app:subsec:mPLDistribution}}).

We also observe that EM often shows increasing violation rates as $\epsilon$ grows, which suggests the reduced perturbation at larger $\epsilon$ can dominate and expose more attacker-aligned leakage. Across models, stronger attackers generally lead to higher leakage, especially on AG News where Transformers yield the largest EM violations. On IMDB and Amazon, the orderings differ, but Transformers still remain above LSTMs, highlighting that leakage depends not only on model capacity but also on dataset structure and correlations. \looseness = -1

Comparing mechanisms, AmPL consistently achieves the lower violation ratios in all 27 model--dataset--$\epsilon$ settings. Overall, AmPL reduces the violation ratio by 4.13\% points on average (roughly 29.89\% relative reduction on average), with the largest absolute drop on AG News--LSTM at $\epsilon=2.60$ (from 19.47\% to 9.19\%) and the largest relative drop on AG News--RNN at $\epsilon=2.60$ (from 17.79\% to 7.64\%, achieves 57.1\% relative reduction). Among ablations, AmPL-P is similarly robust, and it outperforms AmPL in most of the settings. AmPL-U, which does not explicitly optimize leakage, yields more moderate gains and performs similarly to EM. Finally, AmPL-1 (without PoII protection) exhibits highly non-uniform behavior: it can increase violations in some settings, yet dramatically reduces leakage on IMDB--Transformer (down to $1.06$ at $\epsilon\!=\!2.60$), indicating strong dataset--attacker interactions when disabling components of AmPL. In all three datasets, using large mPL sampling sizes and setting the achievable PBmPL target to $\tilde{\delta}=1.05\,\delta^\star$, we obtain astronomically small failure probabilities across all $\epsilon$ (down to $<10^{-1{,}000{,}000}$), indicating violations are effectively impossible at scale; detailed numbers are reported in \textbf{Appendix~\ref{app:subsec:recodelta}}.


\DEL{
\begin{figure*}[t]
\centering
\begin{minipage}{0.28\textwidth}
\centering
    \subfigure{
\includegraphics[width=1.00\textwidth, height = 0.13\textheight]{./fig/threatmodel}}
\vspace{-0.2in}
\caption{DNN-based Threat Model.}
\label{fig:DNN}
\end{minipage}
\hspace{0.1in}
\begin{minipage}{0.68\textwidth}
\centering
    \subfigure[\small RNN]{
\includegraphics[width=0.31\textwidth, height = 0.11\textheight]{./fig/exp/rnn}}
    \subfigure[\small LSTM]{
\includegraphics[width=0.31\textwidth, height = 0.11\textheight]{./fig/exp/lstm}}
    \subfigure[\small Transformer]{
\includegraphics[width=0.31\textwidth, height = 0.11\textheight]{./fig/exp/transformer}}
\vspace{-0.15in}
\caption{Examples: mPL distribution derived by different DNN-based inference models.}
\label{fig:mPL:examples}
\end{minipage}
\label{fig:MBItime}
\vspace{-0.10in}
\end{figure*}}

Fig.~\ref{fig:exp:UL} compares utility loss on AG News under a Transformer adversary (the results of other datasets/inference models are in \textbf{Appendix~\ref{app:subsec:UL}}). Remapping (RMP) substantially improves utility, reducing loss from $\approx 0.22$ to $\approx 0.13$ at $\epsilon\!=\!2.50$ in EM. After remapping, EM, AmPL, AmPL-U and AmPL-P have nearly identical utility (AmPL essentially matches EM), suggesting that RMP dominates by projecting perturbed embeddings back to utility-preserving regions; in contrast, AmPL-1 (RMP) remains notably worse. We also report learning curves for the learned adversary (\textbf{Appendix~\ref{subsec:trainingsize}}), showing that attack accuracy saturates quickly with a moderate number of training pairs, while the estimated mPL violation ratio increases as well.

\vspace{-0.05in}
\section{Conclusions}
\vspace{-0.05in}
We formalized metric-normalized posterior leakage and its relaxation (PBmPL), and proposed AmPL, an adaptive, attacker-aligned perturbation mechanism. Across multiple datasets with RNN/LSTM/Transformer, AmPL cuts posterior leakage while keeping utility loss low, yielding a favorable privacy–utility trade-off. Ablations show AmPL-U preserves performance with limited leakage reduction, whereas AmPL-1 boosts protection at modest extra cost. In the future work, we will extend beyond text, strengthen PBmPL composition, and scale to broader adversaries. \looseness = -1
\clearpage 

\section*{Impact Statements}
\label{sec:impact}

This work introduces metric-normalized posterior leakage (mPL) and a PBmPL/AmPL auditing-and-repair framework to evaluate and reduce inferential privacy leakage under \emph{joint consumption} of correlated releases. If adopted, these tools could help practitioners uncover privacy failure modes that per-release guarantees may miss, improving privacy evaluation for representation-based ML systems (e.g., embeddings) used in applications such as search, chat, and recommendation. Potential risks include threat-model dependence and misuse. Auditing with learned attackers can underestimate leakage if the adversary class is too weak or misspecified, while malicious actors could repurpose the same methodology to strengthen inference attacks against deployed perturbation schemes. The adaptive loop may also increase computational and environmental costs, and utility-oriented post-processing could affect subpopulations unevenly. We recommend reporting results across multiple attacker families/capacities, treating audit outcomes as conditional rather than universal guarantees, constraining compute budgets, and evaluating robustness across relevant subgroups.

\DEL{
\section{Reproducibility Statement}
\label{sec:reproduce}
In the supplementary file, we provide anonymized artifacts to ensure full reproducibility. Specifically, we include: complete training and evaluation scripts, data preprocessing pipelines, implementations of all three neural network architectures, utility loss computation modules, and configuration settings for all experimental variants. The code structure enables direct reproduction of all reported privacy ratios and utility losses results across the three benchmark datasets.}

\nocite{langley00}

\bibliography{icml2026}
\bibliographystyle{icml2026}

\newpage
\onecolumn

\part{Appendix}


\renewcommand{\thesection}{\Alph{section}}
\setcounter{section}{0}
\section*{Appendix Overview}
\paragraph{Appendix overview.}
Appendix~\ref{app:LLM} discloses our use of large language models for wording and clarity.
Appendix~\ref{app:notations} consolidates key mathematical notation.
Appendix~\ref{app:discuss} provides additional discussions.
Appendix~\ref{app:proofs} contains omitted proofs.
Appendix~\ref{app:case} provides additional case-study details.
Appendix~\ref{app:exp} reports extended experiments.


\section{The Use of Large Language Models (LLMs)}
\label{app:LLM}
We used a large language model (LLM) assistant to aid wording, improve clarity, and polish exposition across \emph{Sections~\ref{sec:intro}–\ref{sec:impact}} and \emph{Appendices~\ref{app:notations}–\ref{app:exp}}. 
The LLM was not used to generate novel results, code, or citations, and no outputs were accepted without human review. 
The authors verified the accuracy of all assisted text and take full responsibility for the final content.

\section{Math Notations}
\label{app:notations}
\DEL{
\begin{table}[h]
\centering
\caption{Summary of Notation}
\small 
\begin{tabular}{ll}
\toprule
\multicolumn{2}{l}{\textbf{Perturbation and Privacy}} \\
\midrule
$\mathcal{M}$ & Perturbation mechanism \\
$\mathcal{M}_\epsilon$ & Perturbation mechanism with privacy budget $\epsilon$ \\
$\mathcal{X}$ & Domain of secret word embeddings \\
$\mathcal{Y}$ & Domain of perturbed word embeddings \\
$x_i$ & Secret word embedding $i$ \\
$y_k$ & Perturbed word embedding $k$ \\
$d_{x_i,x_j}$ & Distance between embeddings $x_i$ and $x_j$ \\
$X$ & Random variable for a secret embedding \\
$X_{\ell}$ & Random variable for the $\ell$-th secret word \\
$c_{x_i, y_k}$ & Utility loss from perturbing $x_i$ to $y_k$ \\
$\mathcal{M}_{\alpha_1}, \mathcal{M}_{\alpha_2}$ & Perturbation mechanisms for PII and PoII \\
$\boldsymbol\alpha$ & Perturbation strengths for PII and PoII \\
\midrule
\multicolumn{2}{l}{\textbf{Inference and Leakage}} \\
\midrule
$mPL_{\mathcal{M}_\epsilon}(x_i, x_j, y)$ & Posterior leakage under individual inference \\
$mPL_{\mathcal{M}_\epsilon}(x_i, x_j, \mathbf{y})$ & Posterior leakage under joint inference \\
$p_{\mathcal{X}^2}$ & Probability that posterior leakage exceeds $\epsilon$ \\
$\hat{p}_{\mathcal{S}}$ & Empirical estimate of $p_{\mathcal{X}^2}$ from sample $\mathcal{S}$ \\
$f(y)$ & Bayesian remapping function \\
\midrule
\multicolumn{2}{l}{\textbf{Optimization and Adaptation}} \\
\midrule
$\mathcal{L}(\boldsymbol\alpha)$ & Overall loss: privacy + utility \\
$\mathcal{L}_{\text{privacy}}(\boldsymbol\alpha)$ & Privacy loss term (empirical violation rate) \\
$\mathcal{L}_{\text{utility}}(\boldsymbol\alpha)$ & Utility loss term (weighted distortion) \\
$\lambda_1, \lambda_2$ & Trade-off weights for privacy and utility \\
$\eta^{(t)}$ & Learning rate at iteration $t$ \\
$\|\alpha^{(t+1)} - \alpha^{(t)}\|_2$ & Change in parameters at iteration $t$ \\
\bottomrule
\end{tabular}
\label{tab:notation-categorized}
\end{table}}

\begin{table}[h]
\centering
\caption{Notation summary. Symbols are grouped by role (spaces/variables, mechanisms/metrics, leakage/certificates, and optimization).}
\label{tab:notation}
\vspace{0.1in}
\renewcommand{\arraystretch}{1.1}
\begin{tabular}{p{3.7cm} p{12.0cm}}
\toprule
\textbf{Symbol} & \textbf{Description} \\
\midrule
\multicolumn{2}{l}{\textbf{Spaces and random variables}}\\
\hline
$\mathcal{X}$, $\mathcal{Y}$ & Secret (embedding) space and perturbed/release space. \\
$x_i \in \mathcal{X}$, $y_k \in \mathcal{Y}$ & A candidate secret embedding and a candidate perturbed embedding. \\
$X$, $X_\ell$ & Random variables for a secret embedding and the $\ell$-th secret in a sequence. \\
$d_{x_i,x_j}$ & Metric distance on $\mathcal{X}$ between $x_i$ and $x_j$. \\
$c_{x_i,y_k}$ & Utility loss when releasing $y_k$ for true input $x_i$. \\
\hline
\addlinespace[2pt]

\multicolumn{2}{l}{\textbf{Mechanisms and parameters}}\\
\hline
$\mathcal{M}$ & Perturbation mechanism mapping $\mathcal{X}\to\mathcal{Y}$. \\
$\mathcal{M}(\cdot, \alpha_\ell)$ $(\ell = 1, ..., N)$ & Level-wise mechanisms for $N$ sensitivity tiers (e.g., PII vs.\ PoII). \\
$\boldsymbol{\alpha}=(\alpha_1,\ldots,\alpha_N)$ & Vector of per-level perturbation strengths. \\
$f(y)$ & Bayesian remapping (post-processing) applied to releases. \\
\addlinespace[2pt]
\hline
\multicolumn{2}{l}{\textbf{Leakage and certificates}}\\
\hline
$\mathrm{mPL}_{\mathcal{M}}(x_i,x_j;\mathbf{y})$ & Metric-normalized posterior leakage (single/joint; prior$\to$posterior odds change, normalized by $d_{x_i,x_j}$). \\
$\epsilon$ & Target mPL budget (smaller is more private). \\
$p_{\mathcal{X}^2}$ & Violation probability $\Pr[\mathrm{mPL}>\epsilon]$ over random pairs $(x_i,x_j)$ and releases. \\
$\hat p_{S}$ & Empirical estimate of $p_{\mathcal{X}^2}$ from a sampled set $S$ of triples $(x_i,x_j,y)$. \\
$\delta$ & Tolerance on violation frequency for PBmPL. \\
\hline
\addlinespace[2pt]
\multicolumn{2}{l}{\textbf{Optimization and adaptation (AmPL)}}\\
\hline
$L(\boldsymbol{\alpha})$ & Composite objective balancing privacy and utility. \\
$L_{\text{privacy}}(\boldsymbol{\alpha})$ & Privacy term (e.g., empirical violation rate). \\
$L_{\text{utility}}(\boldsymbol{\alpha})$ & Expected utility distortion under $\mathcal{M}$. \\
$\lambda_1,\lambda_2$ & Weights trading off privacy vs.\ utility in $L(\cdot)$. \\
$\eta(t)$ & Learning rate at iteration $t$ for adaptive updates. \\
$\lVert \boldsymbol{\alpha}^{(t+1)} - \boldsymbol{\alpha}^{(t)} \rVert_2$ & Step size between consecutive parameter updates. \\
\bottomrule
\end{tabular}
\end{table}

\section{Discussions}
\label{app:discuss}
\subsection{Inference Models based on Explicit Joint Probability}
\label{subsec:exam:1}



We construct a scenario, in which (1) two secret records $X_1$ and $X_2$ are not independently distributed, (2) observing each perturbed record ($\mathcal{M}_{\mathrm{EM}}(X_1)$ or $\mathcal{M}_{\mathrm{EM}}(X_2)$) \emph{individually} doesn't violate the posterior leakage bound, yet (3) observing $\mathcal{M}_{\mathrm{EM}}(X_1)$ and $\mathcal{M}_{\mathrm{EM}}(X_2)$ \emph{jointly} causes a posterior leakage bound violation for $X_1$.


Suppose that $X_1$ and $X_2$ each take values in $\mathcal{X}=\{x_1, x_2\}$ with the following joint distribution:
\begin{eqnarray}
\Pr(X_1 = x_1,\, X_2 = x_1) = 0.01, \\
\Pr(X_1 = x_1,\, X_2 = x_2) = 0.49, \\
\Pr(X_1 = x_2,\, X_2 = x_1) = 0.49, \\
\Pr(X_1 = x_2,\, X_2 = x_2) = 0.01.
\end{eqnarray}
Then each $X_i$ has marginal distribution: $\Pr(X_i = x_1) = 0.5$ and $\Pr(X_i = x_2) = 0.5$. Therefore, 
\begin{eqnarray}
\Pr(X_1 = x_i,\, X_2 = x_j) \neq \Pr(X_1 = x_i) \Pr(X_2 = x_j), 
\end{eqnarray}
$\forall x_i, x_j \in \mathcal{X}$ indicating $X_1$ and $X_2$ are not independent. 
\DEL{Hence the \emph{prior odds}
\begin{equation}
\frac{\Pr(X_i = x_1)}{\Pr(X_i = x_2)} = 1
\end{equation}
for each $i$, since each is equally likely to be $x_1$ or $x_2$.}

We let $\mathcal{M}_{\mathrm{EM}}(X_i)\in\{y_1,y_2\}$. The perturbation probabilities are given by 
\normalsize
\small
\begin{eqnarray}
\Pr\left(\mathcal{M}_{\mathrm{EM}}(X_i) = y_1 \mid X_i = x_1\right) = 0.72, \\ 
\Pr\left(\mathcal{M}_{\mathrm{EM}}(X_i) = y_2 \mid X_i = x_1\right) = 0.28, \\ 
\Pr\left(\mathcal{M}_{\mathrm{EM}}(X_i) = y_1 \mid X_i = x_2\right) = 0.28,
\\
\Pr\left(\mathcal{M}_{\mathrm{EM}}(X_i) = y_2 \mid X_i = x_2\right) = 0.72.
\end{eqnarray}
\normalsize
Finally, we set the privacy budget $\epsilon = 1$. 

\noindent \textbf{(1) The posterior by observing each individual perturbed record}: The \emph{posterior odds} given the observation $\mathcal{M}_{\mathrm{EM}}(X_i)=y_1$ is
\normalsize
\small
\begin{eqnarray}
\left|\ln\left(
  \frac{\Pr(X_i=x_1 \mid \mathcal{M}_{\mathrm{EM}}(X_i)=y_1)}
        {\Pr(X_i=x_2 \mid \mathcal{M}_{\mathrm{EM}}(X_i)=y_1)}
  \Big/
  \frac{\Pr(X_i=x_1)}{\Pr(X_i=x_2)}
\right)\right| 
&=& \left|\ln\left( \frac{\Pr(\mathcal{M}_{\mathrm{EM}}(X_i)=y_1  \mid X_i=x_1)}{\Pr(\mathcal{M}_{\mathrm{EM}}(X_i)=y_1  \mid X_i=x_2)}\right)\right| \\
&=& \left|\ln\left(\frac{0.72}{0.28}\right)\right| \\
&=& 0.9444 \\
&<&\epsilon. 
\end{eqnarray}
\normalsize
Similarly, the \emph{posterior odds} given the observation $\mathcal{M}_{\mathrm{EM}}(X_i)=y_2$ is
\normalsize
\small
\begin{eqnarray}
\left|\ln\left(
  \frac{\Pr(X_i=x_1 \mid \mathcal{M}_{\mathrm{EM}}(X_i)=y_2)}
        {\Pr(X_i=x_2 \mid \mathcal{M}_{\mathrm{EM}}(X_i)=y_2)}
  \Big/
  \frac{\Pr(X_i=x_1)}{\Pr(X_i=x_2)}
\right)\right| 
&=& \left|\ln\left( \frac{\Pr(\mathcal{M}_{\mathrm{EM}}(X_i)=y_2  \mid X_i=x_1)}{\Pr(\mathcal{M}_{\mathrm{EM}}(X_i)=y_2  \mid X_i=x_2)}\right)\right| \\
&=& \left|\ln\left(\frac{0.28}{0.72}\right)\right| \\
&=& 0.9444 \\
&<&\epsilon. 
\end{eqnarray}
\normalsize
\DEL{
\begin{equation}
\frac{\Pr(X_i=x_1)\,(1-p)}{\Pr(X_i=x_2)\,p} 
= \frac{0.5 \times 0.28}{0.5 \times 0.72} 
= \frac{0.28}{0.72} 
\approx 0.3889,
\end{equation}
so 
$\ln(0.3889)\approx -0.94 > -1.$}
indicating that $\left|\ln\left(
  \frac{\Pr(X_i=x_1 \mid \mathcal{M}_{\mathrm{EM}}(X_i)=y_j)}{\Pr(X_i=x_2 \mid \mathcal{M}_{\mathrm{EM}}(X_i)=y_j)}
  \Big/
  \frac{\Pr(X_i=x_1)}{\Pr(X_i=x_2)}
\right)\right|
\leq 1$ for each $X_i$ and $y_k$.

\noindent \textbf{(2) Posterior leakage give joint observation}: The \emph{posterior odds} given the joint observation $\mathcal{M}_{\mathrm{EM}}(X_1)=y_1, \mathcal{M}_{\mathrm{EM}}(X_2)=y_2$ is \DEL{Specifically, consider the following scenario:
\begin{eqnarray}
\mathcal{M}_{\mathrm{EM}}(X_1) = y_1,\quad \mathcal{M}_{\mathrm{EM}}(X_2) = y_2.  
\end{eqnarray}
Owing to the strong correlation $\Pr(X_1=X_2)=0.98$, observing $(y_1,y_2)$ may heavily favor one underlying configuration over another, causing
\begin{equation}
\left|\ln\left(
  \frac{\Pr(X_1=x_1 \mid \mathcal{M}_{\mathrm{EM}}(X_1)=y_1,\,\mathcal{M}_{\mathrm{EM}}(X_2)=y_2)}
        {\Pr(X_1=x_2 \mid \mathcal{M}_{\mathrm{EM}}(X_1)=y_1,\,\mathcal{M}_{\mathrm{EM}}(X_2)=y_2)}
  \Big/
  \frac{\Pr(X_1=x_1)}{\Pr(X_1=x_2)}
\right)\right|
\,>\,1.
\end{equation}
This arises from standard Bayesian updating on the \emph{joint} distribution, where the mismatch $(y_1,y_2)$ is less likely if both $X_1$ and $X_2$ truly match each other. }
\normalsize
\footnotesize
\begin{eqnarray}
&& \left|\ln\left(
  \frac{\Pr(X_1=x_1 \mid \mathcal{M}_{\mathrm{EM}}(X_1)=y_1,\,\mathcal{M}_{\mathrm{EM}}(X_2)=y_2)}
        {\Pr(X_1=x_2 \mid \mathcal{M}_{\mathrm{EM}}(X_1)=y_1,\,\mathcal{M}_{\mathrm{EM}}(X_2)=y_2)}
  \Big/
  \frac{\Pr(X_1=x_1)}{\Pr(X_1=x_2)}
\right)\right|\\
&=& \left|\ln\left(
  \frac{\Pr(X_1=x_1 \mid \mathcal{M}_{\mathrm{EM}}(X_1)=y_1,\,\mathcal{M}_{\mathrm{EM}}(X_2)=y_2)}
        {\Pr(X_1=x_2 \mid \mathcal{M}_{\mathrm{EM}}(X_1)=y_1,\,\mathcal{M}_{\mathrm{EM}}(X_2)=y_2)}
\Big/
  1\right)\right|\\
&=& \left|\ln\left(
  \frac{\Pr(X_1=x_1, \mathcal{M}_{\mathrm{EM}}(X_1)=y_1,\,\mathcal{M}_{\mathrm{EM}}(X_2)=y_2)}
        {\Pr(X_1=x_2, \mathcal{M}_{\mathrm{EM}}(X_1)=y_1,\,\mathcal{M}_{\mathrm{EM}}(X_2)=y_2)}
\right)\right|\\
&=& \left|\ln\left(
  \frac{\sum_{x\in \left\{x_1, x_2\right\}}\Pr(X_1=x_1, X_2= x, \mathcal{M}_{\mathrm{EM}}(X_1)=y_1,\,\mathcal{M}_{\mathrm{EM}}(X_2)=y_2)}
        {\sum_{x\in \left\{x_1, x_2\right\}}\Pr(X_1=x_2, X_2= x, \mathcal{M}_{\mathrm{EM}}(X_1)=y_1,\,\mathcal{M}_{\mathrm{EM}}(X_2)=y_2)}
\right)\right| \\ \nonumber 
&=& \left|\ln\left(
  \frac{\sum_{x\in \left\{x_1, x_2\right\}}\Pr(X_1=x_1, X_2= x) \Pr(\mathcal{M}_{\mathrm{EM}}(X_1)=y_1,\,\mathcal{M}_{\mathrm{EM}}(X_2)=y_2|X_1=x_1, X_2= x)}
        {\sum_{x\in \left\{x_1, x_2\right\}}\Pr(X_1=x_2, X_2= x)\Pr(\mathcal{M}_{\mathrm{EM}}(X_1)=y_1,\,\mathcal{M}_{\mathrm{EM}}(X_2)=y_2|X_1=x_2, X_2= x)}
\right)\right| \\ \nonumber 
&=& \left|\ln\left(
  \frac{\sum_{x\in \left\{x_1, x_2\right\}}\Pr(X_1=x_1, X_2= x) \Pr(\mathcal{M}_{\mathrm{EM}}(X_1)=y_1|X_1=x_1)\Pr(\mathcal{M}_{\mathrm{EM}}(X_2)=y_2|X_2= x)}
        {\sum_{x\in \left\{x_1, x_2\right\}}\Pr(X_1=x_2, X_2= x)\Pr(\mathcal{M}_{\mathrm{EM}}(X_1)=y_1|X_1=x_2)\Pr(\mathcal{M}_{\mathrm{EM}}(X_2)=y_2|X_2= x)}
\right)\right| \\ 
&=& \left|\ln\left(
  \frac{0.01\times 0.72\times 0.28+0.49\times 0.72\times 0.72}
        {0.49\times 0.28\times 0.28 + 0.01\times 0.28\times 0.72}
\right)\right| \\
&=& 1.8456 \\
&>& \epsilon. 
\end{eqnarray}
\normalsize
\small
\begin{align}
&\left|\ln\left(
  \frac{\Pr(X_1 = x_1 \mid \mathcal{M}_{\mathrm{EM}}(X_1) = y_1,\, \mathcal{M}_{\mathrm{EM}}(X_2) = y_2)}
       {\Pr(X_1 = x_2 \mid \mathcal{M}_{\mathrm{EM}}(X_1) = y_1,\, \mathcal{M}_{\mathrm{EM}}(X_2) = y_2)}
  \middle/
  \frac{\Pr(X_1 = x_1)}{\Pr(X_1 = x_2)}
\right)\right| \notag \\
&= \left|\ln\left(
  \frac{\Pr(X_1 = x_1 \mid \mathcal{M}_{\mathrm{EM}}(X_1) = y_1,\, \mathcal{M}_{\mathrm{EM}}(X_2) = y_2)}
       {\Pr(X_1 = x_2 \mid \mathcal{M}_{\mathrm{EM}}(X_1) = y_1,\, \mathcal{M}_{\mathrm{EM}}(X_2) = y_2)}
\right)\right| \notag \\
&= \left|\ln\left(
  \frac{\Pr(X_1 = x_1,\, \mathcal{M}_{\mathrm{EM}}(X_1) = y_1,\, \mathcal{M}_{\mathrm{EM}}(X_2) = y_2)}
       {\Pr(X_1 = x_2,\, \mathcal{M}_{\mathrm{EM}}(X_1) = y_1,\, \mathcal{M}_{\mathrm{EM}}(X_2) = y_2)}
\right)\right| \notag \\
&= \left|\ln\left(
  \frac{\sum_{x \in \{x_1, x_2\}} \Pr(X_1 = x_1, X_2 = x,\, \mathcal{M}_{\mathrm{EM}}(X_1) = y_1,\, \mathcal{M}_{\mathrm{EM}}(X_2) = y_2)}
       {\sum_{x \in \{x_1, x_2\}} \Pr(X_1 = x_2, X_2 = x,\, \mathcal{M}_{\mathrm{EM}}(X_1) = y_1,\, \mathcal{M}_{\mathrm{EM}}(X_2) = y_2)}
\right)\right| \notag \\
&= \left|\ln\left(
  \frac{\sum_{x \in \{x_1, x_2\}} \Pr(X_1 = x_1, X_2 = x) \Pr(\mathcal{M}_{\mathrm{EM}}(X_1) = y_1,\, \mathcal{M}_{\mathrm{EM}}(X_2) = y_2 \mid X_1 = x_1, X_2 = x)}
       {\sum_{x \in \{x_1, x_2\}} \Pr(X_1 = x_2, X_2 = x) \Pr(\mathcal{M}_{\mathrm{EM}}(X_1) = y_1,\, \mathcal{M}_{\mathrm{EM}}(X_2) = y_2 \mid X_1 = x_2, X_2 = x)}
\right)\right| \notag \\
&= \left|\ln\left(
  \frac{\sum_{x \in \{x_1, x_2\}} \Pr(X_1 = x_1, X_2 = x)\, \Pr(\mathcal{M}_{\mathrm{EM}}(X_1) = y_1 \mid X_1 = x_1)\, \Pr(\mathcal{M}_{\mathrm{EM}}(X_2) = y_2 \mid X_2 = x)}
       {\sum_{x \in \{x_1, x_2\}} \Pr(X_1 = x_2, X_2 = x)\, \Pr(\mathcal{M}_{\mathrm{EM}}(X_1) = y_1 \mid X_1 = x_2)\, \Pr(\mathcal{M}_{\mathrm{EM}}(X_2) = y_2 \mid X_2 = x)}
\right)\right| \notag \\
&= \left|\ln\left(
  \frac{0.01 \cdot 0.72 \cdot 0.28 + 0.49 \cdot 0.72 \cdot 0.72}
       {0.49 \cdot 0.28 \cdot 0.28 + 0.01 \cdot 0.28 \cdot 0.72}
\right)\right| \notag \\
&= \left|\ln\left(\frac{0.256032}{0.040432}\right)\right| \notag \\
&= 1.8456 > \epsilon
\end{align}
\normalsize
\DEL{
\subsubsection{Example 2  - Inference of a sequence of word embeddings using Hidden Markov Model}

Experiments are conducted using the AG News dataset\footnote{The dataset is available at \url{https://huggingface.co/datasets/fancyzhx/ag_news}} \cite{Zhang2015CharacterlevelCN}, including 127,600 sentences, 60,376 words, and 5448 PII words. 
We utilize the traditional statistical model, the \emph{Hidden Markov Model (HMM)}. In a Hidden Markov Model (HMM), the secret words and the perturbed words are treated as the \emph{hidden states} and the \emph{observable states}, respectively, and each sentence is modeled as a sequence of words that follows the Markov process. Based on this formulation, the Viterbi algorithm~\cite{forney1973viterbi} can be applied to decode the most likely sequence of hidden states corresponding to the perturbed words, given the \emph{transition probabilities} between adjacent hidden states, and the \emph{emission probabilities}, which specifies the probability distributions of perturbed words given each real word. 

Here, the noise-adding strategy actually serves as the emission probabilities ({\qiu mapping the original embedding to the perturbed embedding}). To estimate the \emph{transition probabilities}, we count the occurrences of each adjacent word pair in the corpus, compute each $\Pr(X_{l} = x_i|X_{l-1} = x_j)$ transition probability by 
\begin{equation}
\Pr(X_{l} = x_i|X_{l-1} = x_j) = \frac{\mbox{count of $(w_i, w_j)$ pairs}}{\mbox{count of $w_i$}}. 
\end{equation}
{\qiu Since HMM can be only applied in small-scale dataset, we could still collect small-scale results and display the results in Fig. \ref{fig:MBItime}}. }

\subsection{Per-User Accounting As a Complementary Mitigation}
\label{subsec:per-user-accounting}
We note that when records can be cleanly grouped by user and the mechanism is explicitly designed with per-user accounting, a per-user privacy budget is a natural and effective way to mitigate composition across correlated records. In the classical DP setting, this corresponds to treating each user as the “unit of protection,” ensuring that all contributions from the same user share a fixed budget.

However, \emph{user grouping is not always known or reliable in the kinds of applications we target.} For example, in many text embedding-based systems, the mechanism does not have a trusted user identifier for each record. Posts may come from multiple accounts controlled by the same person, or a single account may refer to several different individuals or secrets. Determining that a set of words, embeddings, or snippets describe the same person or the same underlying secret is itself part of the adversarial inference task, for example, linking posts across accounts, or linking mentions of the same individual across different documents. In such settings, a per-user budget implicitly assumes that this partition into users is known and enforced by the defender, whereas our threat model explicitly allows the adversary to aggregate any correlated releases they can link. This issue is also reflected in our case study. The public datasets we use do not contain user identifiers or reliable group labels that could serve as a ground truth for “per-user” segmentation. Constructing a per-user baseline would therefore require introducing additional, task-specific grouping heuristics (e.g., clustering by text similarity), which are orthogonal to our core threat model. Since our goal is to study joint leakage over arbitrary correlated secrets, without assuming that the defender knows how records should be grouped, we deliberately adopt a user-agnostic formulation of mPL and AmPL.

On the other hand, we see per-user budgeting as a complementary mitigation, not as an alternative to our framework. In applications where reliable user identifiers and grouping assumptions are available, our mechanisms and audits can be combined with per-user accounting: the system can enforce a per-user privacy budget while our framework still evaluates whether correlated secrets within or across those groups violate the intended metric privacy guarantees.

\subsection{Perturbation Space: Embeddings vs.\ Text.}
\label{subsec:discuss:text}
Our current implementation instantiates AmPL by perturbing word embeddings, because embeddings are a common interface in modern NLP systems (e.g., for search, retrieval, and recommendation). However, the \emph{framework} itself is agnostic to whether perturbations are applied in embedding space or directly on text. 

Formally, let $\mathcal{X}$ denote the space of original text (words or sentences) and $\mathcal{Y}$ the space of perturbed outputs (which may be text or embeddings). Any defense mechanism that specifies a randomized map $\mathcal{M}: \mathcal{X} \to \mathsf{Distr}(\mathcal{Y}), \quad x \mapsto \mathcal{M}(\cdot \mid x)$ induces a perturbation method and a corresponding posterior $\Pr[X \mid Y]$. Our mPL definition and learned-adversary attack are defined purely in terms of this posterior, and therefore apply unchanged to \emph{any} such stochastic channel.

In particular, defenses that act directly on text, such as token deletion, insertion of noise characters, or synonym substitution, still define a stochastic mapping from original text $x$ to perturbed text $y$. An attacker observing $y$ can then process it through the same embedding model (or any other feature extractor) and train a predictor exactly as in our experiments. From the perspective of mPL and the learned adversary, the only requirement is that $(X,Y)$ are jointly distributed via some randomized mechanism; the choice of operating in embedding space or text space is an implementation detail of the defense, not a limitation of the framework.

\subsection{PBmPL After Post-Processing}
\DEL{
\begin{equation}
\mathrm{mPL}_{f\circ \mathcal{M}}\!\bigl((x_i,x_j), f(y)\bigr)
=
\frac{1}{d_{x_i,x_j}}
\left|
\ln
\frac{\Pr\!\left(X=x_i \mid (f\circ\mathcal{M})(X)=f(y)\right)}
     {\Pr\!\left(X=x_j \mid (f\circ\mathcal{M})(X)=f(y)\right)}
-
\ln
\frac{\Pr(X=x_i)}{\Pr(X=x_j)}
\right|.
\label{eq:remapped-mpl-fy}
\end{equation}
where 
\begin{equation}
{\rd f(y)} = \arg\min_{y' \in \mathcal{Y}} \sum_{\ell=1}^L \sum_{x \in \mathcal{X}_{\ell}} \underbrace{\Pr\left[X = x \,\middle|\, \mathcal{M}_{\ell}(x;\alpha_{\ell}) = y\right]}_{\text{posterior of } x \text{ given perturbed record } y} c_{x, y'}.
\end{equation}}

Let $z = f(y)$ be the output of remapping given the input $y$. The posterior leakage after post-processing is defined by  
\begin{equation}
\mathrm{mPL}_{f\circ \mathcal{M}}\!\bigl((x_i,x_j), z\bigr)
=
\frac{1}{d_{x_i,x_j}}
\left|
\ln
\frac{\Pr\!\left(X=x_i \mid Z=z\right)}
     {\Pr\!\left(X=x_j \mid Z=z\right)}
-
\ln
\frac{\Pr(X=x_i)}{\Pr(X=x_j)}
\right|.
\label{eq:remapped-mpl-fy}
\end{equation}
where 
\begin{equation}
\Pr(X = x \mid Z = z) = \frac{\sum_{y: f(y)=z}\Pr(X = x, Y = y)}{\sum_{y:f(y)=z} \Pr(Y = y)} = \frac{\sum_{y:f(y)=z} \Pr(X =x \mid Y = y)\Pr(Y = y)}{\sum_{y: f(y)=z} \Pr(Y = y)}.
\end{equation}
Here, we need to know the marginal probability $\Pr(Y = y)$, which can be calculated by 
\begin{equation}
    \Pr(Y = y) = \sum_{x}\Pr(Y = y|X= x) \Pr(X = x)
\end{equation}

\section{Omitted Proofs}
\label{app:proofs}
\subsection{Proof of Proposition \ref{prop:post-processing} (Post-processing for Bounded mPL)}
\label{subsec:proof:post-processing}
\begin{reproposition}
\label{reprop:post-processing}
\textbf{(Post-processing for bounded mPL)}  
Let $\mathcal{M}:\mathcal{X}\to \mathcal{Y}$ be a randomized mechanism that satisfies the bounded joint mPL constraint. 
For any (possibly randomized) function $f:\mathcal{Y}\to \mathcal{Z}$, define the post-processed mechanism $
(f \circ \mathcal{M})(x) \triangleq f(\mathcal{M}(x)), ~\forall x\in\mathcal{X}$, where $\mathcal{Z}=\mathrm{Range}(f\circ \mathcal{M})$. 
Then $f \circ \mathcal{M}$ also satisfies the bounded joint mPL constraint:
\vspace{-0.05in}
\begin{equation}
\small
\sup_{x_i\ne x_j}\ \sup_{\mathbf{z}\in \mathcal{Z}^L}\ \mathrm{mPL}_{f \circ \mathcal{M}}(x_i, x_j, \mathbf{z}) \leq \epsilon.
\vspace{-0.05in}
\end{equation}
\end{reproposition}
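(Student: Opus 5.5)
The plan is to reduce the statement to a convex-combination argument on posterior odds. Fix $\ell$, a pair $x_i\neq x_j$, and an observation $\mathbf{z}\in\mathcal{Z}^L$ with positive probability. Set $\mathbf{Y}=(\mathcal{M}(X_1),\ldots,\mathcal{M}(X_L))$ and $\mathbf{Z}=(f(Y^1),\ldots,f(Y^L))$, where $f$ is applied with independent internal randomness to each coordinate. The key structural fact is that $X_\ell \to \mathbf{Y}\to \mathbf{Z}$ is a Markov chain. This holds because the randomness of $f$ is independent of the secrets given $\mathbf{Y}$, i.e., $\Pr(\mathbf{Z}=\mathbf{z}\mid \mathbf{Y}=\mathbf{y},X_\ell=x)=K(\mathbf{z}\mid\mathbf{y}):=\prod_{m}\Pr(f(y^m)=z^m)$. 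For deterministic $f$, $K$ is simply an indicator of the preimage set $f^{-1}(\mathbf{z})$, as in the remapping computation preceding this proof.

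Next I would write the posterior after post-processing as a mixture of posteriors before post-processing:
\[
\Pr(X_\ell=x\mid \mathbf{Z}=\mathbf{z})=\sum_{\mathbf{y}}w_{\mathbf{z}}(\mathbf{y})\,\Pr(X_\ell=x\mid \mathbf{Y}=\mathbf{y}),\qquad w_{\mathbf{z}}(\mathbf{y})=\frac{K(\mathbf{z}\mid\mathbf{y})\Pr(\mathbf{Y}=\mathbf{y})}{\Pr(\mathbf{Z}=\mathbf{z})}.
\]
The weights $w_{\mathbf{z}}(\mathbf{y})$ are nonnegative, sum to one, and do not depend on $x$. The bounded-mPL hypothesis on $\mathcal{M}$ says that for every $\mathbf{y}$ with $\Pr(\mathbf{Y}=\mathbf{y})>0$,
\[
e^{-\epsilon d_{x_i,x_j}}\,\rho_{ij}\;\le\;\frac{\Pr(X_\ell=x_i\mid\mathbf{Y}=\mathbf{y})}{\Pr(X_\ell=x_j\mid\mathbf{Y}=\mathbf{y})}\;\le\; e^{\epsilon d_{x_i,x_j}}\,\rho_{ij},
\]
where $\rho_{ij}=\Pr(X_\ell=x_i)/\Pr(X_\ell=x_j)$ is the prior odds, which is unchanged by post-processing.

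The core step is the mediant inequality: if $a_k\le c\,b_k$ for all $k$ with nonnegative weights, then $\sum_k w_k a_k\le c\sum_k w_k b_k$, and the same holds for the lower bound. Applying this with $a_{\mathbf{y}}=\Pr(X_\ell=x_i\mid\mathbf{y})$ and $b_{\mathbf{y}}=\Pr(X_\ell=x_j\mid\mathbf{y})$ shows that the posterior odds given $\mathbf{z}$ lie in the same interval $[e^{-\epsilon d}\rho_{ij},\,e^{\epsilon d}\rho_{ij}]$. Taking logs, subtracting $\ln\rho_{ij}$, and dividing by $d_{x_i,x_j}$ gives $\mathrm{mPL}_{f\circ\mathcal{M}}(x_i,x_j,\mathbf{z})\le\epsilon$. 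Taking the supremum over $x_i\neq x_j$ and over $\mathbf{z}$ then completes the proof.

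I expect the main obstacle to be making the Markov property and the zero-probability cases precise, rather than the inequality itself. Concretely, three points need care. First, $f$'s randomness must not depend on $\mathbf{x}$; this is implicit in ``post-processing'' and I would state it explicitly, for instance via the $\tilde{\mathcal{M}}$-style auxiliary uniform variable. Second, observations $\mathbf{y}$ with zero probability carry zero weight and so can be dropped from the sum. Third, if $\Pr(X_\ell=x_j\mid\mathbf{Y}=\mathbf{y})=0$ for some $\mathbf{y}$, the odds are undefined. In that case the two-sided bound forces both posteriors to vanish together whenever the priors are positive, so such $\mathbf{y}$ contribute zero to both numerator and denominator and the mediant argument remains valid.
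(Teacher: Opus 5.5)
Your proposal is correct and follows essentially the paper's route. The paper rewrites the post-processed posterior odds as the ratio of $\Pr(X_\ell=x_i,\mathbf{Y}\in f^{-1}(\mathbf{z}))$ to $\Pr(X_\ell=x_j,\mathbf{Y}\in f^{-1}(\mathbf{z}))$ and asserts the bound, which is your mixture-plus-mediant argument with weights supported on the preimage; you additionally make explicit what that one-line derivation leaves implicit, namely the mediant step, the lower half of the absolute-value bound, randomized $f$ via the kernel $K$, and the zero-probability conventions.
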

\begin{proof} Let $f^{-1}(\mathbf{z})=\{\mathbf{y}: f(\mathbf{y})=\mathbf{z}\}$ denote the preimage of $z$. Fix any pair $x_i, x_j \in \mathcal{X}$ and any $\mathbf{y}\in \mathcal{Y}^L$, we have
\normalsize
\small 
\begin{eqnarray}
&& \mathrm{mPL}_{f \circ \mathcal{M}}(x_i, x_j, \mathbf{y}) \\
&=& \ln \left(\frac{\Pr(X_{\ell} = x_i|\left\{f \circ \mathcal{M}(X_1), \ldots, f \circ \mathcal{M}(X_L)\right\}= \mathbf{z})}{\Pr(X_{\ell} = x_j|\left\{f \circ \mathcal{M}(X_1), \ldots, f \circ \mathcal{M}(X_L)\right\}= \mathbf{z})}\bigg/ \frac{\Pr(X_{\ell} = x_i)}{\Pr(X_{\ell} = x_j)}\right)  \\ 
&=& \ln \left(\frac{\Pr(X_{\ell} = x_i, \{\mathcal{M}(X_1), \ldots, \mathcal{M}(X_L)\}= f^{-1}(\mathbf{z}))}{\Pr(X_{\ell} = x_j, \left\{\mathcal{M}(X_1), \ldots, \mathcal{M}(X_L)\right\}= f^{-1}(\mathbf{z}))}\bigg/ \frac{\Pr(X_{\ell} = x_i)}{\Pr(X_{\ell} = x_j)}\right) \\
&\leq& d_{x_i,x_j} \epsilon. 
\end{eqnarray}
\normalsize
\end{proof}

\subsection{Proof of Proposition \ref{prop:single} (Single-Observation Equivalence of mPL and mDP)}
\label{subsec:prop:single:proof}
\begin{reproposition}[Single-observation equivalence of mPL and mDP]\label{reprop:single}
Let $\mathcal{M}$ be a perturbation mechanism on a metric secret space $(\mathcal{X},d)$.
Define the single-observation mPL for a pair $(x_i,x_j)$ and observation $y$ by
\begin{equation}\label{eq:mPL_single}
\small \mathrm{mPL}_{\mathcal{M}}((x_i,x_j),y)
=
\frac{1}{d_{x_i,x_j}}
\left|
\ln\frac{\Pr(X=x_i\mid \mathcal{M}(X)=y)}{\Pr(X=x_j\mid \mathcal{M}(X)=y)}
-
\ln\frac{\Pr(X=x_i)}{\Pr(X=x_j)}
\right|.
\end{equation}
For any $\epsilon\ge 0$, $\mathcal{M}$ satisfies $(\epsilon,d)$-mDP \emph{if and only if} the single-observation mPL bound holds, i.e., $
\sup_{x_i\ne x_j} \sup_{y\in \mathcal{Y}} \mathrm{mPL}_{\mathcal{M}}((x_i,x_j),y)\leq\epsilon$. 

\end{reproposition}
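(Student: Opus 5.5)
The plan is to reduce the mPL expression to a likelihood ratio via Bayes' rule, Eq.~\eqref{eq:posterior}, after which both directions of the equivalence are nearly immediate. Fix $x_i\neq x_j$ and $y\in\mathcal{Y}$, and assume for now that the prior puts positive mass on both $x_i$ and $x_j$ and that $\Pr(\mathcal{M}(X)=y)>0$. Then the normalizing constant $\sum_{x'}\Pr(\mathcal{M}(X)=y\mid X=x')\Pr(X=x')$ is common to numerator and denominator of the posterior ratio. It cancels, giving
\begin{equation*}
\frac{\Pr(X=x_i\mid \mathcal{M}(X)=y)}{\Pr(X=x_j\mid \mathcal{M}(X)=y)}
=\frac{\Pr[\mathcal{M}(x_i)=y]}{\Pr[\mathcal{M}(x_j)=y]}\cdot\frac{\Pr(X=x_i)}{\Pr(X=x_j)}.
\end{equation*}
Taking logarithms and subtracting the log prior odds, the prior terms cancel. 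This yields the identity
\begin{equation*}
\mathrm{mPL}_{\mathcal{M}}((x_i,x_j),y)=\frac{1}{d_{x_i,x_j}}\left|\ln\frac{\Pr[\mathcal{M}(x_i)=y]}{\Pr[\mathcal{M}(x_j)=y]}\right|,
\end{equation*}
which does not depend on the prior at all.

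\emph{($\Rightarrow$)} Suppose $(\epsilon,d)$-mDP holds. Applying Eq.~\eqref{eq:mDP} to the ordered pair $(x_i,x_j)$ bounds the log ratio above by $\epsilon d_{x_i,x_j}$. Applying it to the swapped pair $(x_j,x_i)$ bounds the negative of the log ratio by $\epsilon d_{x_i,x_j}$, using symmetry of $d$. Together these bound the absolute value, so the identity gives $\mathrm{mPL}\le\epsilon$ for every pair and every $y$.

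\emph{($\Leftarrow$)} Suppose the mPL bound holds. The identity gives $|\ln(\Pr[\mathcal{M}(x_i)=y]/\Pr[\mathcal{M}(x_j)=y])|\le\epsilon d_{x_i,x_j}$. Dropping the absolute value yields Eq.~\eqref{eq:mDP} for each ordered pair, and taking suprema over pairs and $y$ finishes this direction.

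The main obstacle is degenerate cases rather than algebra. The first is $x$ with zero prior mass. There I would either assume a full-support prior, which is implicit since mPL divides by prior odds, or note that the identity holds for any full-support prior, so quantifying over such priors suffices. The second is $y$ with $\Pr[\mathcal{M}(x)=y]=0$ for some but not all $x$. There both sides are infinite simultaneously under the convention $\ln(a/0)=\infty$, so the equivalence persists. Outputs with zero probability under every input are excluded, since then the posterior is undefined and the mDP ratio is $0/0$; I would state explicitly that $\mathcal{Y}$ is restricted to the support.
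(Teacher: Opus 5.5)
Your proposal is correct and follows essentially the same route as the paper: Bayes' rule cancels the normalizer, the prior odds cancel, and mPL reduces to $\frac{1}{d_{x_i,x_j}}\bigl|\ln\frac{\Pr[\mathcal{M}(x_i)=y]}{\Pr[\mathcal{M}(x_j)=y]}\bigr|$, after which both directions are immediate. Your version is somewhat more careful than the paper's. The paper writes a pointwise $\Leftrightarrow$ between the absolute-value mPL bound and the one-sided mDP bound, which only holds after quantifying over both orderings of the pair and using symmetry of $d$ — the swap you make explicit — and it does not address the zero-probability edge cases you flag.
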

\begin{proof}
Fix any pair $\Pr(X=x_i),\Pr(X=x_j)>0$. By Bayes’ rule,
\normalsize
\small 
\begin{eqnarray}
&& \frac{\Pr[X=x_i\,|\,\mathcal{M}(X)=y]}{\Pr[X=x_j\,|\,\mathcal{M}(X)=y]}
=
\frac{\Pr[\mathcal{M}(X)=y\,|\,X=x_i]}{\Pr[\mathcal{M}(X)=y\,|\,X=x_j]}\cdot \frac{\Pr(X=x_i)}{\Pr(X=x_j)}. \\
&\Leftrightarrow& 
\ln\frac{\Pr[X=x_i\,|\,\mathcal{M}(X)=y]}{\Pr[X=x_j\,|\,\mathcal{M}(X)=y]}
-
\ln\frac{\Pr(X=x_i)}{\Pr(X=x_j)}
=
\ln\frac{\Pr[\mathcal{M}(X)=y\,|\,X=x_i]}{\Pr[\mathcal{M}(X)=y\,|\,X=x_j]}.
\end{eqnarray}
\normalsize
Therefore 
\normalsize
\small 
\begin{eqnarray}
&& \underbrace{\frac{1}{d_{x_i,x_j}}
\left|
\ln\frac{\Pr(X=x_i\mid \mathcal{M}(X)=y)}{\Pr(X=x_j\mid \mathcal{M}(X)=y)}
-
\ln\frac{\Pr(X=x_i)}{\Pr(X=x_j)} \right| \leq \epsilon}_{\mbox{Pointwise form of bounded mPL constraint}} \\
&\Leftrightarrow& \underbrace{\ln\frac{\Pr[\mathcal{M}(X)=y\,|\,X=x_i]}{\Pr[\mathcal{M}(X)=y\,|\,X=x_j]} \leq \epsilon\, d_{x_i,x_j}}_{\mbox{Pointwise form of mDP}}.
\end{eqnarray}
\normalsize
which concludes the proof. 
\end{proof}

\subsection{Proof of Proposition \ref{prop:independence} (Independent-Observation Equivalence of mPL and mDP)}
\label{subsec:prop:independence:proof}
\begin{reproposition}[Independent-observation equivalence of mPL and mDP]
\label{reprop:independence}
If the $L$ secret words $X_1, \ldots, X_L$ are independently distributed, then ensuring $\sup_{x_i\ne x_j} \sup_{y^{\ell}\in \mathcal{Y}} \mathrm{mPL}_{\mathcal{M}}((x_i,x_j),y^{\ell})\leq\epsilon$ for each $y^{\ell}$ ($\ell = 1,..., L$) is sufficient to guarantee 
$\sup_{x_i\ne x_j} \sup_{\mathbf{y}\in \mathcal{Y}^L } \mathrm{mPL}_{\mathcal{M}}(x_i, x_j, \mathbf{y}) \leq \epsilon$.  
\end{reproposition}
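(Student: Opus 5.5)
The plan is to reduce the joint posterior odds for $X_\ell$ to the single-observation odds via independence, and then invoke Proposition~\ref{prop:single}. Fix $\ell$, a pair $x_i\neq x_j$ with positive prior mass, and a joint observation $\mathbf{y}=(y^1,\ldots,y^L)$. By Bayes' rule,
\begin{equation*}
\frac{\Pr(X_\ell=x_i\mid \mathbf{Y}=\mathbf{y})}{\Pr(X_\ell=x_j\mid \mathbf{Y}=\mathbf{y})}
=\frac{\Pr(X_\ell=x_i,\mathbf{Y}=\mathbf{y})}{\Pr(X_\ell=x_j,\mathbf{Y}=\mathbf{y})}.
\end{equation*}
I will then expand the numerator as $\sum_{\mathbf{x}_{-\ell}}\Pr(X_\ell=x_i,\mathbf{X}_{-\ell}=\mathbf{x}_{-\ell})\prod_{k}\Pr(\mathcal{M}(x^k)=y^k)$. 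For this I use the representation $\mathcal{M}(x)=\tilde{\mathcal{M}}(x,Z_k)$ with independent auxiliary $Z_k\sim\mathrm{Uniform}(0,1)$, drawn independently of the secrets. This makes precise that, conditional on $\mathbf{X}=\mathbf{x}$, the outputs are independent, and that $Y^k$ depends on $\mathbf{X}$ only through $X_k$.

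Next I use independence of $X_1,\ldots,X_L$. The joint prior factors as $\Pr(X_\ell=x_i)\prod_{k\neq\ell}\Pr(X_k=x^k)$. Pulling out the $\ell$-th factor $\Pr(X_\ell=x_i)\Pr(\mathcal{M}(x_i)=y^\ell)$, the remaining sum over $\mathbf{x}_{-\ell}$ equals $\prod_{k\neq\ell}\Pr(Y^k=y^k)$. This quantity is identical in numerator and denominator, so it cancels. Equivalently, $(X_\ell,Y^\ell)$ is independent of $(Y^k)_{k\neq\ell}$, so conditioning on the other outputs does not change the posterior of $X_\ell$. Hence
\begin{equation*}
\ln\frac{\Pr(X_\ell=x_i\mid \mathbf{y})}{\Pr(X_\ell=x_j\mid \mathbf{y})}-\ln\frac{\Pr(X_\ell=x_i)}{\Pr(X_\ell=x_j)}
=\ln\frac{\Pr(\mathcal{M}(x_i)=y^\ell)}{\Pr(\mathcal{M}(x_j)=y^\ell)},
\end{equation*}
which means $\mathrm{mPL}_{\mathcal{M}}(x_i,x_j,\mathbf{y})=\mathrm{mPL}_{\mathcal{M}}((x_i,x_j),y^\ell)$ exactly.

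Finally, by hypothesis this single-observation quantity is at most $\epsilon$ for every pair and every $y^\ell$; equivalently, by Proposition~\ref{prop:single}, $\mathcal{M}$ is $(\epsilon,d)$-mDP and the log-likelihood ratio is at most $\epsilon d_{x_i,x_j}$ in absolute value. Taking suprema over $x_i\neq x_j$ and $\mathbf{y}\in\mathcal{Y}^L$ gives the claim.

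The main obstacle is the factorization step. The proof must justify carefully that the joint likelihood factorizes, which is where the $\tilde{\mathcal{M}}$ representation with independent $Z_k$ is used. It must also handle degenerate cases: zero-probability outputs $\mathbf{y}$ should be excluded, and secrets with zero prior should be excluded, as in Proposition~\ref{prop:single}.
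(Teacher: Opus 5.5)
Your proposal is correct and follows essentially the same route as the paper. Both arguments use the $\tilde{\mathcal{M}}$ representation and the independence of the secrets to factor the other outputs $(Y^k)_{k\neq\ell}$ out of the joint probability so that they cancel, which gives $\Pr(X_\ell=x\mid\mathbf{y})=\Pr(X_\ell=x\mid Y^\ell=y^\ell)$ and hence joint mPL equal to single-observation mPL. Your explicit sum over $\mathbf{x}_{-\ell}$ with independent $Z_k$ is slightly more careful than the paper's write-up, which reuses a single $Z$ in its notation and invokes only pairwise independence.
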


\begin{proof}
First, if the random variables $(X_{\ell}, Z_l)$ are independent with $(X_t, Z_t)$, and $\tilde{\mathcal{M}}$ is a mearable function, then $\tilde{\mathcal{M}}(X_{\ell}, Z_l)$ and $\tilde{\mathcal{M}}(X_t, Z_t)$ are independent, as measurable functions preserve independence in probability theory [reference]. 

\DEL{
{\bl [Detailed proof (may be removed)] To check independence of $\tilde{\mathcal{M}}(X_{\ell}, Z_l)$ and $\tilde{\mathcal{M}}(X_t, Z_t)$, consider any subsets $\mathcal{Y}', \mathcal{Y}'' \subseteq \mathcal{Y}$. We have
\begin{eqnarray}
&& \Pr\Bigl( \tilde{\mathcal{M}}(X_{\ell}, Z_l) \in \mathcal{Y}', \tilde{\mathcal{M}}(X_t, Z_t) \in \mathcal{Y}'' \Bigr) \\
&=& \Pr\Bigl( (X_{\ell},Z_l)\in \mathcal{A}_{\mathcal{Y}'},(X_t,Z_t)\in \mathcal{A}_{\mathcal{Y}''} \Bigr)
\\
&& ~\text{where } \mathcal{A}_{\mathcal{Y}'}=\{(x,z)\colon \tilde{\mathcal{M}}(x,z)\in \mathcal{Y}'\}\\
&& ~\mathcal{A}_{\mathcal{Y}''}=\{(x,z)\colon \tilde{\mathcal{M}}(x,z)\in \mathcal{Y}''\} \\
&=&\Pr\bigl( (X_{\ell},Z_l)\in \mathcal{A}_{\mathcal{Y}'} \bigr) \times
\Pr\bigl( (X_t,Z_t)\in \mathcal{A}_{\mathcal{Y}''} \bigr)\\
&=& \Pr\bigl( \tilde{\mathcal{M}}(X_{\ell}, Z_l)\in \mathcal{Y}' \bigr)\times 
\Pr\bigl( \tilde{\mathcal{M}}(X_t, Z_t)\in \mathcal{Y}'' \bigr). 
\end{eqnarray}
Since this factorization holds for all measurable sets $\mathcal{Y}', \mathcal{Y}'' \subseteq \mathcal{Y}$, it follows that
$\tilde{\mathcal{M}}(X_{\ell}, Z_l)$ is independent of $\tilde{\mathcal{M}}(X_t, Z_t)$.}

\DEL{First, we aim to construct a deterministic function $\tilde{\mathcal{M}}$, of which the inputs are $X_{\ell}$ and $Z$ 
\begin{equation}
\tilde{\mathcal{M}}(X_{\ell}, Z) = \mathcal{M}_{\mathrm{EM}}(X_{\ell}). 
\end{equation}
where $Z$ is an auxiliary random variable following the uniform distribution $Z\sim \mathrm{Uniform}(0, 1)$ to capture the randomness of $\mathcal{M}_{\mathrm{EM}}$. 
Note that in EM (as defined in Eq. (x)), given the value of $X$, the probability of selecting $y_k$ is 
\begin{equation}
\label{eq:exp_mech}
\Pr\left[\mathcal{M}_{\mathrm{EM}}(X_{\ell}) = y_k\right] = \frac{e^{-\frac{1}{2}\epsilon d(X, y_k)}}{\sum_{y_j \in \mathcal{Y}} e^{-\frac{1}{2}\epsilon d(X, y_j)}}. 
\end{equation}
We then define the cumulative sums: 
\begin{eqnarray}
F_0(X_{\ell}) &=& 0, \\
F_k(X_{\ell}) &=& \sum_{v=1}^k \frac{e^{-\frac{1}{2}\epsilon d(X, y_v)}}{\sum_{y_j \in \mathcal{Y}} e^{-\frac{1}{2}\epsilon d(X, y_j)}}, ~k = 1, ..., K.
\end{eqnarray}
$\tilde{\mathcal{M}}$ is defined by
\begin{equation}
\tilde{\mathcal{M}}(X_{\ell}, Z) 
=
\sum_{k=1}^{K}
y_{k}\,\mathbf{1}_{[F_{k-1}(X_{\ell}), F_k(X_{\ell}))}\left(Z\right).
\end{equation}
where $\mathbf{1}_{\mathcal{U}}\left(Z\right)$ is the indicator function, i.e., $\mathbf{1}_{\mathcal{U}}(Z)=1$ if $Z\in \mathcal{U}$, and $\mathbf{1}_{\mathcal{U}}(Z)=0$ otherwise.}

Similarly, we can prove that $X_{\ell}$ is independent of $\tilde{\mathcal{M}}(X_t, Z_t)$. }

Then, we can obtain
\normalsize
\small
\begin{eqnarray}
&&\Pr\left[X_{\ell} = x \mid \left\{\tilde{\mathcal{M}}(X_1,Z), \ldots, \tilde{\mathcal{M}}(X_L,Z)\right\}= \mathbf{y}\right]  \\
&=& \frac{\Pr\left[X_{\ell} = x, \left\{\tilde{\mathcal{M}}(X_1,Z), \ldots, \tilde{\mathcal{M}}(X_L,Z)\right\}= \mathbf{y}\right]}{\Pr\left[\left\{\tilde{\mathcal{M}}(X_1,Z), \ldots, \tilde{\mathcal{M}}(X_L,Z)\right\}= \mathbf{y}\right]} \\ 
&=& \frac{\prod_{t=1, t\neq \ell}^L\Pr\left[\tilde{\mathcal{M}}(X_t,Z) = y_t\right]\Pr\left[X_{\ell} = x, \tilde{\mathcal{M}}(X_{\ell},Z) = y_\ell\right]}{\prod_{t=1}^L\Pr\left[\tilde{\mathcal{M}}(X_t,Z) = y_t\right]} \\
&=& \Pr\left[X_{\ell} = x \left|\tilde{\mathcal{M}}(X_{\ell},Z) = y_\ell\right.\right]
\end{eqnarray}
\normalsize
\end{proof}


\subsection{Proof of Proposition \ref{prop:sampling} (Concentration Guarantees)}
\label{subsec:proof:sampling}
\begin{reproposition}
\label{reprop:sampling}
[Concentration Guarantee for Probabilistic mDP Sampling]  
If the empirical violation rate satisfies $\hat{p}_{\mathcal{S}_{\ell}} = \xi \delta$ for some constant $\xi < 1$, then $\Pr\left[p_{\mathcal{X}_{\ell}^2} \leq \delta\right] \geq 1 - 2\exp(-2S_\ell(1-\xi)^2\delta^2)$. 
\end{reproposition}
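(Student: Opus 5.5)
The plan is to treat the audit as a Bernoulli sampling problem and apply Hoeffding's inequality. The triplets in $\mathcal{S}_\ell$ are drawn i.i.d. and uniformly, so the indicators $B_s = \mathbf{1}(\mathrm{mPL}_{\mathcal{M}_\ell}(x_i,x_j,y) > \epsilon)$ are i.i.d. Bernoulli random variables taking values in $[0,1]$. Their common mean is $p_{\mathcal{X}_\ell^2}$, and as noted before the statement, $\hat p_{\mathcal{S}_\ell}$ is their sample average and an unbiased estimator of $p_{\mathcal{X}_\ell^2}$. Hoeffding's inequality for bounded i.i.d. variables then gives, for every $t>0$,
\[
\Pr\bigl[\,|\hat p_{\mathcal{S}_\ell} - p_{\mathcal{X}_\ell^2}| \ge t\,\bigr] \le 2\exp(-2S_\ell t^2).
\]
This two-sided form accounts for the factor $2$ in the claimed bound. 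The one-sided version would give the sharper constant $1$.

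Next, fix the deviation at $t = (1-\xi)\delta$, which is positive because $\xi<1$. The key observation is an event inclusion. If $p_{\mathcal{X}_\ell^2} > \delta$ while $\hat p_{\mathcal{S}_\ell} = \xi\delta$, then
\[
p_{\mathcal{X}_\ell^2} - \hat p_{\mathcal{S}_\ell} > (1-\xi)\delta = t .
\]
So the event $\{p_{\mathcal{X}_\ell^2} > \delta\}$ is contained in the large-deviation event $\{|\hat p_{\mathcal{S}_\ell} - p_{\mathcal{X}_\ell^2}| \ge t\}$. Taking complements and inserting $t^2 = (1-\xi)^2\delta^2$ into the Hoeffding bound yields $1 - 2\exp(-2S_\ell(1-\xi)^2\delta^2)$, which is the stated lower bound.

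The main obstacle is interpretive. The quantity $p_{\mathcal{X}_\ell^2}$ is deterministic, so ``$\Pr[p_{\mathcal{X}_\ell^2}\le\delta]$'' is only meaningful as a confidence statement over the sampling randomness. I would phrase the result in the standard frequentist way. For any fixed true value $p_{\mathcal{X}_\ell^2} > \delta$, the probability that the audit returns an estimate at most $\xi\delta$ is at most $2\exp(-2S_\ell(1-\xi)^2\delta^2)$. Equivalently, the decision rule ``certify $(\delta,\epsilon_\ell)$-PBmPL whenever $\hat p_{\mathcal{S}_\ell} \le \xi\delta$'' errs with probability at most that quantity.

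This argument needs the event $\{\hat p_{\mathcal{S}_\ell} \le \xi\delta\}$ rather than exact equality, so I would state the inclusion for that event. This is harmless, because conditioning on the exact value $\xi\delta$ is the informal reading of the statement.

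I would also state the remaining assumptions explicitly. Sampling is with replacement, or from a domain large enough that Hoeffding applies; sampling without replacement also works, since Hoeffding's bound carries over to that setting. The mPL values are computed with the fixed attacker posterior, which is what makes the indicators identically distributed.
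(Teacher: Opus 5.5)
Your proposal follows the paper's proof: a two-sided Hoeffding bound with $t=(1-\xi)\delta$, combined with the observation that $|\hat{p}_{\mathcal{S}_\ell}-p_{\mathcal{X}_\ell^2}|\le t$ and $\hat{p}_{\mathcal{S}_\ell}=\xi\delta$ together force $p_{\mathcal{X}_\ell^2}\le\delta$. Your frequentist restatement is the rigorous form of what the paper's proof does implicitly when it treats the deterministic $p_{\mathcal{X}_\ell^2}$ as random; it says that for any fixed $p_{\mathcal{X}_\ell^2}>\delta$ the audit returns $\hat{p}_{\mathcal{S}_\ell}\le\xi\delta$ with probability at most $2\exp(-2S_\ell(1-\xi)^2\delta^2)$, and you are right that the one-sided bound already suffices.
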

\begin{proof}
Suppose the empirical violation rate satisfies $\hat{p}_{\mathcal{S}} = \xi \delta$ for some constant $\xi < 1$.  
Our goal is to show that, with high probability, the true violation probability $p_{\mathcal{X}^2}$ is at most $\delta$.

First, by Hoeffding's inequality, for any $t > 0$:
\begin{equation}
\small \Pr\left[\left|\hat{p}_{\mathcal{S}} - p_{\mathcal{X}^2}\right| > t\right] \leq 2e^{-2S_\ell t^2}.
\end{equation}
Let $t = (1-\xi)\delta$. Then:
\begin{equation}
\small \Pr\left[\left|\hat{p}_{\mathcal{S}} - p_{\mathcal{X}^2}\right| > (1-\xi)\delta\right] \leq 2e^{-2S_\ell(1-\xi)^2\delta^2}.
\end{equation}
This implies:
\begin{equation}
\small \Pr\left[\left|\hat{p}_{\mathcal{S}} - p_{\mathcal{X}^2}\right| \leq (1-\xi)\delta\right] \geq 1 - 2e^{-2S_\ell(1-\xi)^2\delta^2}.
\end{equation}
Now, under the event that $\left|\hat{p}_{\mathcal{S}} - p_{\mathcal{X}^2}\right| \leq (1-\xi)\delta$, we have:
\begin{equation}
\small p_{\mathcal{X}^2} \leq \hat{p}_{\mathcal{S}} + (1-\xi)\delta = \xi\delta + (1-\xi)\delta = \delta.
\end{equation}
Thus:
\begin{equation}
\small \Pr\left[p_{\mathcal{X}^2} \leq \delta\right] \geq 1 - 2e^{-2S_\ell(1-\xi)^2\delta^2},
\end{equation}
which completes the proof.
\end{proof}

\DEL{
Then, according to Hoeffding's inequality [reference], if $\hat{p}_{\mathcal{S}}$ is computed from $L$ independent Bernoulli trials (assuming pairs are independent or approximately so), then 
\begin{eqnarray}
\Pr\left[p_{\mathcal{X}^2}\leq \delta \right]
&\geq& 
\Pr\left[|p_{\mathcal{X}^2} - \hat{p}_{\mathcal{S}}| \leq (1-\alpha)\delta \right]
\\ 
&=& 1-\Pr\left[|\hat{p}_{\mathcal{S}} - p_{\mathcal{X}^2}| > (1-\alpha)\delta\right] \\
&\geq& 1- 2e^{-2L(1-\alpha)^2\delta^2}.
\end{eqnarray}
The proof is complete. }


\subsection{Proof of Proposition \ref{prop:mpl-faithful} (Asymptotic Faithfulness of the mPL Audit)}
\label{sec:posterior-error-bound}
\DEL{
\begin{reproposition}[Asymptotic faithfulness of the mPL audit]
Let $p(x \mid y)$ denote the true posterior and $q_{\theta}(x \mid y)$ the adversary
trained on $n$ i.i.d.\ pairs $(x_i, y_i)$ by minimizing empirical conditional
cross-entropy over a fixed neural-network class.
Assume:
\vspace{-0.05in}
\begin{itemize}
    \item[\textbf{A1}.] (\emph{Clipped posteriors}) $\mathcal{X}$ is finite, and there exists $\gamma > 0$ such that
    $p(x \mid y) \geq \gamma$ and $q_{\theta}(x \mid y) \geq \gamma$ for all
    $x$ in the candidate set and all $y$ (implemented in practice by softmax
    clipping).
    \vspace{-0.05in}
    \item[\textbf{A2}.] (\emph{Metric regularity}) The metric satisfies
    $d(x_i, x_j) \ge d_{\min} > 0$ whenever $x_i \neq x_j$.
\end{itemize}
\vspace{-0.05in}
Then for any fixed pair of candidates $x_i, x_j$ there exists a constant
$C > 0$ (depending only on $\gamma$, $d_{\min}$, and the candidate set)
such that
\begin{equation}
\label{eq:mpl-error-rate}
\small 
\mathbb{E}_Y\Bigl[
  \bigl| \mathrm{mPL}(x_i, x_j; Y) - \widetilde{\mathrm{mPL}}(x_i, x_j; Y) \bigr|
\Bigr]
\leq C n^{-\alpha/2}
\end{equation}
for all sufficiently large $n$.
In particular, as we increase the amount of training data, the mPL values
computed from the learned adversary converge (in expectation) to the true
posterior-leakage values, so our empirical audit is asymptotically faithful
to the underlying Bayesian posterior-odds guarantees. \looseness = -1
\end{reproposition}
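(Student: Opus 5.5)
The plan is to reduce the mPL error to a posterior-estimation error and then invoke a statistical rate for conditional cross-entropy minimization. The prior term $\ln\frac{\Pr(X=x_i)}{\Pr(X=x_j)}$ is identical in $\mathrm{mPL}$ and $\widetilde{\mathrm{mPL}}$, so it cancels. By the reverse triangle inequality $\bigl||a|-|b|\bigr|\le |a-b|$ and (A2), for every $y$
\begin{equation*}
\bigl|\mathrm{mPL}(x_i,x_j;y)-\widetilde{\mathrm{mPL}}(x_i,x_j;y)\bigr|
\le \frac{1}{d_{\min}}\Bigl(\bigl|\ln p(x_i\mid y)-\ln q_\theta(x_i\mid y)\bigr|+\bigl|\ln p(x_j\mid y)-\ln q_\theta(x_j\mid y)\bigr|\Bigr).
\end{equation*}
By (A1) both posteriors lie in $[\gamma,1]$. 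Since $\ln$ is $1/\gamma$-Lipschitz there, each term is at most $\gamma^{-1}|p(x\mid y)-q_\theta(x\mid y)|$. Summing over the two candidates gives at most $2\gamma^{-1}\,\mathrm{TV}$ times a constant, so
\begin{equation*}
\bigl|\mathrm{mPL}-\widetilde{\mathrm{mPL}}\bigr|(y)\le \frac{2}{\gamma d_{\min}}\,\bigl\|p(\cdot\mid y)-q_\theta(\cdot\mid y)\bigr\|_1 .
\end{equation*}

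Next, I take expectations over $Y$ and apply Pinsker's inequality together with Jensen's inequality:
\begin{equation*}
\mathbb{E}_Y\|p-q_\theta\|_1\le \sqrt{2\,\mathbb{E}_Y\mathrm{KL}\bigl(p(\cdot\mid Y)\,\|\,q_\theta(\cdot\mid Y)\bigr)}.
\end{equation*}
The expected conditional KL is exactly the excess conditional cross-entropy risk $R(q_\theta)-R(p)$. This identifies the remaining quantity with the generalization-plus-approximation error of the trained adversary.

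The main obstacle is the final step: bounding the excess risk by $O(n^{-\alpha})$. The statement leaves $\alpha$ implicit, so I would make this an explicit standing assumption, introducing $\alpha$ through it. The assumption is that the network class is well specified (it contains $p$, up to clipping) and that empirical risk minimization achieves excess log-loss $\mathbb{E}_Y\mathrm{KL}\le C_0 n^{-\alpha}$ for large $n$.

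To justify this assumption I would try the standard route first. Clipping at $\gamma$ makes the log-loss bounded by $\ln(1/\gamma)$ and $\gamma^{-1}$-Lipschitz in the probabilities. Uniform convergence over the class, via Rademacher complexity with the contraction lemma, then gives $\alpha=1/2$. Under a Bernstein or low-noise condition, local Rademacher analysis gives $\alpha$ up to $1$.

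Combining the three steps yields $\mathbb{E}_Y|\mathrm{mPL}-\widetilde{\mathrm{mPL}}|\le \frac{2\sqrt{2C_0}}{\gamma d_{\min}}\,n^{-\alpha/2}$. This sets $C$ in terms of $\gamma$, $d_{\min}$, and the class and candidate-set complexity, as the statement requires.
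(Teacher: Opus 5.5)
Your proposal is correct and follows the paper's proof essentially step for step. The paper likewise bounds the mPL gap by $\frac{2}{\gamma d_{\min}}\|p(\cdot\mid y)-q_\theta(\cdot\mid y)\|_1$ via a Lipschitz bound on the log-odds, identifies excess cross-entropy with expected KL, and applies Pinsker and Jensen to obtain $C=2\sqrt{2C_0}/(\gamma d_{\min})$, with one difference in your favor: you state well-specification explicitly, whereas the paper needs it but leaves it implicit when it passes from an excess-risk bound relative to the best-in-class predictor to a bound on $\mathbb{E}_Y[\mathrm{KL}(p\,\|\,q_\theta)]$.
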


\begin{proof}
Let $p(x \mid y)$ denote the true posterior over secrets $x \in \mathcal{X}$ given an observation $y$, and let $q_\theta(x \mid y)$ be a learned conditional model trained on i.i.d. pairs $(x_i,y_i)$ sampled from the joint distribution of $(X, M(X))$. Let $\mathrm{mPL}(x_i,x_j;y)$ and $\widetilde{\mathrm{mPL}}(x_i,x_j;y)$ denote the mPL computed using $p$ and $q_\theta$, respectively, with the same prior and metric terms; only the posterior-dependent part differs.

We first relate the standard cross-entropy training objective to the KL divergence between $p$ and $q_\theta$.

\begin{lemma}[Cross-entropy and expected KL]
\label{lem:expectedKL}
Define the population cross-entropy loss
\begin{equation}
\small 
\mathcal{L}(\theta)
\coloneqq \mathbb{E}_{(X,Y)}\bigl[-\log q_\theta(X \mid Y)\bigr].
\end{equation}
Then
\begin{equation}
\small 
\mathcal{L}(\theta)
= \mathbb{E}_Y\bigl[H\bigl(p(\cdot \mid Y)\bigr)\bigr]
  + \mathbb{E}_Y\bigl[\mathrm{KL}\bigl(p(\cdot \mid Y)\,\|\,q_\theta(\cdot \mid Y)\bigr)\bigr],
\end{equation}
so minimizing $\mathcal{L}(\theta)$ is equivalent (up to an additive constant) to minimizing the expected posterior KL divergence $\mathbb{E}_Y[\mathrm{KL}(p\|q_\theta)]$.
\end{lemma}

\begin{proof}
For any fixed $y$,
\begin{equation}
\small 
\mathbb{E}_{X\mid Y=y}\bigl[-\log q_\theta(X \mid y)\bigr]
= \sum_{x} p(x\mid y)\bigl(-\log q_\theta(x\mid y)\bigr).
\end{equation}
Add and subtract $\log p(x\mid y)$:
\begin{align*}
\small 
\sum_x p(x\mid y)\bigl(-\log q_\theta(x\mid y)\bigr)
&= \sum_x p(x\mid y)\bigl(-\log p(x\mid y)\bigr)
 + \sum_x p(x\mid y)\log \frac{p(x\mid y)}{q_\theta(x\mid y)} \\
&= H\bigl(p(\cdot\mid y)\bigr)
 + \mathrm{KL}\bigl(p(\cdot \mid y)\,\|\,q_\theta(\cdot \mid y)\bigr).
\end{align*}
Taking expectation over $Y$ yields the claim:
\begin{equation}
\small 
\mathcal{L}(\theta)
= \mathbb{E}_Y\bigl[H\bigl(p(\cdot\mid Y)\bigr)\bigr]
 + \mathbb{E}_Y\bigl[\mathrm{KL}\bigl(p(\cdot\mid Y)\,\|\,q_\theta(\cdot\mid Y)\bigr)\bigr]. \qedhere
\end{equation}
\end{proof}

In practice, we train $q_\theta$ by minimizing the empirical conditional cross-entropy
\begin{equation}
\small 
\hat{\mathcal{L}}(\theta)
= \frac{1}{n}\sum_{i=1}^{n} -\log q_\theta(x_i \mid y_i),
\end{equation}
which is a Monte Carlo estimate of the population loss
\begin{equation}
\small 
\mathcal{L}(\theta)
= \mathbb{E}_{(X,Y)}\bigl[-\log q_\theta(X \mid Y)\bigr].
\end{equation}
Generalization bounds for deep networks trained with cross-entropy, based on spectral norms and margins, show that with high probability over the draw of the training sample, the excess population cross-entropy
of the trained model over the best-in-class predictor decays at a
polynomial rate in $n$ (see, e.g., \cite{bartlett2017spectrally}).
Since $\mathcal{L}(\theta)$ decomposes as
\begin{equation}
\small 
\mathcal{L}(\theta)
= \mathbb{E}_Y\bigl[H\bigl(p(\cdot\mid Y)\bigr)\bigr]
  + \mathbb{E}_Y\bigl[\mathrm{KL}\bigl(p(\cdot\mid Y)\,\|\,q_\theta(\cdot\mid Y)\bigr)\bigr],
\end{equation}
these results imply a bound of the same order on the expected KL term.
In particular, under these assumptions there exist constants
$C_0,\alpha>0$ such that, for sufficiently large $n$,
\begin{equation}
\small 
\mathbb{E}_Y\bigl[\mathrm{KL}\bigl(p(\cdot\mid Y)\,\|\,q_\theta(\cdot\mid Y)\bigr)\bigr]
\leq C_0\,n^{-\alpha}.
\end{equation}
We use this only as a qualitative justification that the surrogate posterior
$q_\theta(x\mid y)$ converges toward the true posterior $p(x\mid y)$ 
as the number of training pairs grows; our analysis does not rely on the 
specific constants in the bound.

We next show that mPL is Lipschitz with respect to the posterior distribution, so that controlling the posterior error in KL (or total variation) directly controls the mPL error.

\begin{lemma}[Lipschitz stability of mPL w.r.t.\ posterior]
\label{lem:LipschitzStab}
Given that Assumptions \textbf{A1} and \textbf{A2} hold, then for any $x_i,x_j,y$,
\begin{equation}
\bigl|\mathrm{mPL}(x_i,x_j;y) - \widetilde{\mathrm{mPL}}(x_i,x_j;y)\bigr|
\leq \frac{2}{\gamma d_{\min}}\bigl\|p(\cdot\mid y) - q_\theta(\cdot\mid y)\bigr\|_1.
\end{equation}
\end{lemma}

\begin{proof}
By definition, the posterior-dependent part of mPL is a log-odds term of the form
\begin{equation}
g(a,b) \coloneqq \log\frac{a}{b},
\end{equation}
where $a=p(x_i\mid y)$, $b=p(x_j\mid y)$ for $\mathrm{mPL}$ and $a'=q_\theta(x_i\mid y)$, $b'=q_\theta(x_j\mid y)$ for $\widetilde{\mathrm{mPL}}$. On the domain $[\,\gamma,1-\gamma\,]^2$, the gradient is 
\begin{equation}
\nabla g(a,b) = \left(\frac{1}{a},-\frac{1}{b}\right)
\end{equation}
and 
\begin{equation}
\left| \frac{1}{a}\right| \leq \frac{1}{\gamma},~\left| \frac{1}{b}\right| \leq \frac{1}{\gamma}.
\end{equation}
So the $\ell_1$-norm of the gradient is bounded: 
\begin{equation}
\|\nabla g(a,b)\|_1 = \left| \frac{1}{a}\right| + \left| \frac{1}{b}\right| \leq \frac{2}{\gamma}
\end{equation}
By the mean value theorem, this implies a \emph{Lipschitz bound} 
\begin{equation}
\bigl|g(a,b)-g(a',b')
\leq \frac{2}{\gamma}\,\|(a,b)-(a',b')\|_1.
\end{equation}
Applying this with: 
\begin{eqnarray}
a = q_\theta(x_i\mid y) & a' = q_\theta(x_j\mid y)\\
b = p(x_i\mid y) & b' = p(x_j\mid y)
\end{eqnarray}
and using the lower bound on the metric distance $d(x_i, x_j) \ge d_{\min} > 0$, we obtain 
\begin{equation}
\bigl|\mathrm{mPL}(x_i,x_j;Y)-\widetilde{\mathrm{mPL}}(x_i,x_j;Y)\bigr|
\leq  \frac{2}{\gamma d_{\min}}\bigl\|p(\cdot\mid y) - q_\theta(\cdot\mid y)\bigr\|_1.
\end{equation}    
This shows that mPL is Lipschitz in the posterior distribution: if the learned posterior is close to the true posterior in total variation distance, then the mPL computed from the learned posterior is also close to the true mPL.
\end{proof}

Combining Lemma \ref{lem:LipschitzStab} with Pinsker's inequality,
\begin{equation}
\bigl\|p(\cdot\mid y)-q_\theta(\cdot\mid y)\bigr\|_1
\le \sqrt{2\,\mathrm{KL}\bigl(p(\cdot\mid y)\,\|\,q_\theta(\cdot\mid y)\bigr)},
\end{equation}
we obtain the expected mPL error bound
\begin{equation}
\mathbb{E}_Y\Bigl[\bigl|\mathrm{mPL}(x_i,x_j;Y)-\widetilde{\mathrm{mPL}}(x_i,x_j;Y)\bigr|\Bigr]
\leq \frac{2\sqrt{2}}{\gamma d_{\min}}\sqrt{\mathbb{E}_Y\bigl[\mathrm{KL}\bigl(p(\cdot\mid Y)\|q_\theta(\cdot\mid Y)\bigr)\bigr]}.
\end{equation}

In particular, if $\mathbb{E}_Y\bigl[\mathrm{KL}\bigl(p(\cdot\mid Y)\,\|\,q_\theta(\cdot\mid Y)\bigr)\bigr]
\leq C_0\,n^{-\alpha}$ for some $C_0,\alpha>0$, then
\begin{equation}
\mathbb{E}_Y\Bigl[\bigl|\mathrm{mPL}(x_i,x_j;Y)-\widetilde{\mathrm{mPL}}(x_i,x_j;Y)\bigr|\Bigr]
\leq \frac{2\sqrt{2}}{\gamma d_{\min}}\sqrt{C_0}\,n^{-\alpha/2}, 
\end{equation}
which concludes the proof. 
\end{proof}
}

\begin{reproposition}[Asymptotic faithfulness of the mPL audit]
Let $p(x \mid y)$ denote the true posterior and $q_{\theta}(x \mid y)$ the adversary
trained on $n$ i.i.d.\ pairs $(x_i, y_i)$ by minimizing empirical conditional
cross-entropy over a fixed neural-network class.
Assume that (A1) $\mathcal{X}$ is finite, and there exists $\gamma > 0$ such that $p(x \mid y) \geq \gamma$ and $q_{\theta}(x \mid y) \geq \gamma$ for all $x \in \mathcal{X}$ and all $y$ (implemented in practice by softmax clipping) and (A2) the metric satisfies $d(x_i, x_j) \ge d_{\min} > 0$ whenever $x_i \neq x_j$. Then for any fixed pair of candidates $x_i, x_j$ there exists a constant
$C > 0$ (depending only on $\gamma$, $d_{\min}$, and the candidate set)
such that
\begin{equation}
\label{eq:mpl-error-rate}
\mathbb{E}_Y\Bigl[
  \bigl| \mathrm{mPL}(x_i, x_j; Y) - \widetilde{\mathrm{mPL}}(x_i, x_j; Y) \bigr|
\Bigr]
\leq C\, n^{-\alpha/2}
\end{equation}
for all sufficiently large $n$, where $\mathrm{mPL}$ and $\widetilde{\mathrm{mPL}}$ denote the mPL computed using $p$ and $q_\theta$, respectively.  
\looseness=-1
\end{reproposition}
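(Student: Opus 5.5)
The proof is a chain of three reductions: from cross-entropy training to expected posterior KL, from KL to $\ell_1$ distance between posteriors, and from $\ell_1$ distance to the mPL gap. The exponent $\alpha>0$ in the statement is the rate at which the excess conditional cross-entropy of the trained adversary decays. It is not produced by the argument; it is an input taken from a generalization bound.

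\textbf{Step 1 (cross-entropy controls expected KL).} Write the population loss as $\mathcal{L}(\theta)=\mathbb{E}_{(X,Y)}[-\log q_\theta(X\mid Y)]$. For each fixed $y$, adding and subtracting $\log p(x\mid y)$ inside the sum gives
\[
\mathcal{L}(\theta)=\mathbb{E}_Y\bigl[H(p(\cdot\mid Y))\bigr]+\mathbb{E}_Y\bigl[\mathrm{KL}\bigl(p(\cdot\mid Y)\,\|\,q_\theta(\cdot\mid Y)\bigr)\bigr].
\]
The entropy term does not depend on $\theta$, so excess cross-entropy over the Bayes posterior equals the expected KL. I would then invoke a generalization bound for cross-entropy-trained networks, for instance a spectral-norm/margin bound. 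Such bounds are usable here because (A1) keeps $-\log q_\theta$ bounded by $\log(1/\gamma)$, which makes the loss bounded and Lipschitz. Together with the assumption that the network class is rich enough to contain (or approximate to the same order) the clipped true posterior, this yields constants $C_0,\alpha>0$ with
\[
\mathbb{E}_Y[\mathrm{KL}(p\,\|\,q_\theta)]\le C_0 n^{-\alpha}
\]
for all large $n$.

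\textbf{Step 2 (mPL is Lipschitz in the posterior).} Fix the pair $x_i,x_j$ and an observation $y$. The prior log-ratio is identical in $\mathrm{mPL}$ and $\widetilde{\mathrm{mPL}}$, so by the reverse triangle inequality the two quantities differ by at most
\[
\frac{1}{d_{x_i,x_j}}\Bigl|\log\tfrac{a}{b}-\log\tfrac{a'}{b'}\Bigr|,
\]
where $a=p(x_i\mid y)$, $b=p(x_j\mid y)$, $a'=q_\theta(x_i\mid y)$, $b'=q_\theta(x_j\mid y)$. Under (A1) all four numbers lie in $[\gamma,1]$. On that box, $\log$ is $1/\gamma$-Lipschitz, so
\[
|\log a-\log a'|+|\log b-\log b'|\le \tfrac{1}{\gamma}\bigl(|a-a'|+|b-b'|\bigr)\le \tfrac{1}{\gamma}\bigl\|p(\cdot\mid y)-q_\theta(\cdot\mid y)\bigr\|_1 .
\]
Applying (A2) to the prefactor $1/d_{x_i,x_j}$ gives the pointwise bound $\frac{1}{\gamma d_{\min}}\|p-q_\theta\|_1$; a factor of 2 is harmless if one prefers a looser constant.

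\textbf{Step 3 (combine).} Pinsker's inequality gives $\|p-q_\theta\|_1\le\sqrt{2\,\mathrm{KL}(p\,\|\,q_\theta)}$. Take expectation over $Y$ of the Step 2 bound and apply Jensen's inequality to the concave square root:
\[
\mathbb{E}_Y\bigl|\mathrm{mPL}-\widetilde{\mathrm{mPL}}\bigr|\le \frac{\sqrt2}{\gamma d_{\min}}\sqrt{\mathbb{E}_Y[\mathrm{KL}]}\le \frac{\sqrt{2C_0}}{\gamma d_{\min}}\,n^{-\alpha/2}.
\]
This is the claimed bound with $C=\sqrt{2C_0}/(\gamma d_{\min})$.

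\textbf{Main obstacle.} Steps 2 and 3 are elementary. The real content is Step 1: a polynomial rate for the \emph{excess risk relative to the true posterior}. Standard generalization bounds only control the estimation error, i.e., the gap to the best-in-class predictor. So I would state explicitly a realizability or approximation assumption, namely that the approximation error of the network class is itself $O(n^{-\alpha})$. The constant $C$ then also depends on $C_0$, which in turn depends on $\gamma$ and on the candidate set through the loss range $\log(1/\gamma)$ and the complexity of the class. The clipping in (A1) is essential here: it keeps the log-loss bounded so that uniform-convergence bounds apply, and it keeps the log-odds map Lipschitz in Step 2.
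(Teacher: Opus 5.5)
Your proposal is correct and follows essentially the same route as the paper: the decomposition of cross-entropy into entropy plus expected KL, a Lipschitz bound of the mPL gap by $\|p(\cdot\mid y)-q_\theta(\cdot\mid y)\|_1$, and then Pinsker, Jensen, and an invoked polynomial generalization rate. Your coordinatewise Lipschitz bound gives the sharper constant $1/(\gamma d_{\min})$ instead of the paper's $2/(\gamma d_{\min})$, since the paper bounds the gradient's $\ell_1$ norm where the $\ell_\infty$ norm suffices. Your explicit realizability/approximation assumption in Step 1 also states precisely what the paper leaves implicit when it passes from a best-in-class excess-risk bound to a bound on $\mathbb{E}_Y[\mathrm{KL}(p\,\|\,q_\theta)]$.
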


\begin{lemma}[Cross-entropy and expected KL]
\label{lem:expectedKL}
Define the population cross-entropy loss
\begin{equation}
\mathcal{L}(\theta)
\coloneqq \mathbb{E}_{(X,Y)}\bigl[-\log q_\theta(X \mid Y)\bigr].
\end{equation}
Then
\begin{equation}
\mathcal{L}(\theta)
= \mathbb{E}_Y\bigl[H\bigl(p(\cdot \mid Y)\bigr)\bigr]
  + \mathbb{E}_Y\bigl[\mathrm{KL}\bigl(p(\cdot \mid Y)\,\|\,q_\theta(\cdot \mid Y)\bigr)\bigr],
\end{equation}
where $H\bigl(p(\cdot \mid Y)\bigr)$ is the conditional entropy of the true posterior, therefore minimizing $\mathcal{L}(\theta)$ is equivalent (up to an additive constant) 
to minimizing the expected posterior KL divergence 
$\mathbb{E}_Y[\mathrm{KL}(p(\cdot\mid Y)\,\|\,q_\theta(\cdot\mid Y))]$.
\end{lemma}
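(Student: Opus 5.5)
The plan is to prove the identity pointwise in $y$ first and then average over $Y$. This is the standard decomposition of cross-entropy into entropy plus KL. Write the joint law of $(X,Y)$ as $\Pr(Y=y)\,p(x\mid y)$; this is legitimate because $\mathcal{X}$ is finite (A1) and $\mathcal{Y}$ is discrete. By the tower property,
\begin{equation*}
\mathcal{L}(\theta)=\mathbb{E}_Y\Bigl[\textstyle\sum_{x\in\mathcal{X}} p(x\mid Y)\bigl(-\log q_\theta(x\mid Y)\bigr)\Bigr],
\end{equation*}
so it suffices to decompose the inner conditional expectation for each fixed $y$.

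For fixed $y$, add and subtract $\log p(x\mid y)$ inside the sum:
\begin{equation*}
-\log q_\theta(x\mid y) = -\log p(x\mid y) + \log\frac{p(x\mid y)}{q_\theta(x\mid y)}.
\end{equation*}
Multiplying by $p(x\mid y)$ and summing over $x$, the first term gives $H(p(\cdot\mid y))$ and the second gives $\mathrm{KL}(p(\cdot\mid y)\,\|\,q_\theta(\cdot\mid y))$. Every term is finite. Terms with $p(x\mid y)=0$ contribute zero under the convention $0\log 0=0$, and in any case A1 gives $p,q_\theta\ge\gamma>0$, so no logarithm diverges. Splitting the sum into two finite sums is therefore valid.

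Next, take $\mathbb{E}_Y$ of both sides. Linearity of expectation gives the claimed decomposition. Because both summands are bounded (entropy by $\log|\mathcal{X}|$, KL by $\log(1/\gamma)$), there are no integrability concerns.

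Finally, for the ``equivalent to minimizing KL'' clause, observe that $\mathbb{E}_Y[H(p(\cdot\mid Y))]$ depends only on the data distribution and not on $\theta$. The two objectives therefore differ by a $\theta$-independent constant and have the same minimizers over any model class. No step is a real obstacle. The only point needing care is justifying that the sums split and that the expectations are finite, and A1 together with the finiteness of $\mathcal{X}$ settles this.
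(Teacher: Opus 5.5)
Your proposal is correct and matches the paper's proof: for fixed $y$ you add and subtract $\log p(x\mid y)$ to split the conditional cross-entropy into $H(p(\cdot\mid y))$ plus $\mathrm{KL}(p(\cdot\mid y)\,\|\,q_\theta(\cdot\mid y))$, then average over $Y$. Your extra remarks on finiteness (via A1) and on the $\theta$-independence of the entropy term spell out what the paper leaves implicit.
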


\begin{proof}
For any fixed $y$,
\begin{equation}
\mathbb{E}_{X\mid Y=y}\bigl[-\log q_\theta(X \mid y)\bigr]
= \sum_{x} p(x\mid y)\bigl(-\log q_\theta(x\mid y)\bigr).
\end{equation}
Add and subtract $\log p(x\mid y)$:
\begin{align*}
\sum_x p(x\mid y)\bigl(-\log q_\theta(x\mid y)\bigr)
&= \sum_x p(x\mid y)\bigl(-\log p(x\mid y)\bigr)
 + \sum_x p(x\mid y)\log \frac{p(x\mid y)}{q_\theta(x\mid y)} \\
&= H\bigl(p(\cdot\mid y)\bigr)
 + \mathrm{KL}\bigl(p(\cdot \mid y)\,\|\,q_\theta(\cdot \mid y)\bigr).
\end{align*}
Taking expectation over $Y$ yields the claim:
\begin{equation}
\mathcal{L}(\theta)
= \mathbb{E}_Y\bigl[H\bigl(p(\cdot\mid Y)\bigr)\bigr]
 + \mathbb{E}_Y\bigl[\mathrm{KL}\bigl(p(\cdot\mid Y)\,\|\,q_\theta(\cdot\mid Y)\bigr)\bigr]. \qedhere
\end{equation}
\end{proof}

\begin{lemma}[Lipschitz stability of mPL w.r.t.\ posterior]
\label{lem:LipschitzStab}
Suppose Assumptions \textbf{A1} and \textbf{A2} hold. Then for any $x_i,x_j,y$,
\begin{equation}
\bigl|\mathrm{mPL}(x_i,x_j;y) - \widetilde{\mathrm{mPL}}(x_i,x_j;y)\bigr|
\leq \frac{2}{\gamma d_{\min}}\bigl\|p(\cdot\mid y) - q_\theta(\cdot\mid y)\bigr\|_1.
\end{equation}
\end{lemma}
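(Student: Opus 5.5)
The plan is to isolate the only posterior-dependent part of mPL. Both $\mathrm{mPL}$ and $\widetilde{\mathrm{mPL}}$ use the same prior log-odds $\ln\frac{\Pr(X=x_i)}{\Pr(X=x_j)}$ and the same normalization $1/d_{x_i,x_j}$. Write
\[
A=\ln\tfrac{p(x_i\mid y)}{p(x_j\mid y)}-\ln\tfrac{\Pr(X=x_i)}{\Pr(X=x_j)},\qquad \tilde A=\ln\tfrac{q_\theta(x_i\mid y)}{q_\theta(x_j\mid y)}-\ln\tfrac{\Pr(X=x_i)}{\Pr(X=x_j)}.
\]
By the reverse triangle inequality,
\[
\bigl|\mathrm{mPL}-\widetilde{\mathrm{mPL}}\bigr|=\tfrac{1}{d_{x_i,x_j}}\bigl||A|-|\tilde A|\bigr|\le \tfrac{1}{d_{x_i,x_j}}|A-\tilde A|,
\]
and the prior terms cancel inside $A-\tilde A$. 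Using (A2), $1/d_{x_i,x_j}\le 1/d_{\min}$, so it remains to bound $|g(a,b)-g(a',b')|$ with $g(a,b)=\ln a-\ln b$, where $a=p(x_i\mid y)$, $b=p(x_j\mid y)$, $a'=q_\theta(x_i\mid y)$, $b'=q_\theta(x_j\mid y)$.

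Next I would split the log-odds difference coordinatewise:
\[
|g(a,b)-g(a',b')|\le|\ln a-\ln a'|+|\ln b-\ln b'|.
\]
By (A1), all four numbers lie in $[\gamma,1]$. On that interval $\ln$ is $(1/\gamma)$-Lipschitz by the mean value theorem. This gives the bound $\frac{1}{\gamma}(|a-a'|+|b-b'|)$.

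Finally, $|a-a'|+|b-b'|$ is a sum over two of the coordinates of $p(\cdot\mid y)-q_\theta(\cdot\mid y)$, so it is at most $\|p(\cdot\mid y)-q_\theta(\cdot\mid y)\|_1$. Combining the three steps gives
\[
\bigl|\mathrm{mPL}-\widetilde{\mathrm{mPL}}\bigr|\le\frac{1}{\gamma d_{\min}}\|p-q_\theta\|_1\le\frac{2}{\gamma d_{\min}}\|p-q_\theta\|_1 .
\]
This is already at least as strong as claimed; the stated factor $2$ is slack and can be kept if one instead bounds $|a-a'|+|b-b'|$ by twice the total variation.

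There is no real obstacle. The only point needing care is that the lower bound $\gamma$ must hold for both $p$ and $q_\theta$. This is needed because the mean value theorem places the intermediate point between $a$ and $a'$, so both endpoints must lie in $[\gamma,1]$, and (A1) assumes exactly this. The outer absolute value in the mPL definition is handled by the reverse triangle inequality, so no case split on signs is needed.
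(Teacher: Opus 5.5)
Your proof is correct and follows the same route as the paper's. Both isolate the log-odds term $g(a,b)=\ln(a/b)$, bound its variation by the mean value theorem using $1/a,1/b\le 1/\gamma$ from (A1), divide by $d_{\min}$ from (A2), and pass from the two coordinates to $\|p(\cdot\mid y)-q_\theta(\cdot\mid y)\|_1$. You additionally make explicit the reverse-triangle step and the cancellation of the prior log-odds, which the paper leaves implicit. Your coordinatewise split also gives the sharper constant $1/(\gamma d_{\min})$; the paper's factor $2$ comes from bounding the $\ell_1$ norm of the gradient $(1/a,-1/b)$ by $2/\gamma$ (the $\ell_\infty$ norm would suffice), not from a total-variation conversion.
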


\begin{proof}
By definition, the posterior-dependent part of mPL is a log-odds term of the form
\begin{equation}
g(a,b) \coloneqq \log\frac{a}{b},
\end{equation}
where $a=p(x_i\mid y)$ and $b=p(x_j\mid y)$ for $\mathrm{mPL}$, and
$a'=q_\theta(x_i\mid y)$, $b'=q_\theta(x_j\mid y)$ for $\widetilde{\mathrm{mPL}}$.

On the domain $[\,\gamma,1-\gamma\,]^2$, the gradient is
\begin{equation}
\nabla g(a,b) = \left(\frac{1}{a},-\frac{1}{b}\right),
\end{equation}
and by Assumption~\textbf{A1}
\begin{equation}
\left| \frac{1}{a}\right| \leq \frac{1}{\gamma},\qquad
\left| \frac{1}{b}\right| \leq \frac{1}{\gamma}.
\end{equation}
Thus the $\ell_1$-norm of the gradient is bounded:
\begin{equation}
\|\nabla g(a,b)\|_1
= \left| \frac{1}{a}\right| + \left| \frac{1}{b}\right|
\leq \frac{2}{\gamma}.
\end{equation}
By the mean value theorem, this implies a Lipschitz bound
\begin{equation}
\bigl|g(a,b)-g(a',b')\bigr|
\leq \frac{2}{\gamma}\,\|(a,b)-(a',b')\|_1.
\end{equation}
Applying this with
\begin{equation}
a = p(x_i\mid y),\quad b = p(x_j\mid y),\quad
a' = q_\theta(x_i\mid y),\quad b' = q_\theta(x_j\mid y),
\end{equation}
and using the lower bound on the metric distance $d(x_i, x_j) \ge d_{\min} > 0$
from Assumption~\textbf{A2}, we obtain
\begin{equation}
\bigl|\mathrm{mPL}(x_i,x_j;y)-\widetilde{\mathrm{mPL}}(x_i,x_j;y)\bigr|
\leq  \frac{2}{\gamma d_{\min}}\bigl\|(a,b)-(a',b')\bigr\|_1.
\end{equation}
Finally, since $(a,b)$ and $(a',b')$ are sub-vectors of
$p(\cdot\mid y)$ and $q_\theta(\cdot\mid y)$, we have
\begin{equation}
\bigl\|(a,b)-(a',b')\bigr\|_1
\leq \bigl\|p(\cdot\mid y) - q_\theta(\cdot\mid y)\bigr\|_1,
\end{equation}
which gives the claimed bound.
\end{proof}

\begin{proof}[Proof of Proposition~\ref{prop:mpl-faithful}]
In practice, we train $q_\theta$ by minimizing the empirical conditional cross-entropy
\begin{equation}
\hat{\mathcal{L}}(\theta)
= \frac{1}{n}\sum_{i=1}^{n} -\log q_\theta(x_i \mid y_i),
\end{equation}
which is a Monte Carlo estimate of the population loss
\begin{equation}
\mathcal{L}(\theta)
= \mathbb{E}_{(X,Y)}\bigl[-\log q_\theta(X \mid Y)\bigr].
\end{equation}
Generalization bounds for deep networks trained with cross-entropy, based on spectral
norms and margins, show that with high probability over the draw of the training sample,
the excess population cross-entropy of the trained model over the best-in-class
predictor decays at a polynomial rate in $n$
(see, e.g.,~\cite{bartlett2017spectrally}).
Combined with Lemma~\ref{lem:expectedKL}, this implies that there exist constants
$C_0,\alpha>0$ such that, for sufficiently large $n$,
\begin{equation}
\mathbb{E}_Y\bigl[\mathrm{KL}\bigl(p(\cdot\mid Y)\,\|\,q_{\theta}(\cdot\mid Y)\bigr)\bigr]
\leq C_0\,n^{-\alpha}.
\end{equation}

Next, by Lemma~\ref{lem:LipschitzStab} and Pinsker's inequality,
\begin{equation}
\bigl\|p(\cdot\mid y) - q_{\theta}(\cdot\mid y)\bigr\|_1
\le \sqrt{2\,\mathrm{KL}\bigl(p(\cdot\mid y)\,\|\,q_{\theta}(\cdot\mid y)\bigr)},
\end{equation}
we obtain, for each $y$,
\begin{equation}
\bigl|\mathrm{mPL}(x_i,x_j;y) - \widetilde{\mathrm{mPL}}(x_i,x_j;y)\bigr|
\le \frac{2\sqrt{2}}{\gamma d_{\min}}
\sqrt{\mathrm{KL}\bigl(p(\cdot\mid y)\,\|\,q_{\theta}(\cdot\mid y)\bigr)}.
\end{equation}
Taking expectation over $Y$ and applying Jensen's inequality
to the concave function $z \mapsto \sqrt{z}$ yields
\begin{equation}
\mathbb{E}_Y\Bigl[
  \bigl|\mathrm{mPL}(x_i,x_j;Y) - \widetilde{\mathrm{mPL}}(x_i,x_j;Y)\bigr|
\Bigr]
\le \frac{2\sqrt{2}}{\gamma d_{\min}}
\sqrt{\mathbb{E}_Y\Bigl[\mathrm{KL}\bigl(p(\cdot\mid Y)\,\|\,q_{\theta}(\cdot\mid Y)\bigr)\Bigr]}.
\end{equation}
Using the generalization bound
$\mathbb{E}_Y[\mathrm{KL}(p(\cdot\mid Y)\,\|\,q_{\theta}(\cdot\mid Y))] \le C_0 n^{-\alpha}$
then gives
\begin{equation}
\mathbb{E}_Y\Bigl[
  \bigl|\mathrm{mPL}(x_i,x_j;Y) - \widetilde{\mathrm{mPL}}(x_i,x_j;Y)\bigr|
\Bigr]
\le \frac{2\sqrt{2C_0}}{\gamma d_{\min}}\, n^{-\alpha/2}.
\end{equation}
Setting $C = 2\sqrt{2C_0}/(\gamma d_{\min})$ proves
\eqref{eq:mpl-error-rate} and the proposition. \qedhere
\end{proof}

\section{Additional Details of Case Study}
\label{app:case}
In this section, we provide additional details for the case study, including the design of the data perturbation and utiltiy loss, serving as complementary material to Section \ref{sec:case} of the main paper.

\subsection{2-Level Data Perturbation} 
Let $\mathcal{U}$ denote the complete set of word embeddings in the dataset. We define $\mathcal{X}_{1}$ and $\mathcal{X}_{2}$ as the subsets corresponding to PII and PoII embeddings, respectively. The overall set of sensitive embeddings is given by $\mathcal{X} = \mathcal{X}_{1} \cup \mathcal{X}_{2}$, which is a subset of the full embedding set, i.e., $\mathcal{X} \subseteq \mathcal{U}$. To identify clear instances of PII, we apply \emph{Named Entity Recognition (NER)} using the spaCy Python library\footnote{spaCy: Industrial-Strength Natural Language Processing in Python. Available at: \url{https://spacy.io}}. Specifically, we assign a Level 1 privacy label to any word token identified by the NER model as a likely PII entity, such as \texttt{PERSON}, \texttt{GPE}, or \texttt{ORG}. These Level 1 tokens form the subset $\mathcal{X}_{1} \subset \mathcal{U}$.

For the complementary set $\mathcal{U} \setminus \mathcal{X}_{1}$, we adopt the cosine-similarity-based approach proposed by Hassan et al.~\cite{Hassan2023} to identify \emph{Potentially Identifiable Information (PoII)}. Specifically, we compute the cosine similarity between each token in $\mathcal{U} \setminus \mathcal{X}_{1}$ and the tokens in $\mathcal{X}_{1}$ using pretrained GloVe embeddings. The top 10\% of tokens from $\mathcal{U} \setminus \mathcal{X}_{1}$ with the highest similarity scores are labeled as Level 2 privacy and form the PoII set $\mathcal{X}_{2}$. All remaining tokens, those not classified as either PII or PoII, are treated as non-sensitive. 

\DEL{
\begin{proposition}[Convergence Guarantee]
\label{prop:convergence}
For theoretical interests, given the following two conditions are satisfied:
\begin{itemize}
    \item The loss function $\mathcal{L}(\boldsymbol\alpha)$ is convex and $G$-Lipschitz continuous.
    \item The step size schedule satisfies $\sum_{t=0}^\infty \eta^{(t)} = \infty$ and $\sum_{t=0}^\infty (\eta^{(t)})^2 < \infty$.
\end{itemize}
{\rd Double check the references} the convergence of the parameter sequence follows directly from classical results in stochastic approximation theory~\cite{robbins1951stochastic} and online convex optimization~\cite{hazan2016introduction,bubeck2015convex}.
Suppose the above conditions hold. Then the sequence of parameters $\{(\alpha_1^{(t)}, \alpha_2^{(t)})\}$ generated by Algorithm~\ref{alg:adaptive_perturbation} converges almost surely to a stationary point of $\mathcal{L}$.
\end{proposition}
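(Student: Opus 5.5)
The plan is to read Algorithm~\ref{alg:adaptive_perturbation} as projected stochastic (sub)gradient descent on the feasible box $\mathcal{A}=[0,1]^2$:
\[
\boldsymbol\alpha^{(t+1)}=\Pi_{\mathcal{A}}\bigl(\boldsymbol\alpha^{(t)}-\eta^{(t)}\mathbf{g}^{(t)}\bigr),
\]
where $\mathbf{g}^{(t)}$ is the feedback-driven estimate of a subgradient of $\mathcal{L}$ at $\boldsymbol\alpha^{(t)}$. The proof would then follow the classical Robbins--Siegmund argument.

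The first step is to make the oracle assumptions explicit, since the proposition only states convexity and the $G$-Lipschitz property. Let $\mathcal{F}_t$ be the history up to round $t$. I would assume:
\begin{itemize}
\item[(i)] $\mathbb{E}[\mathbf{g}^{(t)}\mid\mathcal{F}_t]\in\partial\mathcal{L}(\boldsymbol\alpha^{(t)})$, i.e., the estimate is conditionally unbiased;
\item[(ii)] $\mathbb{E}[\|\mathbf{g}^{(t)}\|^2\mid\mathcal{F}_t]\le G^2+\sigma^2$, i.e., it has bounded second moment.
\end{itemize}
Both subgradients and the noise are bounded in this way because $\mathcal{L}$ is $G$-Lipschitz and the empirical violation rate $\hat p_{\mathcal{S}}$ is an average of indicators, hence bounded in $[0,1]$. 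Since $\mathcal{L}$ is convex and continuous on the compact set $\mathcal{A}$, its minimizer set $\mathcal{A}^\star$ is nonempty. For a convex function on a convex set, stationary points (with the normal-cone condition) are exactly the minimizers, so it suffices to show convergence to a point of $\mathcal{A}^\star$.

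The second step is the one-step inequality. Fix any $\boldsymbol\alpha^\star\in\mathcal{A}^\star$. Nonexpansiveness of $\Pi_{\mathcal{A}}$ gives
\[
\|\boldsymbol\alpha^{(t+1)}-\boldsymbol\alpha^\star\|^2\le\|\boldsymbol\alpha^{(t)}-\boldsymbol\alpha^\star\|^2-2\eta^{(t)}\mathbf{g}^{(t)\top}(\boldsymbol\alpha^{(t)}-\boldsymbol\alpha^\star)+(\eta^{(t)})^2\|\mathbf{g}^{(t)}\|^2 .
\]
Taking $\mathbb{E}[\cdot\mid\mathcal{F}_t]$ and applying the subgradient inequality $\mathbf{s}^\top(\boldsymbol\alpha-\boldsymbol\alpha^\star)\ge\mathcal{L}(\boldsymbol\alpha)-\mathcal{L}^\star$ yields
\[
\mathbb{E}\bigl[\|\boldsymbol\alpha^{(t+1)}-\boldsymbol\alpha^\star\|^2\mid\mathcal{F}_t\bigr]\le\|\boldsymbol\alpha^{(t)}-\boldsymbol\alpha^\star\|^2-2\eta^{(t)}\bigl(\mathcal{L}(\boldsymbol\alpha^{(t)})-\mathcal{L}^\star\bigr)+(\eta^{(t)})^2(G^2+\sigma^2).
\]
Because $\sum_t(\eta^{(t)})^2<\infty$, the Robbins--Siegmund lemma gives two conclusions almost surely: $\|\boldsymbol\alpha^{(t)}-\boldsymbol\alpha^\star\|$ converges, and $\sum_t\eta^{(t)}\bigl(\mathcal{L}(\boldsymbol\alpha^{(t)})-\mathcal{L}^\star\bigr)<\infty$. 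Combined with $\sum_t\eta^{(t)}=\infty$, the second conclusion forces $\liminf_t\mathcal{L}(\boldsymbol\alpha^{(t)})=\mathcal{L}^\star$.

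The third step upgrades this to convergence of the whole sequence, by the standard Opial-type argument:
\begin{itemize}
\item Apply the Robbins--Siegmund conclusion to a countable dense subset $D\subseteq\mathcal{A}^\star$. On a single probability-one event, $\|\boldsymbol\alpha^{(t)}-\boldsymbol\alpha^\star\|$ then converges for every $\boldsymbol\alpha^\star\in D$, and by density for every $\boldsymbol\alpha^\star\in\mathcal{A}^\star$.
\item By compactness of $\mathcal{A}$ and continuity of $\mathcal{L}$, the subsequence achieving the $\liminf$ has a limit point $\bar{\boldsymbol\alpha}\in\mathcal{A}^\star$.
\item Since $\|\boldsymbol\alpha^{(t)}-\bar{\boldsymbol\alpha}\|$ converges and tends to $0$ along that subsequence, it tends to $0$ along the full sequence.
\end{itemize}
Hence $\boldsymbol\alpha^{(t)}\to\bar{\boldsymbol\alpha}$ almost surely, and $\bar{\boldsymbol\alpha}$ is stationary. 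The polynomial schedule $\eta^{(t)}=\eta_0/(t+1)^\gamma$ with $\gamma\in(0.5,1]$ satisfies both step-size conditions.

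The main obstacle I expect is justifying the oracle assumptions (i)--(ii). The privacy term is an empirical indicator average produced by a retrained attacker, so it is piecewise constant in $\boldsymbol\alpha$ and its gradient is not directly available. I would handle this by stating explicitly that the proposition concerns a smoothed or convexified surrogate of $\mathcal{L}$ that satisfies the hypotheses, with unbiased bounded-variance gradient estimates, for example obtained by finite differences with resampled audits. If the estimate is instead biased, I would add a summability condition on the bias, $\sum_t\eta^{(t)}\|\text{bias}_t\|<\infty$, so that the Robbins--Siegmund inequality still holds.
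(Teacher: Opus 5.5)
Your proposal is correct and is essentially the argument the paper has in mind. The paper gives no derivation beyond saying the result ``follows directly'' from classical Robbins--Monro stochastic approximation and online-convex-optimization references, and your Robbins--Siegmund inequality for projected stochastic subgradient descent, followed by the Opial-type upgrade to almost-sure convergence of the whole sequence, is exactly that classical result written out. The conditional-unbiasedness and bounded-second-moment assumptions you add on the feedback gradient are genuinely required, because the stated hypotheses constrain only $\mathcal{L}$ and the step sizes and say nothing about how the update direction is produced; you are right to impose them explicitly rather than derive them from the boundedness of $\hat p_{\mathcal{S}}$, which does not by itself control the variance of gradient estimates.
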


\paragraph{Practical Step Size Schedule.}
In our experiments, we adopt a polynomially decaying step size schedule:
\begin{equation}
\eta^{(t)} = \frac{\eta_0}{(t+1)^\gamma},
\end{equation}
where $\eta_0 > 0$ is the initial learning rate and $\gamma \in (0.5,1]$ controls the decay rate.  
This choice satisfies the stochastic approximation conditions for convergence while providing flexibility to balance adaptation speed and stability.

\paragraph{Step Size Schedule.}
In practice, we employ a polynomially decaying step size of the form:
\begin{equation}
\eta^{(t)} = \frac{\eta_0}{(t+1)^\gamma},
\end{equation}
where $\eta_0 > 0$ is the initial learning rate and $\gamma \in (0.5,1]$ controls the decay rate.  
This schedule satisfies the convergence conditions and provides a good balance between adaptation and stability.
}

\subsection{Utility Loss Calculation}
The \emph{utility loss} quantifies the impact of the obfuscation process on sentence quality. 
For a sensitive token embedding $x_i$ replaced by a perturbed embedding $y_k$, the loss is defined as
\begin{equation}
c_{i,k} = 1 - \frac{Sim(x_i, y_k) + 1}{2},
\end{equation}
where $Sim(x_i, y_k)$ denotes the cosine similarity between $x_i$ and $y_k$: $Sim(x_i, y_k) = \frac{x_i \cdot y_k}{\|x_i\| \|y_k\|}$.
The normalization term $\frac{(\cdot) + 1}{2}$ maps the cosine similarity from the range $[-1, 1]$ to $[0, 1]$, ensuring that $c_{i,k} \in [0, 1]$.
Thus, $c_{i,k}$ captures the semantic deviation introduced by the perturbation, with higher values indicating greater semantic loss. 
Each token embedding is represented using pre-trained 100-dimensional GloVe vectors, which preserve the structure and context of the original sentence. 
The overall utility loss for $(x_i, y_k)$ is computed over all sensitive tokens and candidate replacements, ensuring that the semantic structure is preserved as faithfully as possible.








The experiment is performed for multiple values of $\epsilon$, the resulting utility loss stores the variations corresponding to the varying privacy guarantees on the semantic utility. The final matrix provides the insights about the trade-offs between privacy preservation and utility.

\DEL{
\subsection{Proof of xxx}
\begin{proof}
We analyze the convexity of the expected utility loss function 
$\mathcal{L}_{\mathrm{utility}}(\boldsymbol\alpha)$ defined as:
\begin{eqnarray}
&& \mathcal{L}_{\mathrm{utility}}(\boldsymbol\alpha) \\ 
&=& 
\sum_{x \in \mathcal{X}_1} \pi_x \sum_{y \in \mathcal{Y}} c_{x,y}  
\frac{\exp\left(-\frac{1}{2\alpha_1 \epsilon} d_{x,y}\right)}{\sum_{y' \in \mathcal{Y}} \exp\left(-\frac{1}{2\alpha_1 \epsilon} d_{x,y'}\right)} \\
&+& 
\sum_{x \in \mathcal{X}_2} \pi_x \sum_{y \in \mathcal{Y}} c_{x,y}  
\frac{\exp\left(-\frac{1}{2\alpha_2 \epsilon} d_{x,y}\right)}{\sum_{y' \in \mathcal{Y}} \exp\left(-\frac{1}{2\alpha_2 \epsilon} d_{x,y'}\right)}.
\end{eqnarray}
Since this function is separable in $\alpha_1$ and $\alpha_2$, we analyze a generic term:
\begin{equation}
g(\alpha) = \sum_{y \in \mathcal{Y}} c_y \cdot q_y(\alpha),
\end{equation}
where
\begin{equation}
q_y(\alpha) = \frac{\exp\left(-\frac{\beta_y}{\alpha}\right)}{Z(\alpha)}, ~ 
Z(\alpha) = \sum_{j \in \mathcal{Y}} \exp\left(-\frac{\beta_j}{\alpha}\right), ~ 
\beta_y = \frac{d_{x,y}}{2\epsilon}.
\end{equation}
Let $\bar{\beta}(\alpha) = \sum_{y \in \mathcal{Y}} \beta_y q_y(\alpha)$ be the softmax-weighted average. Then the derivative of $q_y(\alpha)$ with respect to $\alpha$ is:
\begin{equation}
\frac{\partial q_y}{\partial \alpha} = \frac{(\bar{\beta} - \beta_y)}{\alpha^2} \cdot q_y(\alpha).
\end{equation}

Thus, the first derivative of $g(\alpha)$ is:
\begin{equation}
g'(\alpha) = \frac{1}{\alpha^2} \sum_{y} c_y (\bar{\beta} - \beta_y) q_y(\alpha) = -\frac{1}{\alpha^2} \cdot \text{Cov}_{q_\alpha}(c, \beta),
\end{equation}
where $\text{Cov}_{q_\alpha}(c, \beta)$ denotes the covariance between $c$ and $\beta$ under the distribution $q_\alpha$.

Differentiating again, the second derivative is:
\begin{equation}
g''(\alpha) = \frac{2}{\alpha^3} \cdot \text{Cov}_{q_\alpha}(c, \beta)
- \frac{1}{\alpha^4} \sum_{y} c_y (\bar{\beta} - \beta_y)^2 q_y(\alpha).
\end{equation}

The first term can be positive or negative depending on the sign of the covariance. The second term is always non-positive since it is a weighted sum of squared deviations with a negative coefficient. Therefore, $g''(\alpha)$ is not guaranteed to be non-negative, and $g(\alpha)$ is not necessarily convex.

\vspace{0.5em}
\noindent \textbf{Counterexample.} Let us construct a concrete counterexample with:
\begin{itemize}
\item Two outputs: $\mathcal{Y} = \{y_1, y_2\}$;
\item Distances: $\beta_1 = 0$, $\beta_2 = 1$;
\item Costs: $c_1 = 0$, $c_2 = 1$.
\end{itemize}
Then:
\begin{equation}
g(\alpha) = \frac{e^{-1/\alpha}}{1 + e^{-1/\alpha}} = \frac{1}{1 + e^{1/\alpha}}.
\end{equation}

Its second derivative is:
\begin{equation}
g''(\alpha) = \frac{e^{1/\alpha} (1 - 2\alpha - \alpha e^{1/\alpha})}{4\alpha^4 (1 + e^{1/\alpha})^3}.
\end{equation}

This expression changes sign depending on $\alpha$. For instance:
\begin{itemize}
    \item At $\alpha = 0.1$, $g''(0.1) > 0$,
    \item At $\alpha = 0.5$, $g''(0.5) < 0$.
\end{itemize}

This proves that $g(\alpha)$, and therefore $\mathcal{L}_{\mathrm{utility}}(\boldsymbol\alpha)$, is not convex in general.

\begin{equation}
\boxed{
\mathcal{L}_{\mathrm{utility}}(\boldsymbol\alpha) \text{ is not necessarily convex in } \alpha_1 \text{ and } \alpha_2.
}
\end{equation}
\end{proof}}

\section{Additional Experimental Results}
\label{app:exp}

\DEL{
\normalsize
\vspace{-0.00in}
\begin{table*}[t]
\caption{Posterior leakage of different DNN-based threat models.}
\vspace{-0.00in}
\label{Tb:exp:leak}
\centering
\scriptsize 
\begin{tabular}
{p{1.00cm} |  p{1.00cm} p{1.00cm} p{1.00cm} || p{1.00cm} p{1.00cm} p{1.00cm} || p{1.00cm} p{1.00cm} p{1.00cm}}
\toprule
\multicolumn{1}{ c|  }{}& \multicolumn{3}{ c|| }{AG News Dataset} & \multicolumn{3}{ c|| }{IMDB Review Dataset} & \multicolumn{3}{ c }{Amazon Review Dataset} \\
\hline
\multicolumn{1}{ c|  }{$\epsilon$}  & $0.3$& $0.4$& $0.5$& $0.3$& $0.4$& $0.5$& $0.3$& $0.4$& $0.5$\\ 
\hline
\hline
\multicolumn{1}{ c|  }{RNN} & 0.12$\pm$0.01 & 0.12$\pm$0.01 & 0.12$\pm$0.01 & 0.11$\pm$0.01 & 0.11$\pm$0.01 & 0.11$\pm$0.01 & 0.09$\pm$0.01 & 0.10$\pm$0.01 & 0.10$\pm$0.01 \\
\multicolumn{1}{ c|  }{LSTM} & 0.12$\pm$0.01 & 0.11$\pm$0.02 & 0.11$\pm$0.01 & 0.10$\pm$0.02 & 0.10$\pm$0.01 & 0.10$\pm$0.01 & 0.09$\pm$0.02 & 0.10$\pm$0.02 & 0.09$\pm$0.01 \\
\multicolumn{1}{ c|  }{Transformer} & 0.12$\pm$0.01 & 0.12$\pm$0.01 & 0.12$\pm$0.01 & 0.14$\pm$0.02 & 0.14$\pm$0.03 & 0.14$\pm$0.02 & 0.10$\pm$0.02 & 0.10$\pm$0.01 & 0.10$\pm$0.02 \\
\hline
\end{tabular}
\vspace{-0.10in}
\end{table*}}

\subsection{Examples of mPL Distributions Derived by Different DNN-based Inference Models}
\label{app:subsec:mPLDistribution}

Figure~\ref{fig:mPL:examples} provides supplementary results to Figure~\ref{fig:mPL:agnews_transformer_example} by visualizing the empirical distribution of mPL values under three learned inference attackers: (a) RNN, (b) LSTM, and (c) Transformer in the AG News dataset. Each figure shows a histogram of per-sample mPL, partitioned into \emph{not violated} versus \emph{violated} regions according to whether mPL exceeds the target budget; the reported violation ratio is the fraction of samples that fall into the violated region. Notably, the violation ratios are non-trivial, underscoring the practical importance of reducing these violations under joint consumption.

\begin{figure*}[t]
\centering
\begin{minipage}{0.85\textwidth}
  \begin{subfigure}[t]{0.32\textwidth}
    \centering
    \includegraphics[width=\textwidth, height=0.14\textheight]{./fig/exp/rnn}
    \caption{RNN.}
    \label{fig:mPL:rnn}
  \end{subfigure}
  \hfill
  \begin{subfigure}[t]{0.32\textwidth}
    \centering
    \includegraphics[width=\textwidth, height=0.14\textheight]{./fig/exp/lstm}
    \caption{LSTM.}
    \label{fig:mPL:lstm}
  \end{subfigure}
  \hfill
  \begin{subfigure}[t]{0.32\textwidth}
    \centering
    \includegraphics[width=\textwidth, height=0.14\textheight]{./fig/exp/transformer}
    \caption{Transformer.}
    \label{fig:mPL:transformer}
  \end{subfigure}
  \vspace{-0.00in}
  \caption{Examples of mPL distributions derived by different DNN-based inference models.}
  \label{fig:mPL:examples}
\end{minipage}
\vspace{-0.05in}
\end{figure*}

\subsection{Recommended $\delta$ Thresholds and Failure Bound Analysis}
\label{app:subsec:recodelta}

\DEL{
\begin{table}[t]
\centering
\footnotesize
\caption{Estimated achievable threshold $\tilde{\delta}=1.05\,\delta^\star$ (5\% margin) and the range of PBmPL failure bounds across adversaries, reported as $\log_{10} p_{\mathrm{fail}}$.}
\label{Tb:recomdelta}
\vspace{0.10in}
\begin{tabular}{lccc}
\toprule
Dataset ($\epsilon$) & $\tilde{\delta}$ & Range of $\log_{10} p_{\mathrm{fail}}$ \\
\midrule
AG News ($\epsilon=0.3$) & 0.063000 & $[-7967,\,-87]$ \\
AG News ($\epsilon=0.4$) & 0.025620 & $[-1969,\,-14]$ \\
AG News ($\epsilon=0.5$) & 0.010080 & $[-488,\,-2]$ \\
\midrule
IMDB ($\epsilon=0.3$) & 0.136500 & $[-790639,\,-1803]$ \\
IMDB ($\epsilon=0.4$) & 0.054915 & $[-128719,\,-292]$ \\
IMDB ($\epsilon=0.5$) & 0.020895 & $[-18635,\,-42]$ \\
\midrule
Amazon ($\epsilon=0.3$) & 0.108255 & $[-86243,\,-761]$ \\
Amazon ($\epsilon=0.4$) & 0.051765 & $[-25734,\,-174]$ \\
Amazon ($\epsilon=0.5$) & 0.024150 & $[-5984,\,-38]$ \\
\bottomrule
\end{tabular}
\end{table}}

\vspace{-0.00in}
\begin{table*}[h]
\caption{Estimated achievable threshold $\tilde{\delta}=1.05\,\delta^\star$ (5\% margin).}
\vspace{-0.05in}
\label{Tb:recomdelta}
\centering
\scriptsize 
\begin{tabular}{p{1.50cm} |  p{1.20cm} p{1.20cm} p{1.20cm} || p{1.20cm} p{1.20cm} p{1.20cm} || p{1.20cm} p{1.20cm} p{1.20cm}}
\toprule
\multicolumn{1}{ c|  }{}& \multicolumn{3}{ c|| }{RNN} & \multicolumn{3}{ c|| }{LSTM} & \multicolumn{3}{ c }{Transformer} \\
\hline
\multicolumn{1}{ c|  }{$\epsilon$}  & $2.40$& $2.50$& $2.60$& $2.40$& $2.50$& $2.60$& $2.40$& $2.50$& $2.60$\\ 
\toprule
\multicolumn{1}{ c|  }{AG News} & 0.0924 & 0.0876 & 0.0802 & 0.1124 & 0.1032 & 0.0965 & 0.1582 & 0.1566 & 0.1429 \\
\hline
\multicolumn{1}{ c|  }{IMDB Review } & 0.0915 & 0.0859 & 0.0743 & 0.0725 & 0.0649 & 0.0746 & 0.1199 & 0.1029 & 0.1005 \\
\hline
\multicolumn{1}{ c|  }{Amazon Review} & 0.0747 & 0.0689 & 0.0659 & 0.0785 & 0.0667 & 0.0653 & 0.1006 & 0.0895 & 0.0954 \\
\bottomrule
\end{tabular}
\vspace{-0.0in}
\end{table*}

\begin{table*}[h]
\caption{Lower bound on $\Pr[p_{\mathcal{X}_{\ell}^2}\le \delta]$ (calculated by Proposition \ref{prop:sampling}). 
Each entry reports $k$ such that
$\Pr[p_{\mathcal{X}_{\ell}^2}\le \delta]\ge 1-10^{-k}$.}
\label{Tb:recomdelta_prob}
\centering
\scriptsize
\begin{tabular}{p{1.50cm} |  p{1.20cm} p{1.20cm} p{1.20cm} || p{1.20cm} p{1.20cm} p{1.20cm} || p{1.20cm} p{1.20cm} p{1.20cm}}
\toprule
\multicolumn{1}{c|}{} & \multicolumn{3}{c||}{RNN} & \multicolumn{3}{c||}{LSTM} & \multicolumn{3}{c}{Transformer} \\
\hline
\multicolumn{1}{c|}{$\epsilon$} & $2.40$ & $2.50$ & $2.60$ & $2.40$ & $2.50$ & $2.60$ & $2.40$ & $2.50$ & $2.60$ \\
\toprule
\multicolumn{1}{c|}{AG News} &
$4.52\times 10^{5}$ & $4.06\times 10^{5}$ & $3.41\times 10^{5}$ &
$6.68\times 10^{5}$ & $5.64\times 10^{5}$ & $4.93\times 10^{5}$ &
$1.33\times 10^{6}$ & $1.30\times 10^{6}$ & $1.08\times 10^{6}$ \\
\hline
\multicolumn{1}{c|}{IMDB Review} &
$3.87\times 10^{5}$ & $3.41\times 10^{5}$ & $2.55\times 10^{5}$ &
$2.43\times 10^{5}$ & $1.95\times 10^{5}$ & $2.57\times 10^{5}$ &
$6.65\times 10^{5}$ & $4.89\times 10^{5}$ & $4.67\times 10^{5}$ \\
\hline
\multicolumn{1}{c|}{Amazon Review} &
$3.47\times 10^{5}$ & $2.95\times 10^{5}$ & $2.71\times 10^{5}$ &
$3.84\times 10^{5}$ & $2.77\times 10^{5}$ & $2.65\times 10^{5}$ &
$6.30\times 10^{5}$ & $4.98\times 10^{5}$ & $5.67\times 10^{5}$ \\
\bottomrule
\end{tabular}
\end{table*}

For each dataset and privacy budget $\epsilon$, we report the estimated achievable PBmPL threshold $\tilde{\delta}=1.05\,\delta^\star$, where $\delta^\star$ is the empirically attainable violation level and the $5\%$ factor provides a small safety margin. Columns correspond to the adversary used for auditing (RNN/LSTM/Transformer). The values are the resulting thresholds (in $[0,1]$), i.e., the target upper bounds on the PBmPL violation probability under the specified threat model; smaller values indicate stricter privacy targets. 

Using Proposition~\ref{prop:sampling}, in Table \ref{Tb:recomdelta_prob}, we translate each threshold $\delta$ into a concentration-based lower bound on
$\Pr\left[p_{\mathcal{X}_{\ell}^2}\le \delta\right]$, i.e., the probability that the true PBmPL violation probability does not exceed the target.
Because the resulting bounds are extremely close to $1$ at our sample sizes, each entry is reported in exponent form:
we give $k$ such that $\Pr\left[p_{\mathcal{X}_{\ell}^2}\le \delta\right]\ge 1-10^{-k}$. Larger $k$ therefore indicates higher confidence that the true PBmPL violation probability lies below $\delta$. We set $S_\ell$ to the number of audit samples: AG-News $=74{,}434{,}304$, IMDB $=65{,}015{,}552$, Amazon $=87{,}515{,}904$. The results show that even under these tighter settings, PBmPL failure probabilities remain astronomically small, confirming that violations are virtually impossible at scale.


\newpage 
\subsection{Utility Loss}
\label{app:subsec:UL}

\begin{figure*}[t]
\centering
\begin{minipage}{1.00\textwidth}
  \begin{subfigure}[t]{0.32\textwidth}
    \centering
    \includegraphics[width=\textwidth, height=0.23\textheight]{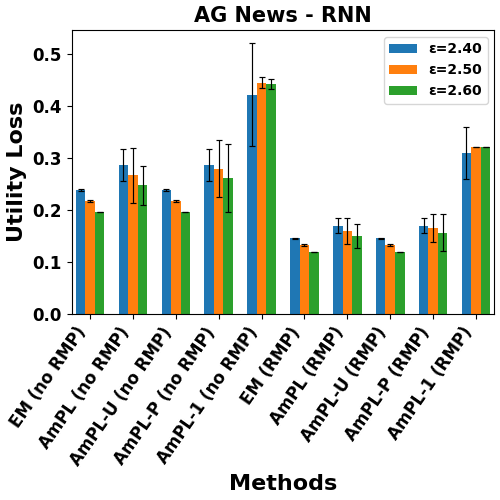}
    \caption{AG News.}
    \label{fig:mPL:rnn}
  \end{subfigure}
  \hfill
  \begin{subfigure}[t]{0.32\textwidth}
    \centering
    \includegraphics[width=\textwidth, height=0.23\textheight]{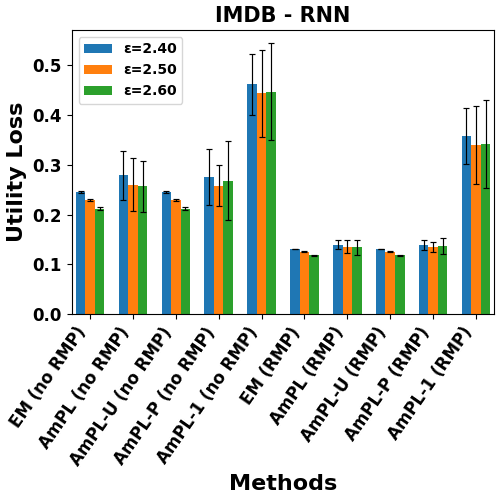}
    \caption{IMDB.}
    \label{fig:mPL:lstm}
  \end{subfigure}
  \hfill
  \begin{subfigure}[t]{0.32\textwidth}
    \centering
    \includegraphics[width=\textwidth, height=0.23\textheight]{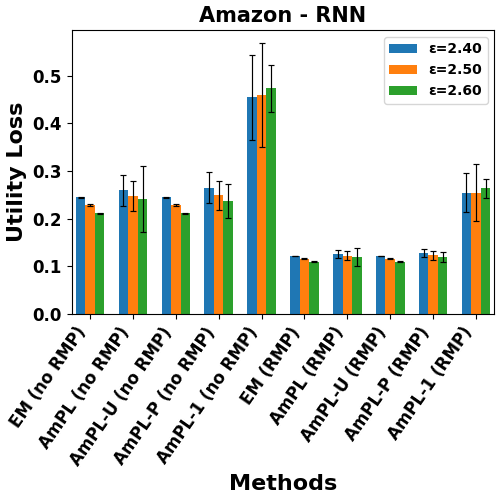}
    \caption{Amazon.}
    \label{fig:mPL:transformer}
\end{subfigure}
\vspace{-0.00in}
\caption{Utility loss (applying RNN as the adversarial model).}
\label{fig:exp:UL:RNN}
\end{minipage}
\vspace{-0.05in}
\end{figure*}

\begin{figure*}[t]
\centering
\begin{minipage}{1.00\textwidth}
  \begin{subfigure}[t]{0.32\textwidth}
    \centering
    \includegraphics[width=\textwidth, height=0.23\textheight]{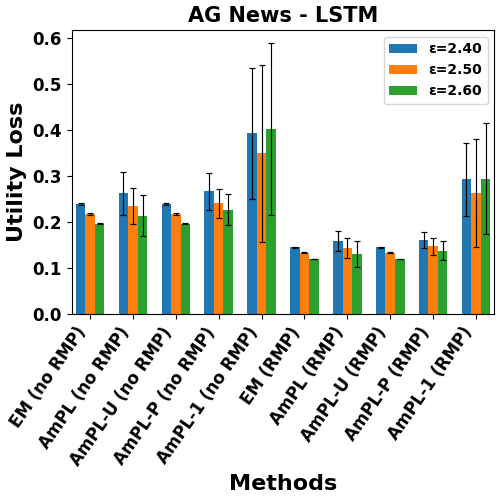}
    \caption{AG News.}
    \label{fig:mPL:rnn}
  \end{subfigure}
  \hfill
  \begin{subfigure}[t]{0.32\textwidth}
    \centering
    \includegraphics[width=\textwidth, height=0.23\textheight]{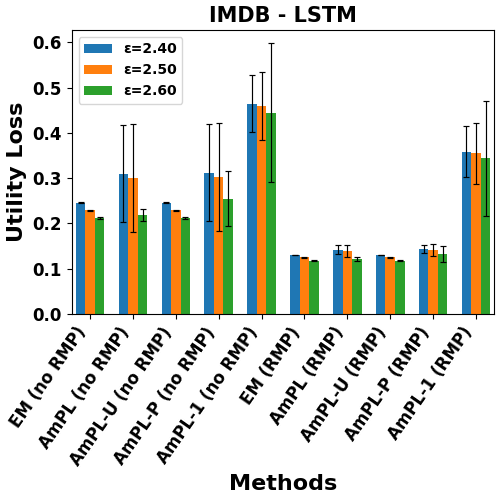}
    \caption{IMDB.}
    \label{fig:mPL:lstm}
  \end{subfigure}
  \hfill
  \begin{subfigure}[t]{0.32\textwidth}
    \centering
    \includegraphics[width=\textwidth, height=0.23\textheight]{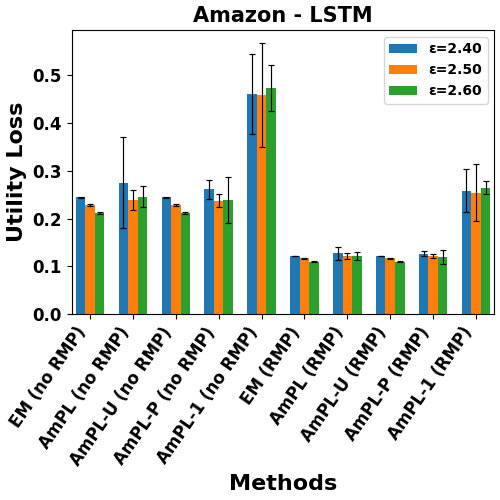}
    \caption{Amazon.}
    \label{fig:mPL:transformer}
\end{subfigure}
\vspace{-0.00in}
\caption{Utility loss (applying LSTM as the adversarial model).}
\label{fig:exp:UL:LSTM}
\end{minipage}
\vspace{-0.15in}
\end{figure*}

\begin{figure*}[t]
\centering
\begin{minipage}{1.00\textwidth}
  \begin{subfigure}[t]{0.32\textwidth}
    \centering
    \includegraphics[width=\textwidth, height=0.23\textheight]{./fig/exp/AGNews_Transformer_UtilityLoss}
    \caption{AG News.}
    \label{fig:mPL:rnn}
  \end{subfigure}
  \hfill
  \begin{subfigure}[t]{0.32\textwidth}
    \centering
    \includegraphics[width=\textwidth, height=0.23\textheight]{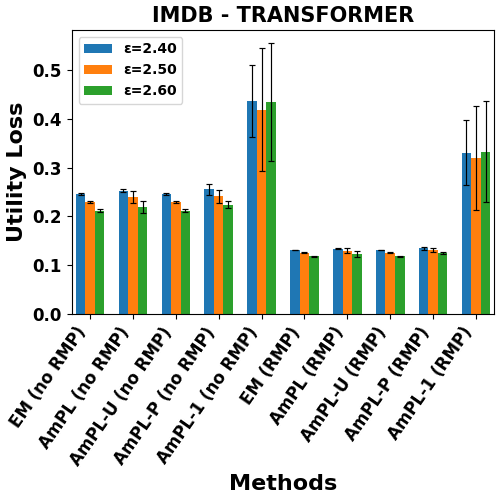}
    \caption{IMDB.}
    \label{fig:mPL:lstm}
  \end{subfigure}
  \hfill
  \begin{subfigure}[t]{0.32\textwidth}
    \centering
    \includegraphics[width=\textwidth, height=0.23\textheight]{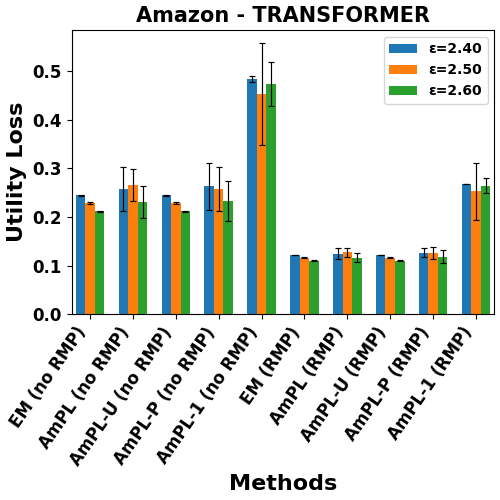}
    \caption{Amazon.}
    \label{fig:mPL:transformer}
  \end{subfigure}
  \vspace{-0.00in}
  \caption{Utility loss  (applying Transformer as the adversarial model).}
\label{fig:exp:UL:Transformer}
\end{minipage}
\vspace{-0.05in}
\end{figure*}

Figures~\ref{fig:exp:UL:RNN}--\ref{fig:exp:UL:Transformer} report \emph{utility loss} for the same method set and $(\epsilon\in\{2.40,2.50,2.60\})$ configuration, differing only by the adversary: Fig.~\ref{fig:exp:UL:RNN} uses a \emph{RNN}, Fig.~\ref{fig:exp:UL:LSTM} uses a \emph{LSTM}, and Fig.~\ref{fig:exp:UL:Transformer} uses a \emph{Transformer}; in each figure, panels (a)–(c) correspond to AG News, IMDB, and Amazon, and bars are grouped by methods (EM/AmPL variants, with and without the remapping step, RMP). Across all datasets and adversaries, utility loss \emph{monotonically decreases} as $\epsilon$ increases (privacy–utility trade-off); within the AmPL family, \textsc{AmPL-U} consistently achieves the \emph{lowest} or near-lowest loss (by design), while \textsc{AmPL} closely follows with a small gap and \textsc{AmPL-P} typically lie between \textsc{AmPL} and EM. \textsc{AmPL-1} has the worst utility. Enabling \textsc{RMP} further \emph{reduces} loss compared to the corresponding “no RMP” variants. The \emph{relative ordering} of methods is consistent with the results depicted in Fig.~\ref{fig:exp:UL} (Transformer), indicating robustness to the attacker model; dataset-wise, Amazon shows the largest absolute losses, followed by AG News and IMDB, but the method ranking and $\epsilon$-sensitivity remain consistent.

We also observe that, across all datasets, adversaries, and $\epsilon\in\{2.40,2.50,2.60\}$, enabling RMP yields a large and consistent reduction in utility loss. 
For the core methods (EM/AmPL/AmPL-U/AmPL-P), the no-RMP losses typically fall in the $\approx 0.22$--$0.30$ range, while the corresponding RMP variants concentrate around $\approx 0.11$--$0.17$. Concretely, on AG News, RMP brings these methods from roughly $\approx 0.24$--$0.29$ (no-RMP) down to $\approx 0.13$--$0.17$ (RMP), i.e., about a \emph{35--45\%} reduction; on IMDB, from $\approx 0.23$--$0.29$ down to $\approx 0.11$--$0.14$ (about \emph{45--55\%}); and on Amazon, from $\approx 0.22$--$0.27$ down to $\approx 0.10$--$0.13$ (about \emph{45--55\%}).
For \emph{AmPL-1}, the absolute loss remains higher, but RMP still provides noticeable improvements: from $\approx 0.33$--$0.44$ to $\approx 0.22$--$0.33$ on AG News (\emph{20--35\%}), from $\approx 0.40$--$0.47$ to $\approx 0.33$--$0.36$ on IMDB (\emph{20--25\%}), and from $\approx 0.45$--$0.49$ to $\approx 0.25$--$0.30$ on Amazon (\emph{35--45\%}).

\subsection{Tradeoff Between Utility and Violation Rate}
\label{subsec:tradeoff}

\begin{figure*}[t]
\centering
\begin{minipage}{1.00\textwidth}
  \begin{subfigure}[t]{0.32\textwidth}
    \centering
    \includegraphics[width=\textwidth, height=0.16\textheight]{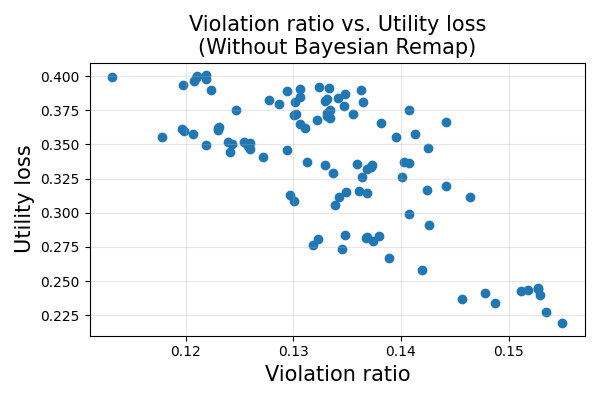}
    \caption{AG News.}
    \label{fig:mPL:rnn}
  \end{subfigure}
  \hfill
  \begin{subfigure}[t]{0.32\textwidth}
    \centering
    \includegraphics[width=\textwidth, height=0.16\textheight]{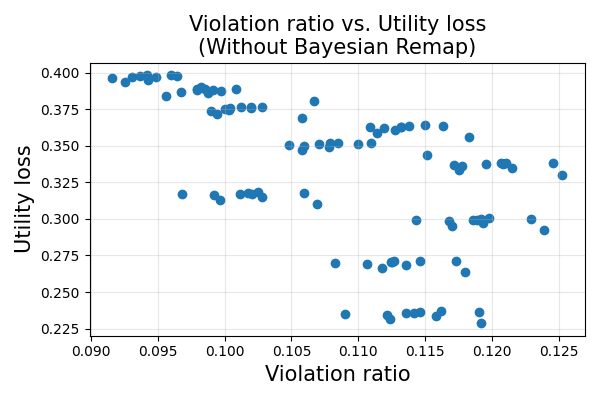}
    \caption{IMDB.}
    \label{fig:mPL:lstm}
  \end{subfigure}
  \hfill
  \begin{subfigure}[t]{0.32\textwidth}
    \centering
    \includegraphics[width=\textwidth, height=0.16\textheight]{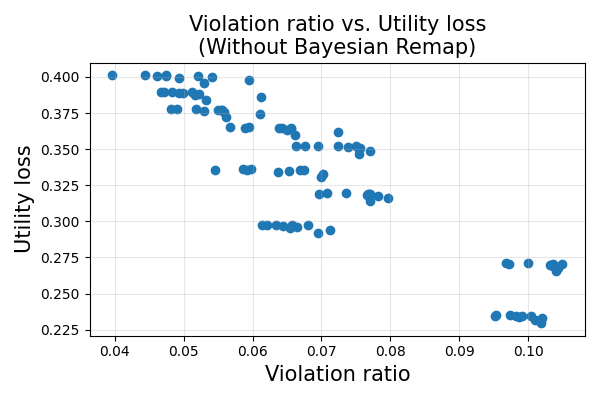}
    \caption{Amazon.}
    \label{fig:mPL:transformer}
\end{subfigure}
\vspace{-0.00in}
\caption{\small Trade-off between empirical mPL violation ratio and utility loss for AmPL without Bayesian remap (applying Transformer as the adversarial model). }
\label{fig:tradeoff}
\end{minipage}
\vspace{-0.05in}
\end{figure*}

Figure~\ref{fig:tradeoff} illustrates the trade-off between utility loss and the empirical mPL violation ratio for AmPL without Bayesian remap. Each point corresponds to one configuration of the mechanism, obtained by varying $\alpha_1,\alpha_2 \in [0.1,1.0]$ with step $0.1$, evaluated at base privacy level $\varepsilon = 2.5$ on $74{,}434{,}304$ $(x_i,x_j,y)$ triples. The scatter plot shows that as the violation ratio decreases (moving left), the utility loss generally increases, illustrating the expected privacy–utility trade-off: configurations that inject less noise achieve lower utility loss but suffer higher violation ratios, whereas configurations enforcing lower violation ratios incur slightly higher utility loss.

\DEL{
\vspace{-0.00in}
\begin{table*}[t]
\caption{Posterior leakage violation ratio.}
\vspace{-0.00in}
\label{Tb:exp:mPL_add}
\centering
\scriptsize 
\begin{tabular}
{ c|c|c|c||c|c|c||c|c|c }
\toprule
\multicolumn{1}{ c|  }{}& \multicolumn{3}{ c|| }{RNN} & \multicolumn{3}{ c|| }{LSTM} & \multicolumn{3}{ c }{Transformer} \\
\hline
\multicolumn{10}{ c }{AG News Dataset} \\
\hline
\multicolumn{1}{ c|  }{$\epsilon$}  & $0.30$& $0.40$& $0.50$& $0.30$& $0.40$& $0.50$& $0.30$& $0.40$& $0.50$\\ 
\hline
\hline
\multicolumn{1}{ c|  }{EM} & 0.0662±0.0133 & 0.0273±0.0072 & 0.0118±0.0051 & 0.0411±0.0075 & 0.0139±0.0042 & 0.0053±0.0015 & 0.0668±0.0118 & 0.0296±0.0126 & 0.0124±0.0062 \\
\multicolumn{1}{ c|  }{AmPL} & 0.0578±0.0127 & 0.0216±0.0074 & 0.0080±0.0059 & 0.0344±0.0084 & 0.0114±0.0032 & 0.0030±0.0012 & 0.0600±0.0170 & 0.0244±0.0105 & 0.0096±0.0028 \\
\multicolumn{1}{ c|  }{AmPL-U} & 0.0651±0.0152 & 0.0275±0.0066 & 0.0114±0.0061 & 0.0415±0.0085 & 0.0140±0.0028 & 0.0050±0.0023 & 0.0684±0.0124 & 0.0299±0.0126 & 0.0122±0.0062 \\
\multicolumn{1}{ c|  }{AmPL-P} & 0.0577±0.0126 & 0.0217±0.0079 & 0.0081±0.0060 & 0.0345±0.0087 & 0.0114±0.0033 & 0.0030±0.0012 & 0.0600±0.0170 & 0.0242±0.0094 & 0.0092±0.0044 \\
\multicolumn{1}{ c|  }{AmPL-1} & 0.1028±0.0261 & 0.0381±0.0189 & 0.0138±0.0064 & 0.0341±0.0181 & 0.0124±0.0093 & 0.0037±0.0031 & 0.0614±0.0377 & 0.0238±0.0184 & 0.0081±0.0076 \\
\hline
\multicolumn{10}{ c }{IMDB Review Dataset} \\
\hline
\multicolumn{1}{ c|  }{$\epsilon$}  & $0.30$& $0.40$& $0.50$& $0.30$& $0.40$& $0.50$& $0.30$& $0.40$& $0.50$\\ 
\hline
\hline
\multicolumn{1}{ c|  }{EM} & 0.0629±0.0282 & 0.0268±0.0135 & 0.0122±0.0080 & 0.0008±0.0010 & 0.0001±0.0001 & 0.0000±0.0001 & 0.1348±0.0634 & 0.0564±0.0425 & 0.0234±0.0202 \\
\multicolumn{1}{ c|  }{AmPL} & 0.0554±0.0234 & 0.0224±0.0158 & 0.0089±0.0087 & 0.0004±0.0005 & 0.0000±0.0001 & 0.0000±0.0000 & 0.1300±0.0566 & 0.0523±0.0354 & 0.0199±0.0181 \\
\multicolumn{1}{ c|  }{AmPL-U} & 0.0621±0.0266 & 0.0274±0.0125 & 0.0121±0.0080 & 0.0008±0.0009 & 0.0001±0.0001 & 0.0000±0.0001 & 0.1340±0.0627 & 0.0566±0.0418 & 0.0232±0.0196 \\
\multicolumn{1}{ c|  }{AmPL-P} & 0.0554±0.0235 & 0.0223±0.0161 & 0.0089±0.0087 & 0.0003±0.0004 & 0.0000±0.0000 & 0.0000±0.0000 & 0.1299±0.0571 & 0.0527±0.0354 & 0.0200±0.0180 \\
\multicolumn{1}{ c|  }{AmPL-1} & 0.1155±0.0584 & 0.0537±0.0368 & 0.0179±0.0095 & 0.0145±0.0198 & 0.0024±0.0057 & 0.0004±0.0015 & 0.2253±0.1769 & 0.1199±0.1463 & 0.0508±0.0754 \\
\hline
\multicolumn{10}{ c }{Amazon Review Dataset} \\
\hline
\multicolumn{1}{ c|  }{$\epsilon$}  & $0.30$& $0.40$& $0.50$& $0.30$& $0.40$& $0.50$& $0.30$& $0.40$& $0.50$\\ 
\hline
\hline
\multicolumn{1}{ c|  }{EM} & 0.0881±0.0204 & 0.0376±0.0152 & 0.0205±0.0090 & 0.0602±0.0118 & 0.0260±0.0131 & 0.0118±0.0072 & 0.1074±0.0298 & 0.0527±0.0214 & 0.0261±0.0135 \\
\multicolumn{1}{ c|  }{AmPL} & 0.0815±0.0225 & 0.0351±0.0139 & 0.0165±0.0136 & 0.0534±0.0166 & 0.0218±0.0128 & 0.0097±0.0108 & 0.1031±0.0374 & 0.0493±0.0231 & 0.0230±0.0128 \\
\multicolumn{1}{ c|  }{AmPL-U} & 0.0881±0.0214 & 0.0391±0.0147 & 0.0205±0.0097 & 0.0602±0.0139 & 0.0263±0.0136 & 0.0117±0.0071 & 0.1077±0.0331 & 0.0535±0.0203 & 0.0258±0.0128 \\
\multicolumn{1}{ c|  }{AmPL-P} & 0.0814±0.0288 & 0.0356±0.0144 & 0.0165±0.0139 & 0.0532±0.0169 & 0.0203±0.0135 & 0.0087±0.0090 & 0.1036±0.0375 & 0.0496±0.0248 & 0.0235±0.0141 \\
\multicolumn{1}{ c|  }{AmPL-1} & 0.2340±0.1116 & 0.1162±0.0868 & 0.0600±0.0638 & 0.1154±0.0821 & 0.0566±0.0586 & 0.0258±0.0313 & 0.2356±0.1973 & 0.1266±0.1592 & 0.0659±0.0825 \\
\bottomrule
\end{tabular}
\end{table*}}

\DEL{
\vspace{-0.00in}
\begin{table*}[t]
\caption{Expected utility loss.}
\vspace{-0.00in}
\label{Tb:exp:UL_add}
\centering
\scriptsize 
\begin{tabular}
{ c|c|c|c||c|c|c||c|c|c }
\toprule
\multicolumn{1}{ c|  }{}& \multicolumn{3}{ c|| }{RNN} & \multicolumn{3}{ c|| }{LSTM} & \multicolumn{3}{ c }{Transformer} \\
\hline
\multicolumn{10}{ c }{AG News Dataset} \\
\hline
\multicolumn{1}{ c|  }{$\epsilon$}  & $0.30$& $0.40$& $0.50$& $0.30$& $0.40$& $0.50$& $0.30$& $0.40$& $0.50$\\ 
\hline
\hline
\multicolumn{1}{ c|  }{EM} & 0.3344±0.0068 & 0.3290±0.0046 & 0.3236±0.0092 & 0.3344±0.0068 & 0.3290±0.0046 & 0.3236±0.0092 & 0.3344±0.0068 & 0.3290±0.0046 & 0.3236±0.0092 \\
\multicolumn{1}{ c|  }{AmPL} & 0.3352±0.0088 & 0.3347±0.0149 & 0.3305±0.0199 & 0.3350±0.0080 & 0.3312±0.0100 & 0.3302±0.0191 & 0.3351±0.0086 & 0.3329±0.0106 & 0.3273±0.0173 \\
\multicolumn{1}{ c|  }{AmPL-U} & 0.3344±0.0068 & 0.3290±0.0046 & 0.3236±0.0092 & 0.3344±0.0068 & 0.3290±0.0046 & 0.3236±0.0092 & 0.3344±0.0068 & 0.3290±0.0046 & 0.3236±0.0092 \\
\multicolumn{1}{ c|  }{AmPL-P} & 0.3352±0.0089 & 0.3348±0.0150 & 0.3306±0.0197 & 0.3353±0.0088 & 0.3315±0.0101 & 0.3309±0.0196 & 0.3352±0.0087 & 0.3333±0.0097 & 0.3270±0.0172 \\
\multicolumn{1}{ c|  }{AmPL-1} & 0.4014±0.0038 & 0.4013±0.0060 & 0.3952±0.0172 & 0.4012±0.0108 & 0.3988±0.0079 & 0.3973±0.0116 & 0.3998±0.0077 & 0.3969±0.0115 & 0.3987±0.0103 \\
\multicolumn{1}{ c|  }{Baseline-RMP} & 0.1812±0.0000 & 0.1812±0.0000 & 0.1812±0.0000 & 0.1812±0.0000 & 0.1812±0.0000 & 0.1812±0.0000 & 0.1812±0.0000 & 0.1812±0.0000 & 0.1812±0.0000 \\
\multicolumn{1}{ c|  }{AmPL-RMP} & 0.1812±0.0000 & 0.1812±0.0000 & 0.1812±0.0000 & 0.1812±0.0000 & 0.1812±0.0000 & 0.1812±0.0000 & 0.1812±0.0000 & 0.1812±0.0000 & 0.1812±0.0000 \\
\multicolumn{1}{ c|  }{AmPL-U-RMP} & 0.1812±0.0000 & 0.1812±0.0000 & 0.1812±0.0000 & 0.1812±0.0000 & 0.1812±0.0000 & 0.1812±0.0000 & 0.1812±0.0000 & 0.1812±0.0000 & 0.1812±0.0000 \\
\multicolumn{1}{ c|  }{AmPL-P-RMP} & 0.1812±0.0000 & 0.1812±0.0000 & 0.1812±0.0000 & 0.1812±0.0000 & 0.1812±0.0000 & 0.1812±0.0000 & 0.1812±0.0000 & 0.1812±0.0000 & 0.1812±0.0000 \\
\multicolumn{1}{ c|  }{AmPL-1-RMP} & 0.2991±0.0016 & 0.2996±0.0004 & 0.2996±0.0003 & 0.2991±0.0017 & 0.2996±0.0005 & 0.2994±0.0004 & 0.2995±0.0002 & 0.2995±0.0008 & 0.2993±0.0005 \\
\hline
\multicolumn{10}{ c }{IMDB Review Dataset} \\
\hline
\multicolumn{1}{ c|  }{$\epsilon$}  & $0.30$& $0.40$& $0.50$& $0.30$& $0.40$& $0.50$& $0.30$& $0.40$& $0.50$\\ 
\hline
\hline
\multicolumn{1}{ c|  }{EM} & 0.3218±0.0075 & 0.3145±0.0036 & 0.3112±0.0051 & 0.3218±0.0075 & 0.3145±0.0036 & 0.3112±0.0051 & 0.3218±0.0075 & 0.3145±0.0036 & 0.3112±0.0051 \\
\multicolumn{1}{ c|  }{AmPL} & 0.3244±0.0066 & 0.3189±0.0105 & 0.3156±0.0158 & 0.3296±0.0171 & 0.3162±0.0036 & 0.3119±0.0061 & 0.3278±0.0121 & 0.3235±0.0209 & 0.3239±0.0239 \\
\multicolumn{1}{ c|  }{AmPL-U} & 0.3218±0.0075 & 0.3145±0.0036 & 0.3112±0.0051 & 0.3218±0.0075 & 0.3145±0.0036 & 0.3112±0.0051 & 0.3218±0.0075 & 0.3145±0.0036 & 0.3112±0.0051 \\
\multicolumn{1}{ c|  }{AmPL-P} & 0.3244±0.0066 & 0.3211±0.0148 & 0.3157±0.0162 & 0.3313±0.0145 & 0.3289±0.0252 & 0.3358±0.0086 & 0.3284±0.0127 & 0.3237±0.0213 & 0.3238±0.0239 \\
\multicolumn{1}{ c|  }{AmPL-1} & 0.4633±0.0143 & 0.4632±0.0139 & 0.4609±0.0103 & 0.4602±0.0075 & 0.4627±0.0130 & 0.4608±0.0103 & 0.4607±0.0078 & 0.4634±0.0142 & 0.4614±0.0073 \\
\multicolumn{1}{ c|  }{Baseline-RMP} & 0.1507±0.0000 & 0.1507±0.0001 & 0.1507±0.0003 & 0.1507±0.0000 & 0.1507±0.0001 & 0.1507±0.0003 & 0.1507±0.0000 & 0.1507±0.0001 & 0.1507±0.0003 \\
\multicolumn{1}{ c|  }{AmPL-RMP} & 0.1507±0.0000 & 0.1507±0.0001 & 0.1507±0.0001 & 0.1507±0.0000 & 0.1507±0.0001 & 0.1507±0.0003 & 0.1507±0.0000 & 0.1507±0.0001 & 0.1507±0.0001 \\
\multicolumn{1}{ c|  }{AmPL-U-RMP} & 0.1507±0.0000 & 0.1507±0.0001 & 0.1507±0.0003 & 0.1507±0.0000 & 0.1507±0.0001 & 0.1507±0.0003 & 0.1507±0.0000 & 0.1507±0.0001 & 0.1507±0.0003 \\
\multicolumn{1}{ c|  }{AmPL-P-RMP} & 0.1507±0.0000 & 0.1507±0.0001 & 0.1507±0.0001 & 0.1507±0.0000 & 0.1507±0.0001 & 0.1507±0.0000 & 0.1507±0.0000 & 0.1507±0.0001 & 0.1507±0.0001 \\
\multicolumn{1}{ c|  }{AmPL-1-RMP} & 0.3401±0.0000 & 0.3401±0.0000 & 0.3420±0.0107 & 0.3401±0.0000 & 0.3401±0.0000 & 0.3432±0.0083 & 0.3401±0.0000 & 0.3401±0.0000 & 0.3415±0.0061 \\
\hline
\multicolumn{10}{ c }{Amazon Review Dataset} \\
\hline
\multicolumn{1}{ c|  }{$\epsilon$}  & $0.30$& $0.40$& $0.50$& $0.30$& $0.40$& $0.50$& $0.30$& $0.40$& $0.50$\\ 
\hline
\hline
\multicolumn{1}{ c|  }{EM} & 0.3045±0.0039 & 0.2979±0.0048 & 0.2889±0.0044 & 0.3045±0.0039 & 0.2979±0.0048 & 0.2889±0.0044 & 0.3045±0.0039 & 0.2979±0.0048 & 0.2889±0.0044 \\
\multicolumn{1}{ c|  }{AmPL} & 0.3074±0.0109 & 0.3055±0.0170 & 0.3029±0.0237 & 0.3083±0.0091 & 0.3083±0.0140 & 0.2982±0.0238 & 0.3059±0.0086 & 0.3030±0.0127 & 0.2928±0.0101 \\
\multicolumn{1}{ c|  }{AmPL-U} & 0.3045±0.0039 & 0.2979±0.0048 & 0.2889±0.0044 & 0.3045±0.0039 & 0.2979±0.0048 & 0.2889±0.0044 & 0.3045±0.0039 & 0.2979±0.0048 & 0.2889±0.0044 \\
\multicolumn{1}{ c|  }{AmPL-P} & 0.3077±0.0119 & 0.3072±0.0178 & 0.3029±0.0236 & 0.3082±0.0090 & 0.3109±0.0106 & 0.3031±0.0239 & 0.3059±0.0084 & 0.3030±0.0125 & 0.2912±0.0057 \\
\multicolumn{1}{ c|  }{AmPL-1} & 0.4702±0.0162 & 0.4720±0.0090 & 0.4709±0.0093 & 0.4715±0.0131 & 0.4710±0.0126 & 0.4676±0.0111 & 0.4721±0.0154 & 0.4695±0.0216 & 0.4688±0.0133 \\
\multicolumn{1}{ c|  }{Baseline-RMP} & 0.1341±0.0001 & 0.1341±0.0007 & 0.1341±0.0007 & 0.1341±0.0001 & 0.1341±0.0007 & 0.1341±0.0007 & 0.1341±0.0001 & 0.1341±0.0007 & 0.1341±0.0007 \\
\multicolumn{1}{ c|  }{AmPL-RMP} & 0.1341±0.0001 & 0.1340±0.0005 & 0.1341±0.0006 & 0.1341±0.0001 & 0.1341±0.0005 & 0.1340±0.0005 & 0.1341±0.0001 & 0.1342±0.0003 & 0.1340±0.0005 \\
\multicolumn{1}{ c|  }{AmPL-U-RMP} & 0.1341±0.0001 & 0.1341±0.0007 & 0.1341±0.0007 & 0.1341±0.0001 & 0.1341±0.0007 & 0.1341±0.0007 & 0.1341±0.0001 & 0.1341±0.0007 & 0.1341±0.0007 \\
\multicolumn{1}{ c|  }{AmPL-P-RMP} & 0.1341±0.0001 & 0.1341±0.0004 & 0.1341±0.0006 & 0.1341±0.0001 & 0.1340±0.0004 & 0.1340±0.0005 & 0.1341±0.0001 & 0.1342±0.0003 & 0.1340±0.0007 \\
\multicolumn{1}{ c|  }{AmPL-1-RMP} & 0.2993±0.0000 & 0.2993±0.0000 & 0.2993±0.0000 & 0.2993±0.0000 & 0.2993±0.0000 & 0.2993±0.0000 & 0.2993±0.0000 & 0.2993±0.0000 & 0.2993±0.0000 \\
\bottomrule
\end{tabular}
\end{table*}
}

\subsection{Posterior Leakage Comparison with Wider Privacy-Budget Range.}

Figure~\ref{fig:eps_sweep_leakage} reports the \emph{average} mPL given different privacy budget $\epsilon$ under the RNN attacker, for AG News, IMDB Reviews, and Amazon Reviews. Across datasets, the curve exhibits a clear regime change: mPL stays relatively low and stable for smaller-to-moderate $\epsilon$ (when $\epsilon \leq 2.0$), then rises sharply in a transition region (when $2.0\leq \epsilon \leq 3.1$), and finally saturates at a much higher level for larger $\epsilon$ (when $\epsilon \geq 3.1$). 
Figure~\ref{fig:eps_violation} plots the mPL \emph{violation ratio} versus $\epsilon$ (again under the RNN attacker) for the same three datasets. Viewed together with Figure~\ref{fig:eps_sweep_leakage}, Figure~\ref{fig:eps_violation} quantifies how much probability mass lies above the budget $\epsilon$ at each operating point. In particular, the violation ratio is largest around the similar transition region ($2.0 \leq \epsilon \leq 3.1$) suggested by the “leakage cliff” in Figure~\ref{fig:eps_sweep_leakage}, reflecting that this is where the leakage distribution moves upward fastest relative to the budget. For larger $\epsilon$, the violation ratio decreases, consistent with the fact that the threshold $\epsilon$ itself becomes more permissive even though the average leakage has already entered a high-leakage regime.%

Figures~\ref{fig:eps_sweep_leakage}--\ref{fig:eps_violation} motivate our choice of $\epsilon\in\{2.4,2.5,2.6\}$: Figure~\ref{fig:eps_sweep_leakage} reveals a ``leakage cliff'' where average posterior leakage rises sharply, and Figure~\ref{fig:eps_violation} shows a corresponding surge in violation ratio in the same transition region. We thus select $\{2.4,2.5,2.6\}$ to stay on the low-leakage/low-violation side of this cliff while retaining useful utility; beyond the peak, the violation ratio changes more gradually with $\epsilon$, indicating reduced sensitivity to further noise changes.

\begin{figure*}[t]
\centering
\begin{minipage}{1.00\textwidth}
  \begin{subfigure}[t]{0.32\textwidth}
    \centering
    \includegraphics[width=\textwidth, height=0.18\textheight]{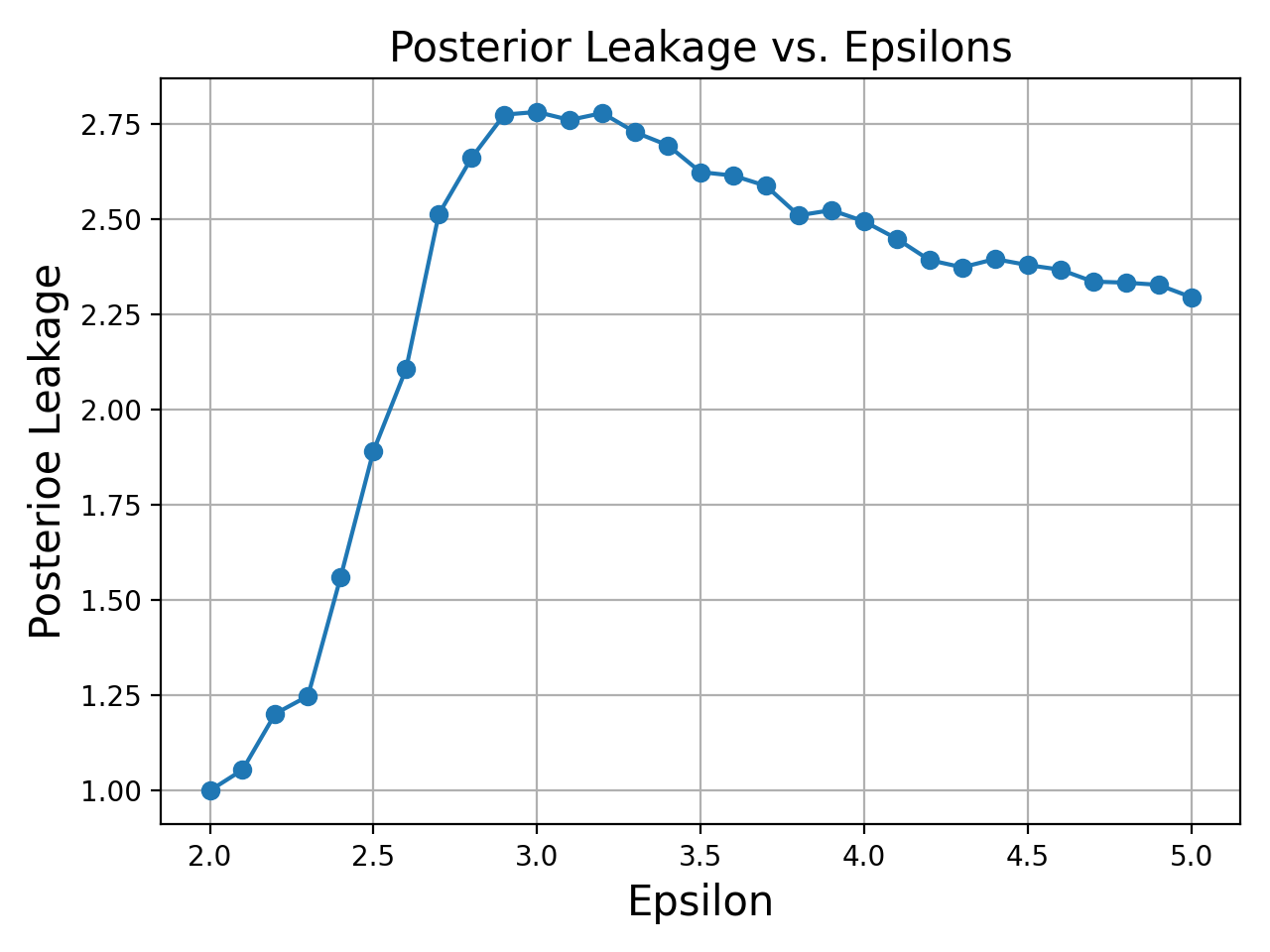}
    \caption{AG News.}
  \end{subfigure}
  \hfill
  \begin{subfigure}[t]{0.32\textwidth}
    \centering
    \includegraphics[width=\textwidth, height=0.18\textheight]{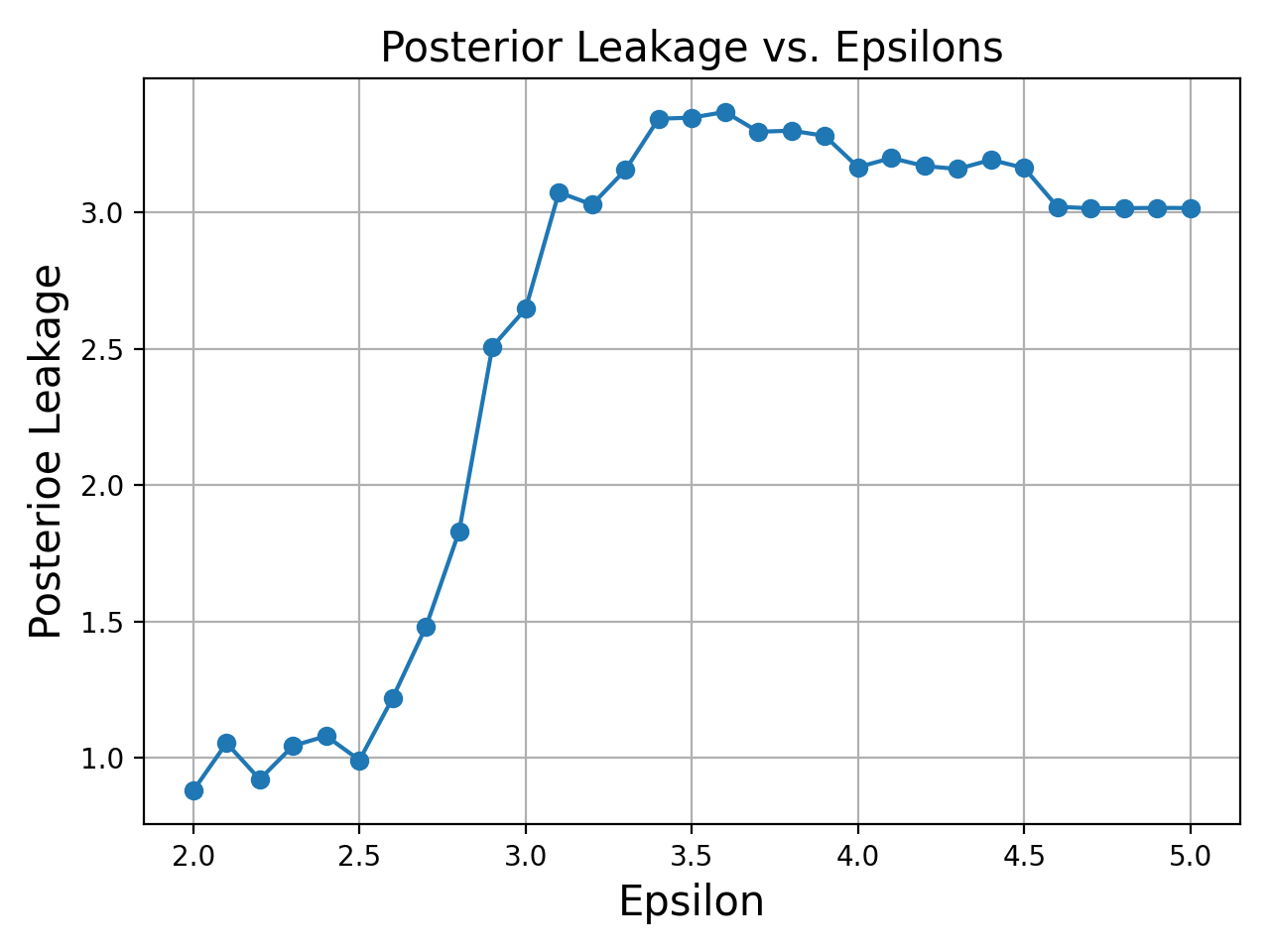}
    \caption{IMDB.}
  \end{subfigure}
  \hfill
  \begin{subfigure}[t]{0.32\textwidth}
    \centering
    \includegraphics[width=\textwidth, height=0.18\textheight]{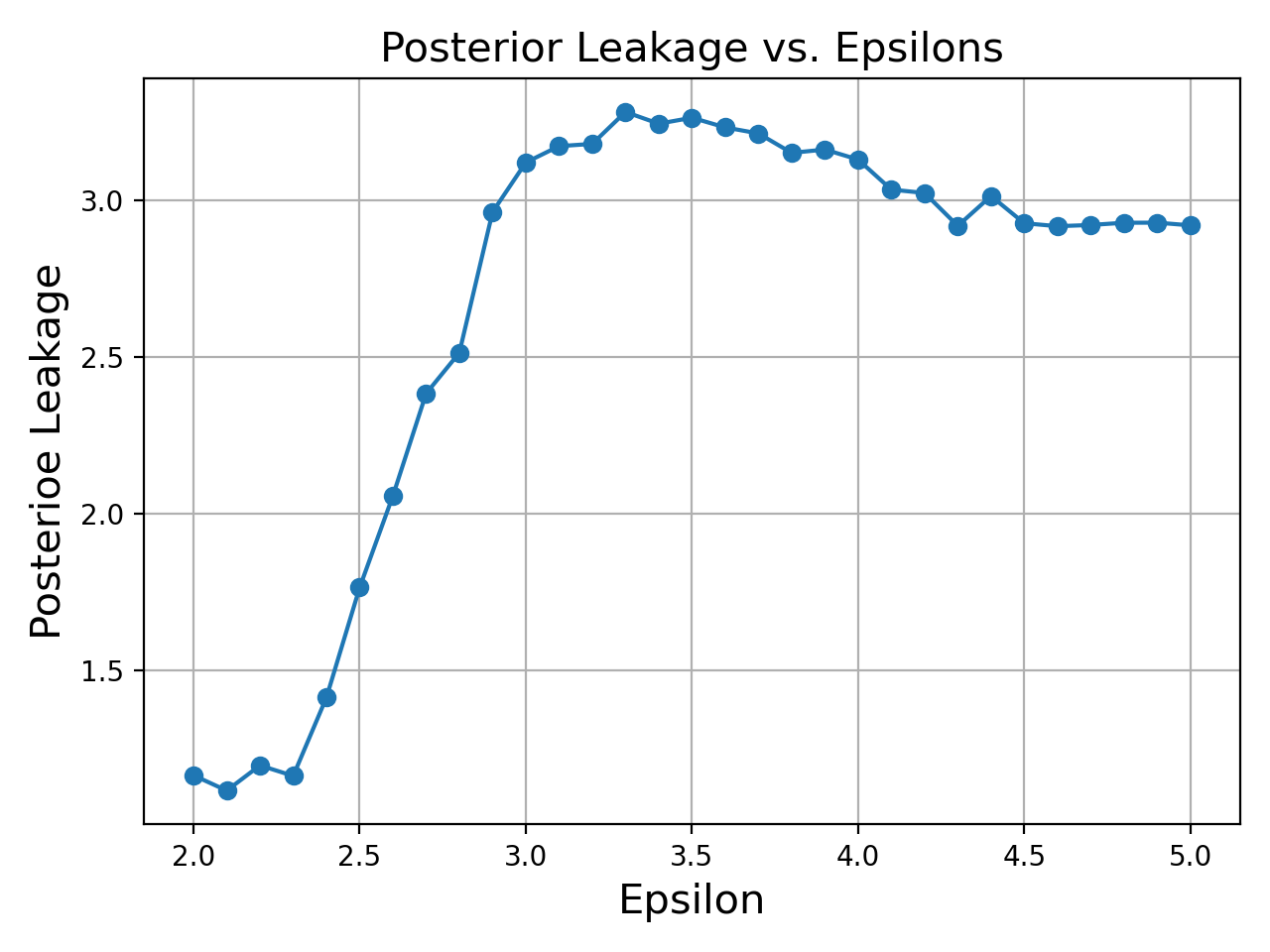}
    \caption{Amazon.}
\end{subfigure}
\vspace{-0.00in}
\caption{\small Average mPL given different $\epsilon$ (applying RNN as the adversarial model).}
\label{fig:eps_sweep_leakage}
\end{minipage}
\vspace{-0.05in}
\end{figure*}

\begin{figure*}[t]
\centering
\begin{minipage}{1.00\textwidth}
  \begin{subfigure}[t]{0.32\textwidth}
    \centering
    \includegraphics[width=\textwidth, height=0.18\textheight]{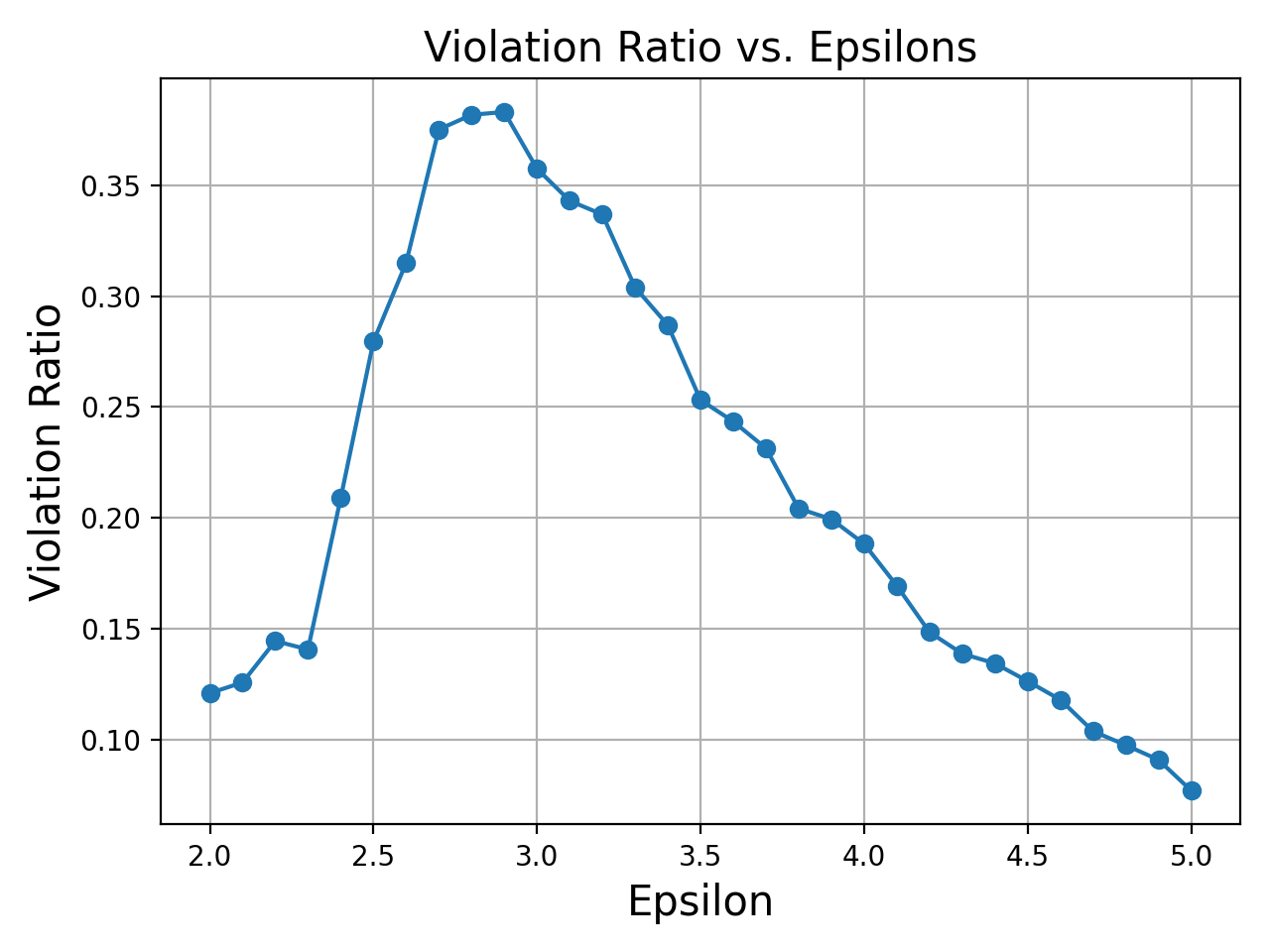}
    \caption{AG News.}
  \end{subfigure}
  \hfill
  \begin{subfigure}[t]{0.32\textwidth}
    \centering
    \includegraphics[width=\textwidth, height=0.18\textheight]{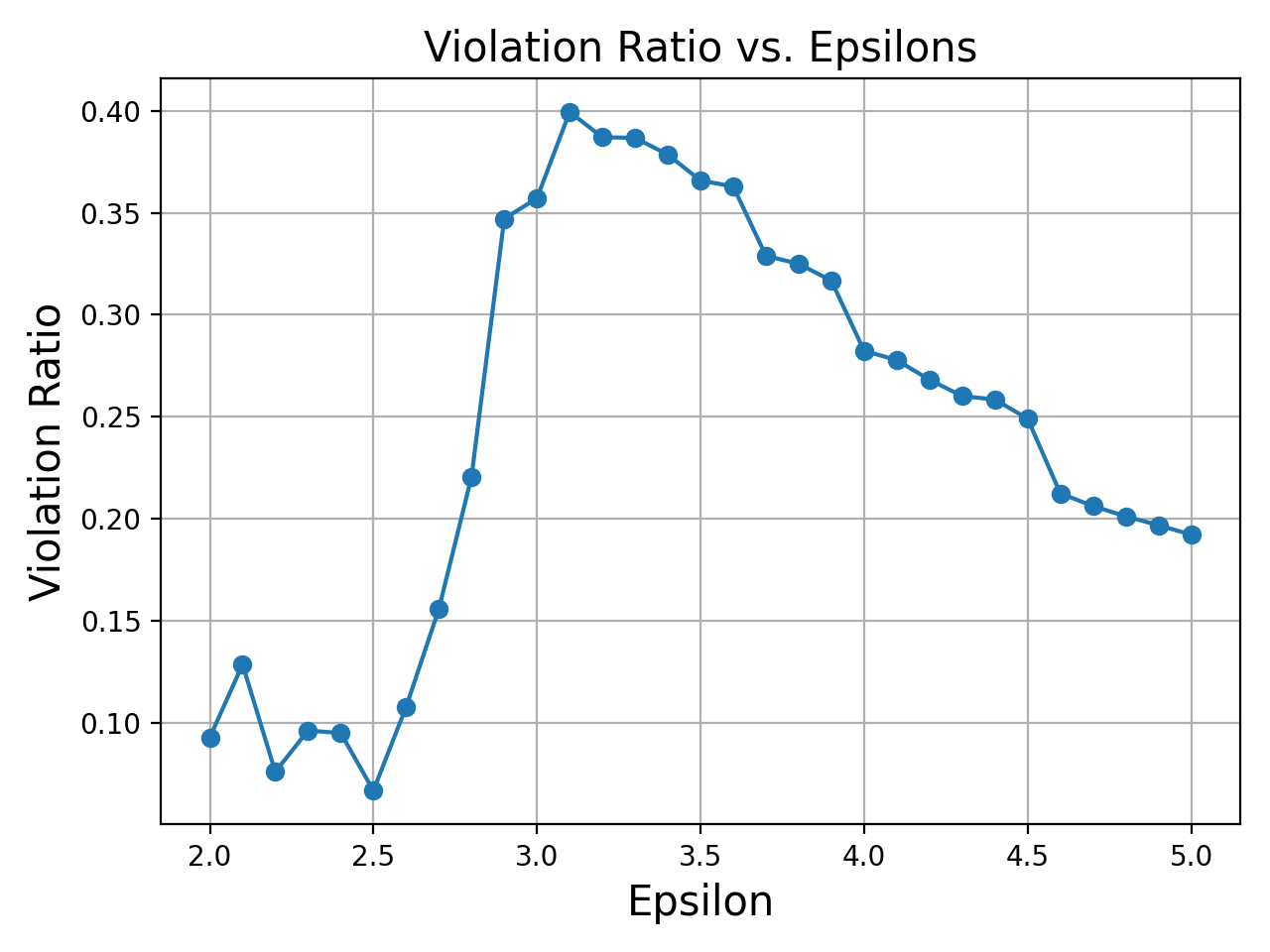}
    \caption{IMDB.}
  \end{subfigure}
  \hfill
  \begin{subfigure}[t]{0.32\textwidth}
    \centering
    \includegraphics[width=\textwidth, height=0.18\textheight]{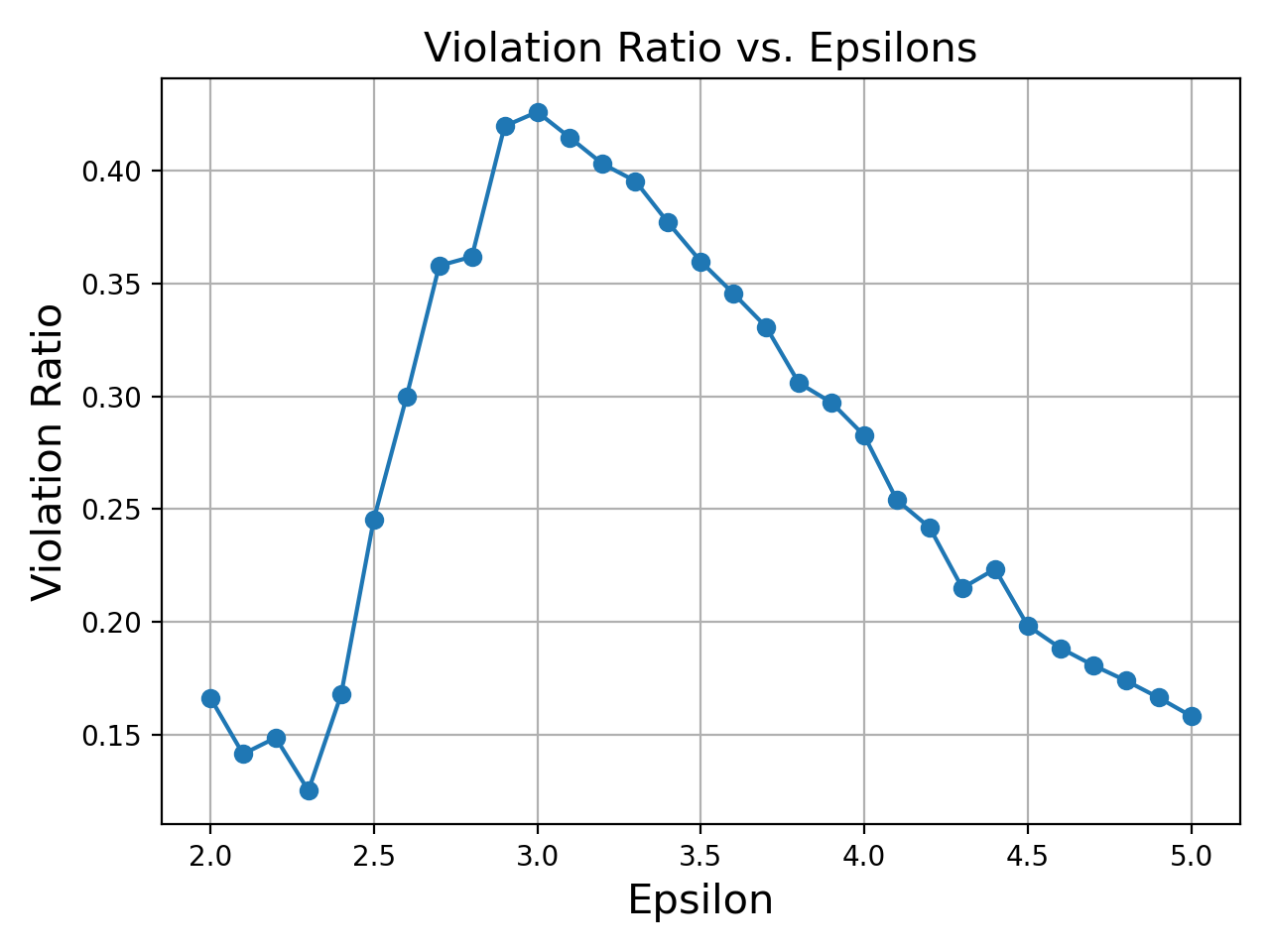}
    \caption{Amazon.}
\end{subfigure}
\vspace{-0.00in}
\caption{\small mPL violation ratio given different $\epsilon$ (applying RNN as the adversarial model).}
\label{fig:eps_violation}
\end{minipage}
\vspace{-0.05in}
\end{figure*}

\subsection{Effect of Attacker Training Data Size}
\label{subsec:trainingsize}
To assess how many supervised pairs the learned adversary requires, we run a learning-curve experiment on \textsc{AG News}.
We subsample the adversary's training set to fractions $r\in(0,1]$ of the original size, retrain the attacker for each $r$, and report (i) attack accuracy, measured by the average cosine similarity between reconstructed and ground-truth embeddings, and (ii) the fraction of $(x_i,x_j,y)$ triples that violate the mPL threshold. Figure~\ref{fig:learning-curve} shows that performance saturates quickly: using only $\approx 60\%$ of the supervised training pairs already achieves nearly the same cosine accuracy and mPL violation ratio as training on the full dataset.

We also observe that the estimated violation ratio can \emph{slightly increase} with more training data. Since mPL is defined via the deviation of posterior odds $\frac{q_\theta(x_i \mid y)}{q_\theta(x_j \mid y)}$ from prior odds $\frac{p(x_i)}{p(x_j)}$, better-trained attackers can extract more information from the perturbed releases and produce sharper posteriors, which may expose additional violations.

\begin{figure*}[t]
\centering
\begin{minipage}{1.00\textwidth}
  \begin{subfigure}[t]{0.32\textwidth}
    \centering
    \includegraphics[width=\textwidth, height=0.23\textheight]{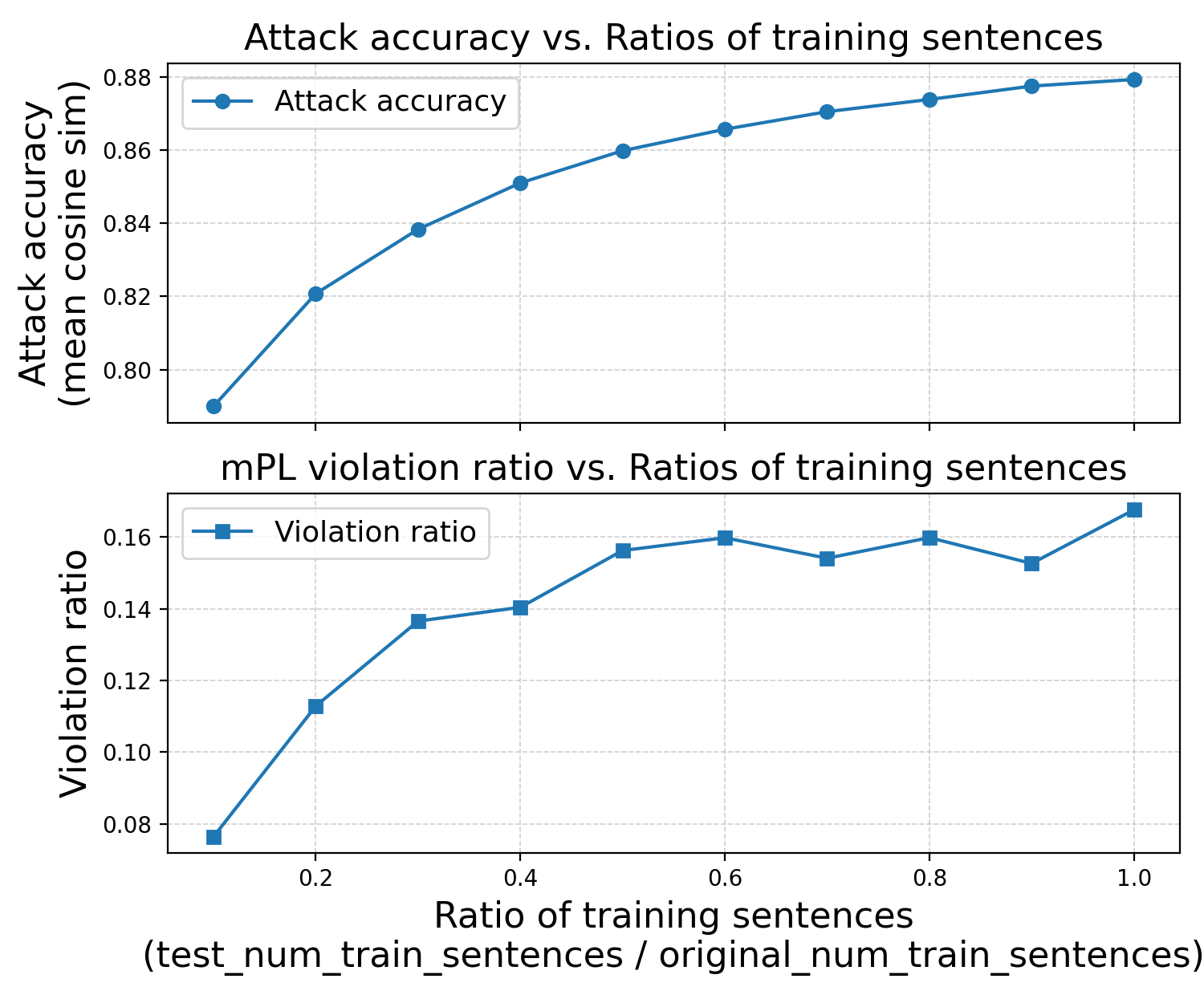}
    \caption{AG News.}
    \label{fig:mPL:rnn}
  \end{subfigure}
  \hfill
  \begin{subfigure}[t]{0.32\textwidth}
    \centering
    \includegraphics[width=\textwidth, height=0.23\textheight]{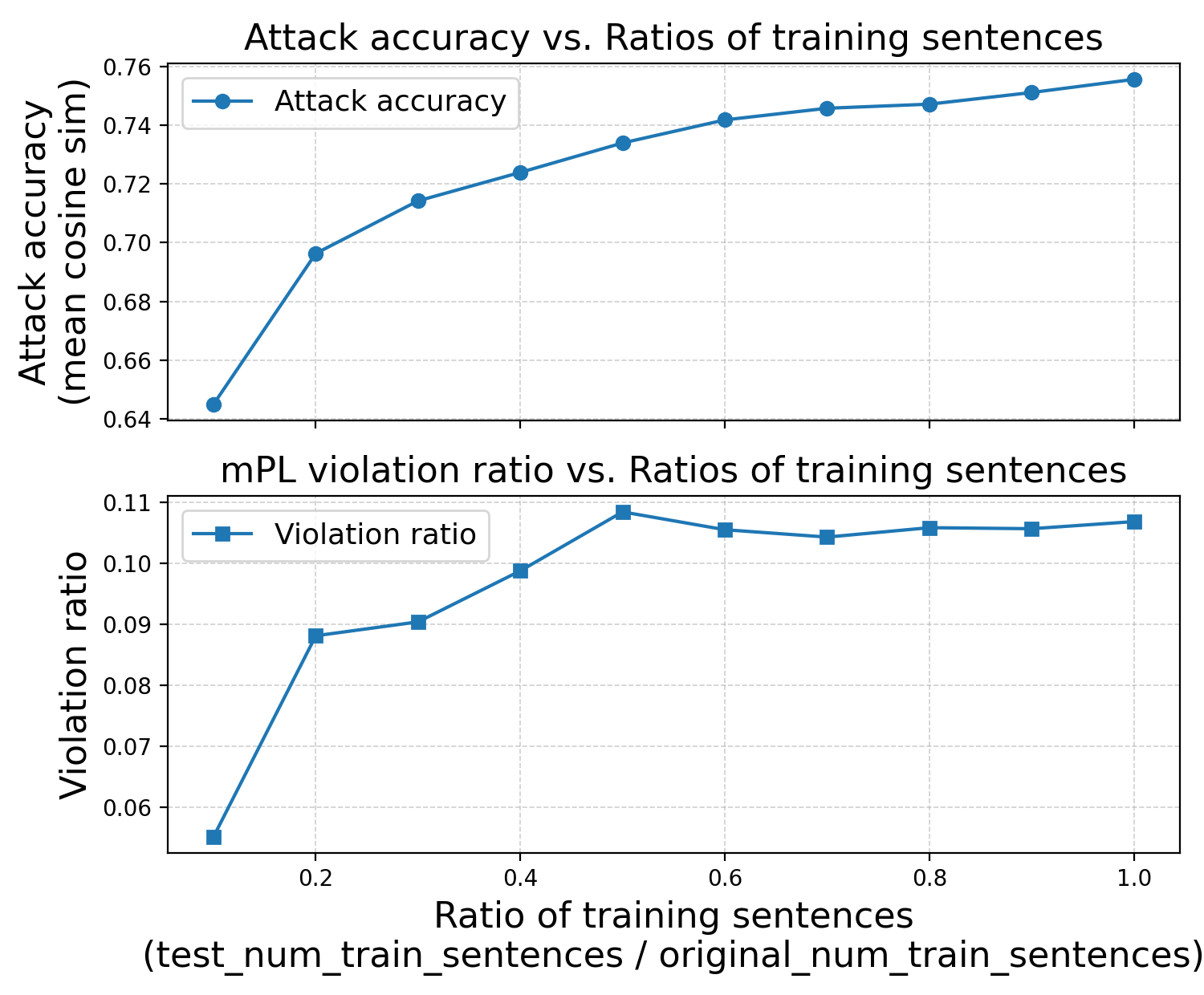}
    \caption{IMDB.}
    \label{fig:mPL:lstm}
  \end{subfigure}
  \hfill
  \begin{subfigure}[t]{0.32\textwidth}
    \centering
    \includegraphics[width=\textwidth, height=0.23\textheight]{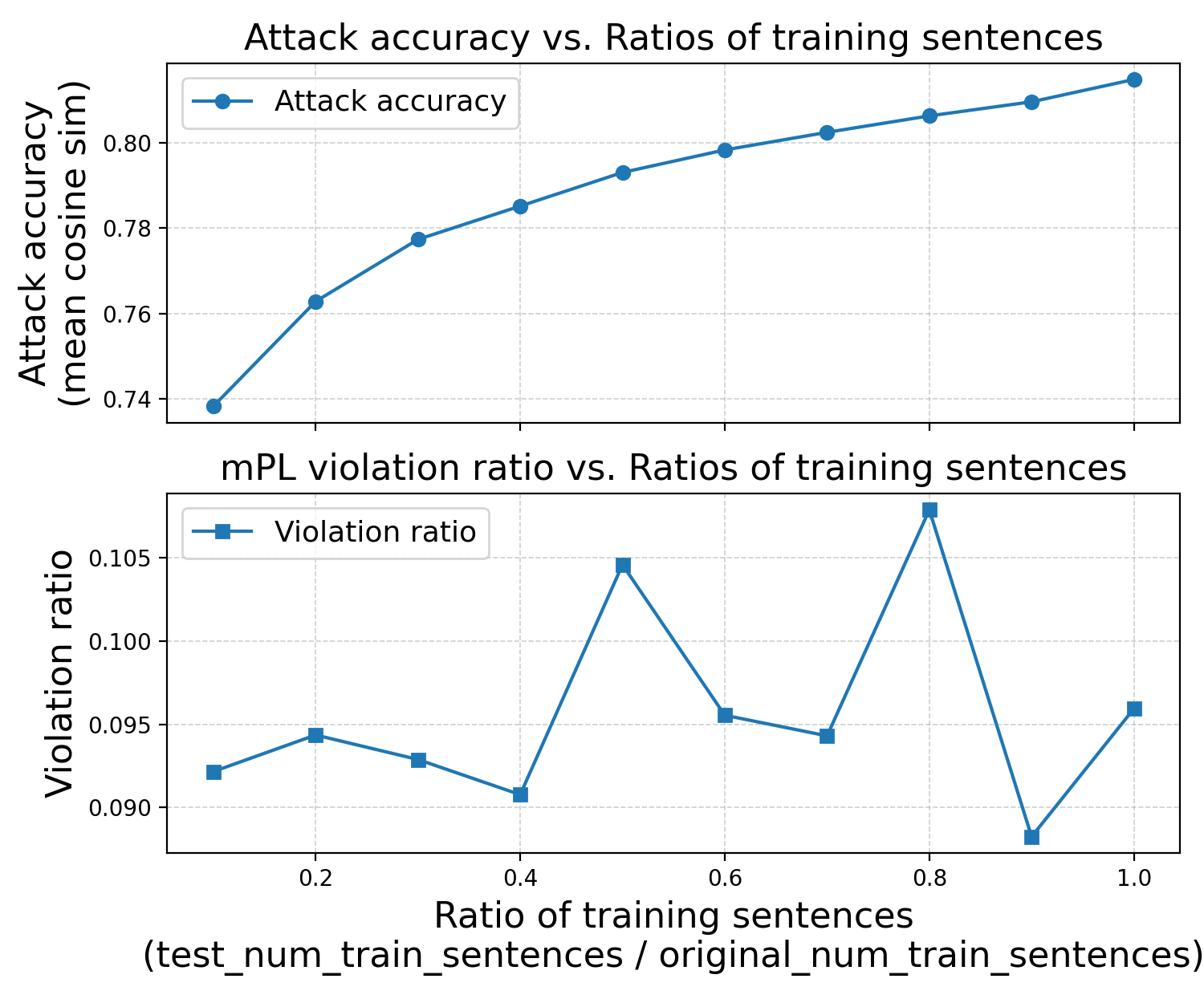}
    \caption{Amazon.}
    \label{fig:mPL:transformer}
\end{subfigure}
\vspace{-0.00in}
\caption{\small \textbf{Effect of attacker training data size.}
    Top: attack accuracy (cosine similarity) as a function of the normalized number of training sentences.
    Bottom: mPL violation ratio as a function of the normalized number of training sentences.}
\label{fig:learning-curve}
\end{minipage}
\vspace{-0.05in}
\end{figure*}

\subsection{Attacker Knowledge of the Embedding Model.}

In our experiment in Section \ref{sec:case}, we adopt a strong, white-box attacker that knows the victim’s embedding model; this yields conservative (adversary-favorable) estimates of joint leakage. This assumption is realistic when the encoder is public (e.g., open models or advertised API backends), and even when it is not, an attacker can often train or reuse a surrogate encoder with transferable representations.

Notably, our auditing framework does not require the attacker to know the defender’s exact embedding model. If the attacker instead uses a mismatched encoder, inference is typically weaker, yielding smaller posterior leakage and fewer mPL/PBmPL violations; thus, results under a matched (white-box) encoder can be interpreted as conservative upper bounds for less-informed adversaries. Table~\ref{tab:matched_mismatched_leakage} quantifies this effect by comparing matched versus mismatched embeddings across RNN/LSTM/Transformer attackers (mean $\pm$ std) and metric settings. Overall, mismatched embeddings reduce posterior leakage, with the largest drop observed for RNN/LSTM, while Transformer results are closer under match/mismatch.


\begin{table}[t]
\caption{Posterior leakage under matched vs.\ mismatched embedding models (mean $\pm$ std).}
\label{tab:matched_mismatched_leakage}
\centering
\footnotesize
\begin{tabular}{lccc}
\toprule
\multicolumn{1}{c}{\textbf{RNN}} & $\epsilon =2.40$ & $\epsilon =2.50$ & $\epsilon =2.60$ \\
\midrule
matched    & $1.2848\pm0.0725$ & $1.4344\pm0.0362$ & $1.6498\pm0.0698$ \\
mismatched & $1.3656\pm0.0553$ & $1.4085\pm0.0604$ & $1.4602\pm0.1237$ \\
\midrule
\multicolumn{1}{c}{\textbf{LSTM}} & $\epsilon =2.40$ & $\epsilon =2.50$ & $\epsilon =2.60$ \\
\midrule
matched    & $1.2766\pm0.0866$ & $1.4079\pm0.0829$ & $1.6092\pm0.0808$ \\
mismatched & $1.0473\pm0.1381$ & $1.1160\pm0.0851$ & $1.0761\pm0.1669$ \\
\midrule
\multicolumn{1}{c}{\textbf{Transformer}} & $\epsilon =2.40$ & $\epsilon =2.50$ & $\epsilon =2.60$ \\
\midrule
matched    & $1.4110\pm0.0535$ & $1.4646\pm0.0631$ & $1.5826\pm0.0736$ \\
mismatched & $1.2272\pm0.0522$ & $1.2620\pm0.0545$ & $1.3028\pm0.0596$ \\
\bottomrule
\end{tabular}
\end{table}

\DEL{
\begin{table}[t]
\caption{AG News: posterior-leakage violation ratio (\%) under an RNN adversary for a wider $\epsilon$ range.}
\label{tab:agnews_rnn_wide}
\centering
\footnotesize
\begin{tabular}{lccccc}
\toprule
$\epsilon~(\mathrm{km}^{-1})$ & 0.60 & 0.70 & 0.80 & 0.90 & 1.00 \\
\midrule
EM (mDP)  & $0.52\pm0.26$ & $0.25\pm0.20$ & $0.14\pm0.14$ & $0.06\pm0.04$ & $0.03\pm0.03$ \\
AmPL-U    & $0.51\pm0.32$ & $0.24\pm0.22$ & $0.15\pm0.15$ & $0.06\pm0.04$ & $0.04\pm0.03$ \\
AmPL-1    & $0.41\pm0.21$ & $0.11\pm0.06$ & $0.02\pm0.02$ & $0.00\pm0.00$ & $0.00\pm0.02$ \\
AmPL      & $0.35\pm0.21$ & $0.14\pm0.11$ & $0.08\pm0.06$ & $0.04\pm0.02$ & $0.02\pm0.04$ \\
\bottomrule
\end{tabular}
\end{table}

\begin{table}[t]
\caption{AG News: posterior-leakage violation ratio (\%) under an LSTM adversary for a wider $\epsilon$ range.}
\label{tab:agnews_lstm_wide}
\centering
\footnotesize
\begin{tabular}{lccccc}
\toprule
$\epsilon~(\mathrm{km}^{-1})$ & 0.60 & 0.70 & 0.80 & 0.90 & 1.00 \\
\midrule
EM (mDP)  & $0.21\pm0.13$ & $0.07\pm0.07$ & $0.02\pm0.03$ & $0.01\pm0.01$ & $0.00\pm0.00$ \\
AmPL-U    & $0.21\pm0.13$ & $0.07\pm0.08$ & $0.02\pm0.03$ & $0.01\pm0.01$ & $0.00\pm0.00$ \\
AmPL-1    & $0.14\pm0.16$ & $0.01\pm0.03$ & $0.00\pm0.01$ & $0.00\pm0.00$ & $0.00\pm0.00$ \\
AmPL      & $0.14\pm0.06$ & $0.05\pm0.06$ & $0.02\pm0.03$ & $0.01\pm0.01$ & $0.00\pm0.00$ \\
\bottomrule
\end{tabular}
\end{table}

\begin{table}[t]
\caption{AG News: posterior-leakage violation ratio (\%) under a Transformer adversary for a wider $\epsilon$ range.}
\label{tab:agnews_tr_wide}
\centering
\footnotesize
\begin{tabular}{lccccc}
\toprule
$\epsilon~(\mathrm{km}^{-1})$ & 0.60 & 0.70 & 0.80 & 0.90 & 1.00 \\
\midrule
EM (mDP)  & $0.69\pm0.33$ & $0.31\pm0.16$ & $0.11\pm0.08$ & $0.06\pm0.05$ & $0.04\pm0.07$ \\
AmPL-U    & $0.68\pm0.32$ & $0.32\pm0.19$ & $0.12\pm0.08$ & $0.06\pm0.06$ & $0.04\pm0.07$ \\
AmPL-1    & $0.38\pm0.36$ & $0.07\pm0.10$ & $0.00\pm0.00$ & $0.00\pm0.00$ & $0.00\pm0.00$ \\
AmPL      & $0.43\pm0.11$ & $0.18\pm0.07$ & $0.06\pm0.04$ & $0.03\pm0.03$ & $0.02\pm0.02$ \\
\bottomrule
\end{tabular}
\end{table}}


\subsection{Generality Beyond Text.}
\label{subsec:exp:beyondtext}
Although our experiments focus on textual embeddings, both the joint-leakage notion (mPL) and the AmPL repair framework are modality-agnostic by construction. mPL assumes only (i) a metric 
$d$ over the secret space and (ii) a task-specific utility loss; AmPL additionally requires (iii) a representation space in which the mechanism operates and (iv) a learned attacker that estimates posteriors from perturbed outputs. None of these components are specific to text. Similar joint-leakage risks arise whenever multiple correlated releases about the same underlying secret are produced in other modalities, for example, multiple views of the same image (vision), longitudinal records for the same entity (tabular or time-series), or multiple recordings of the same speaker/event (audio). In such settings, an adversary can train a multi-input model to aggregate correlated observations and potentially violate per-release privacy guarantees.

To further demonstrate generality beyond text, we evaluate mPL violation ratios on a tabular dataset: the Breast Cancer Wisconsin (Diagnostic) dataset (569 records, 30 continuous features). We z-score standardize features prior to perturbation and treat each record as a single-token example, using the 30-dimensional feature vector as the “token embedding.” Table~\ref{tab:wdbc_violation} compares PBmPL violation ratios for the EM baseline and AmPL under RNN/LSTM/Transformer attackers across privacy budgets. Across models and budgets, AmPL consistently reduces posterior-leakage violations relative to EM, with an average reduction of 58.1\%.

\begin{table}[t]
\caption{Breast Cancer Wisconsin (Diagnostic): posterior-leakage violation ratio (\%) for EM and AmPL under different learned attackers (mean $\pm$ std).}
\label{tab:wdbc_violation}
\centering
\footnotesize
\begin{tabular}{lccc}
\toprule
\multicolumn{1}{c}{\textbf{RNN}} & $\epsilon=0.10$ & $\epsilon=0.20$ & $\epsilon=0.30$ \\
\midrule
EM (mDP) & $35.40\pm23.42$ & $19.56\pm20.59$ & $14.23\pm12.45$ \\
AmPL     & $13.42\pm19.44$ & $3.66\pm4.34$   & $0.98\pm2.44$ \\
\midrule
\multicolumn{1}{c}{\textbf{LSTM}} & $\epsilon=0.10$ & $\epsilon=0.20$ & $\epsilon=0.30$ \\
\midrule
EM (mDP) & $14.06\pm27.64$ & $14.18\pm23.37$ & $12.67\pm13.70$ \\
AmPL     & $4.74\pm4.37$   & $3.95\pm6.27$   & $2.71\pm3.58$ \\
\midrule
\multicolumn{1}{c}{\textbf{Transformer}} & $\epsilon=0.10$ & $\epsilon=0.20$ & $\epsilon=0.30$ \\
\midrule
EM (mDP) & $48.49\pm23.78$ & $29.88\pm25.96$ & $19.49\pm12.88$ \\
AmPL     & $45.25\pm18.27$ & $27.22\pm29.23$ & $9.03\pm14.15$ \\
\bottomrule
\end{tabular}
\end{table}

\DEL{
\section{Related Work}
\label{app:related}

Since the introduction of mDP by Chatzikokolakis et al.~\cite{Chatzikokolakis-PETS2013}, substantial effort has been devoted to designing perturbation mechanisms that satisfy mDP while preserving downstream utility. Existing methods can be broadly grouped into two families: \emph{predefined noise distribution mechanisms}, which sample outputs from a fixed distribution calibrated to the metric distance, and \emph{optimization-based mechanisms}, which formulate mechanism design as an explicit utility-minimization problem under mDP constraints.

\paragraph{Predefined noise distribution mechanisms.} The Laplace mechanism is a classical DP mechanism~\cite{Dwork-TC2006} that was adapted to mDP by Chatzikokolakis et al.~\cite{Chatzikokolakis-PETS2013}. Instead of scaling noise to a global sensitivity, the mDP variant calibrates the noise scale to the metric distance $d_{\mathcal{X}}$ between records, typically proportional to $d_{\mathcal{X}}/\epsilon$, so that outputs for nearby records remain statistically indistinguishable. This adaptation underlies Geo-Indistinguishability (Geo-Ind)~\cite{Andres-CCS2013} for location privacy: early work focused on sporadic location releases~\cite{Chatzikokolakis-PETS2013, Andres-CCS2013}, while subsequent extensions studied continuous tracking, real-time trajectories, and personalized budgets~\cite{chatzikokolakis2014predictive, hua2017geo, yu2023privacy, min2023personalized}. Despite its simplicity and efficiency~\cite{Carvalho2021TEMHU}, additive Laplace noise is poorly suited to structured domains (e.g., graphs or embeddings), where it may severely distort structure or ignore non-uniform density. For word embeddings, direct perturbation can produce invalid vectors, so mechanisms often project back to the nearest neighbor in the embedding space~\cite{fernandes2018author,feyisetan2019leveraging}, implicitly assuming the embedding space is non-sensitive and potentially introducing additional privacy risks during neighbor retrieval.

Exponential mechanisms (EM)~\cite{Carvalho2021TEMHU, ImolaUAI2022} address some of these limitations by selecting outputs from discrete candidate sets according to a utility-driven quality score, thereby bypassing continuous additive noise. EM-based approaches have proved particularly effective in location privacy (trajectory protection~\cite{chatzikokolakis2015constructing, Yu-NDSS2017}; spatial crowdsourcing~\cite{gursoy2019secure, niu2020eclipse, ren2022distpreserv, dong2019preserving}), textual data (word substitution~\cite{wang2017local, Feyisetan-ICWSD2020}; embeddings~\cite{Carvalho2021TEMHU, meisenbacher20241}), and graph data (edge-preserving graph perturbation~\cite{raskhodnikova2016lipschitz}; private query responses~\cite{fioretto2019privacy, kamalaruban2020not}). While EM offers greater flexibility and often better utility than the Laplace mechanism, it typically bases selection on the perturbation magnitude $d_{x,y}$ alone, which may not fully capture directional variations in utility loss across the output space. This limitation has motivated a line of \emph{optimization-based} mechanisms that explicitly model both the magnitude and direction of utility changes for each candidate perturbation.

\vspace{-0.1in}
\paragraph{Optimization-based mechanisms.} Optimization-based (often LP-based) mechanisms minimize expected utility loss subject to mDP constraints. 
Following Bordenabe et al.~\cite{Bordenabe-CCS2014}, many works model mechanisms as stochastic matrices and solve an LP whose constraints bound posterior changes for nearby inputs. 
However, these formulations scale poorly (typically $O(|\mathcal{X}|^2)$ variables/constraints), so most implementations are limited to small domains (often $|\mathcal{X}|\le 100$)~\cite{Bordenabe-CCS2014, Wang-ICDM2016, Yu-NDSS2017, Wang-WWW2017}. 
Spanner-based distance approximations help but do not eliminate the LP bottleneck (e.g., $\sim 1{,}800$ seconds at $|\mathcal{X}|=400$)~\cite{ImolaUAI2022}. To scale further, prior work uses hierarchical structures~\cite{AhujaEDBT2019}, decomposition methods (Dantzig--Wolfe~\cite{Qiu-CIKM2020}, Benders~\cite{qiu2024enhancing}), and hybrid LP--probabilistic designs such as weighted-EM plus selective LP optimization~\cite{ImolaUAI2022}. 
Even so, state-of-the-art methods typically handle at most $|\mathcal{X}|\le 1{,}000$. Qiu et al.~\cite{Qiu-PETS2025} further exploit locality by restricting computation to neighborhoods, reaching $|\mathcal{X}|=1{,}600$, but this can weaken strict global mDP guarantees: neighborhood-specific parameters may introduce subtle leakage, which is undesirable when strong global compliance is required.

\vspace{-0.1in}
\paragraph{Connection to posterior leakage.}
The mechanisms above are analyzed primarily through \emph{a priori} mDP guarantees: they ensure that, for any pair of nearby inputs, the likelihood ratio of outputs is bounded by $e^\epsilon$. Posterior leakage–style measures instead evaluate privacy from the perspective of an explicit attacker, quantifying how much an adversary's posterior belief about a secret (or a set of correlated secrets) can be improved after observing the released outputs. In particular, posterior leakage captures (i) the \emph{realized} inference power of concrete attack models, (ii) composition across multiple, potentially correlated releases, and (iii) violations arising from approximate or localized implementations (e.g., spanner-based distances or neighborhood-restricted optimization) that still claim nominal mDP parameters. Our work is complementary to prior mechanism-design efforts: rather than proposing yet another LP or EM variant, we study how these mechanisms behave under posterior-leakage auditing and show that, despite satisfying or approximating mDP constraints on paper, they can induce significantly higher adversarial inference than their nominal $\epsilon$ would suggest.}


\end{document}